\documentclass{article} 
\usepackage{iclr2027_conference,times}

\usepackage[utf8]{inputenc} 
\usepackage[T1]{fontenc}    
\usepackage{url}            
\usepackage[ruled,vlined,linesnumbered]{algorithm2e}
\usepackage{amsmath}
\usepackage{amssymb}
\usepackage{url}
\usepackage{array}
\usepackage{microtype}
\usepackage{graphicx}
\usepackage{lineno}
\usepackage{subfigure}
\usepackage{color, colortbl}
\usepackage{xcolor}
\usepackage{colortbl}
\definecolor{mycitecolor}{HTML}{3498DC}
\definecolor{mylinkcolor}{HTML}{E74D3B}
\definecolor{myurlcolor}{HTML}{980000}
\definecolor{mydarkgreen}{HTML}{6a994e}
\definecolor{myorange}{HTML}{E7730D}
\definecolor{myblue}{HTML}{4594c1}
\definecolor{myyellow}{HTML}{FFDF00}
\definecolor{mypurple}{HTML}{C025CB}
\definecolor{RQframe}{HTML}{CFAB8D}
\definecolor{URAGColor}{HTML}{DE1A58}

\definecolor{sparse1}{HTML}{019E4F}
\definecolor{sparse2}{HTML}{E94127}
\definecolor{sparse3}{HTML}{017DC7}
\definecolor{dense1}{HTML}{3B6C73}
\definecolor{dense2}{HTML}{C1565E}
\definecolor{dense3}{HTML}{4682B4}

\definecolor{Appendixcolor}{HTML}{FFA500}  
\definecolor{Fgcolor}{HTML}{0000FF}
\definecolor{Sectioncolor}{HTML}{FFA500}
\definecolor{Subsectioncolor}{HTML}{FFA500}
\definecolor{Equationcolor}{HTML}{FFA500}
\definecolor{Definitioncolor}{HTML}{ea00ff}
\definecolor{Tablecolor}{HTML}{0000FF}
\definecolor{Observationcolor}{HTML}{FFA500}
\definecolor{Takeawaycolor}{HTML}{001BB7}
\definecolor{Interpretationcolor}{HTML}{ffb37c}
\definecolor{Algcolor}{HTML}{FFA500}
\definecolor{Promptcolor}{HTML}{FF6C0C}
\definecolor{RQcolor}{HTML}{001BB7}
\definecolor{Theoremcolor}{HTML}{ea00ff}
\definecolor{Lemmacolor}{HTML}{ea00ff}
\definecolor{Propositioncolor}{HTML}{de07fa}
\definecolor{Corollarycolor}{HTML}{FFA500}
\definecolor{Inequalitycolor}{HTML}{ff9400}
\definecolor{Assumptioncolor}{HTML}{E76F2E}

\newcommand{\Appendix}{\textcolor{Appendixcolor}{Appendix}}
\newcommand{\Figure}{\textcolor{Fgcolor}{Figure}}
\newcommand{\Section}{\textcolor{Sectioncolor}{Section}}

\newcommand{\Equation}{\textcolor{Equationcolor}{Equation}}
\newcommand{\Definition}{\textcolor{Definitioncolor}{Definition}}
\newcommand{\Table}{\textcolor{Tablecolor}{Table}}

\newcommand{\Theorem}{\textcolor{Theoremcolor}{Theorem}}
\newcommand{\Lemma}{\textcolor{Lemmacolor}{Lemma}}

\newcommand{\Assumption}{\textcolor{Assumptioncolor}{Assumption}}
\newcommand{\Corollary}{\textcolor{Corollarycolor}{Corollary}}

\usepackage[normalem]{ulem}
\usepackage{enumitem}
\usepackage{titletoc}

\newcommand{\Appendixref}[1]{\Appendix~{\hypersetup{linkcolor=Appendixcolor}\ref{#1}}}
\newcommand{\Figureref}[1]{\Figure~{\hypersetup{linkcolor=Fgcolor}\ref{#1}}}
\newcommand{\Sectionref}[1]{\Section~{\hypersetup{linkcolor=Sectioncolor}\ref{#1}}}

\newcommand{\Equationref}[1]{\Equation~{\hypersetup{linkcolor=Equationcolor}\ref{#1}}}
\newcommand{\Definitionref}[1]{\Definition~{\hypersetup{linkcolor=Definitioncolor}\ref{#1}}}
\newcommand{\Theoremref}[1]{\Theorem~{\hypersetup{linkcolor=Theoremcolor}\ref{#1}}}
\newcommand{\Lemmaref}[1]{\Lemma~{\hypersetup{linkcolor=Lemmacolor}\ref{#1}}}

\newcommand{\Corollaryref}[1]{\Corollary~{\hypersetup{linkcolor=Corollarycolor}\ref{#1}}}
\DeclareRobustCommand{\Tableref}[1]{%
    \Table~\hyperref[#1]{\textcolor{Tablecolor}{\ref*{#1}}}%
}

\newcommand{\Assumptionref}[1]{\Assumption~{\hypersetup{linkcolor=Assumptioncolor}\ref{#1}}}

\usepackage[most]{tcolorbox}

\newcounter{rq}

\newcounter{takeaway}
\renewcommand{\thetakeaway}{\arabic{takeaway}}

\newenvironment{takeaway}[1][]%
{%
    \refstepcounter{takeaway}
    \tcolorbox[
        enhanced,
        colback=white,
        colframe=white,
        leftrule=0.4mm,
        rightrule=0.4mm,
        toprule=0.4mm,
        bottomrule=0.4mm,
        arc=0mm,
        left=0pt,
        right=0pt,
        top=2pt,
        bottom=2pt,
        breakable,
        borderline north={0.4mm}{0pt}{Takeawaycolor},
        borderline south={0.4mm}{0pt}{Takeawaycolor}
    ]
    \textbf{\textcolor{Takeawaycolor}{\textit{Takeaway~\thetakeaway}}}
    \ifx\relax#1\relax\else~(\textit{#1}).\fi%
}
{%
    \endtcolorbox
}

\newcounter{interpretation}

{%
    \refstepcounter{interpretation}
    \tcolorbox[
        enhanced,
        colback=white,
        colframe=white,
        leftrule=0.4mm,
        rightrule=0.4mm,
        toprule=0.4mm,
        bottomrule=0.4mm,
        arc=0mm,
        left=0pt,
        right=0pt,
        top=2pt,
        bottom=2pt,
        breakable,
        borderline north={0.4mm}{0pt}{Interpretationcolor},
        borderline south={0.4mm}{0pt}{Interpretationcolor}
    ]
    \textbf{\textcolor{Interpretationcolor}{\textit{Interpretation}}}
    \ifx\relax#1\relax\else~(\textit{#1}).\fi%
}
{%
    \endtcolorbox
}

\newcounter{observation}
\renewcommand{\theobservation}{\arabic{observation}}

{%
    \refstepcounter{observation}
    \tcolorbox[
        enhanced,
        colback=white,
        colframe=white,
        leftrule=0.4mm,
        rightrule=0.4mm,
        toprule=0.4mm,
        bottomrule=0.4mm,
        arc=0mm,
        left=0pt,
        right=0pt,
        top=2pt,
        bottom=2pt,
        breakable,
        borderline north={0.4mm}{0pt}{mydarkgreen!80!black},
        borderline south={0.4mm}{0pt}{mydarkgreen!80!black}
    ]
    \textbf{\textcolor{mydarkgreen!80!black}{\textit{Observation~\theobservation}}}
    \ifx\relax#1\relax\else~(\textit{#1}).\fi%
}
{%
    \endtcolorbox
}

\newcounter{promptbox}

\usepackage{pifont}

\usepackage{color, colortbl}
\definecolor{Gray}{gray}{0.93}
\definecolor{Orange}{rgb}{1,0.5,0}
\definecolor{DGray}{gray}{0.83}
\definecolor{LightCyan}{rgb}{0.88,1,1}
\definecolor{pastelgreen}{HTML}{6fc276}
\definecolor{pastelred}{HTML}{ff746c}
\definecolor{pastelpurple}{HTML}{c9a0dc}
\definecolor{pastelorange}{HTML}{ff964f}

\usepackage[T1]{fontenc}

\usepackage{microtype}
\usepackage{graphicx}
\usepackage{subfigure}
\usepackage{booktabs}

\usepackage{wrapfig}
\usepackage{pifont}

\usepackage{blindtext}
\usepackage{lipsum}

\usepackage{multirow}
\usepackage{graphicx}
\usepackage{listings}

\usepackage{bbm}

\usepackage [english]{babel}
\usepackage [autostyle, english = american]{csquotes}
\usepackage{amsmath}
\usepackage{amssymb}
\usepackage{mathtools}
\usepackage{amsthm}

\definecolor{mycitecolor}{HTML}{3498DC}
\definecolor{mylinkcolor}{HTML}{E74D3B}
\definecolor{myurlcolor}{HTML}{980000}
\definecolor{mydarkgreen}{HTML}{6a994e}
\definecolor{myorange}{HTML}{E7730D}
\definecolor{myblue}{HTML}{4594c1}
\usepackage[colorlinks=true,linkcolor=mylinkcolor,citecolor=mycitecolor,urlcolor=myurlcolor]{hyperref}

\usepackage[capitalize,noabbrev]{cleveref}
\usepackage{fontawesome}

\theoremstyle{plain}
\newtheorem{theorem}{Theorem}[section]

\newtheorem{lemma}[theorem]{Lemma}
\newtheorem{corollary}[theorem]{Corollary}
\theoremstyle{definition}
\newtheorem{definition}[theorem]{Definition}
\newtheorem{assumption}[theorem]{Assumption}
\theoremstyle{remark}

\usepackage[textsize=tiny]{todonotes}
\usepackage{thmtools, thm-restate}

\usepackage{pifont}
\usepackage{url}
\usepackage[most]{tcolorbox}
\usepackage{lipsum}
\usepackage{wrapfig}
\usepackage{booktabs}
\usepackage{multirow,mathtools } 

\usepackage{adjustbox}
\MakeOuterQuote{"}

\usepackage{amsmath,amsfonts,bm}

\def\eqref#1{(\ref{#1})}

\def\1{\bm{1}}

\DeclareMathAlphabet{\mathsfit}{\encodingdefault}{\sfdefault}{m}{sl}
\SetMathAlphabet{\mathsfit}{bold}{\encodingdefault}{\sfdefault}{bx}{n}

\usepackage{bm}
\usepackage{dsfont}
\usepackage{mathtools}
\usepackage{nccmath}
\usepackage{amssymb}
\usepackage{multirow}
\usepackage{booktabs}
\usepackage{varwidth}
\usepackage{pifont}
\usepackage{makecell}
\usepackage{wrapfig}
\usepackage{varwidth}
\usepackage{makecell}
\usepackage{enumitem}
\usepackage{adjustbox}
\usepackage{graphicx,calc}
\usepackage{amsfonts,amsthm,bm}
\usepackage[flushleft]{threeparttable}
\usepackage{pifont}
\usepackage{tabularx}
\usepackage{arydshln}
\usepackage{float}
\usepackage{longtable}
\usepackage{ragged2e}
\usepackage{pgfmath}
\usepackage{siunitx}
\usepackage{listings}
\usepackage{rotating}
\usepackage[table]{xcolor}   
\usepackage{collcell}        
\usepackage{etoc}

\title{Selecting Long-Horizon Trajectories for Reliable and Efficient Terminal-Agent Training}

\author{Cuong Dang \& Hoang Anh Just \& Ruoxi Jia \\
Department of Electrical and Computer Engineering \\
Virginia Tech \\
Blacksburg, VA, USA \\
\texttt{\{cuongdc,just,ruoxijia\}@vt.edu}
}

\iclrfinalcopy 
\begin{document}
\etocdepthtag.toc{mtbody}

\maketitle

\begin{abstract}
Terminal agents are commonly trained by imitating long teacher trajectories, yet how much of each trajectory to supervise remains unexplored. We study the \emph{supervision horizon}, the number of trajectory tokens retained for training, and show that it is a key design axis for reliability and cost. Reliability improves with longer horizons but saturates: on Terminal-Bench, a 12K-token horizon solves more tasks than 16K ($29\pm0.7$ vs.\ $26\pm0.8$) while requiring 30\% less training time. The horizon also shapes agent behavior: short horizons cause premature termination, intermediate horizons yield productive error recovery, and long horizons induce over-persistence. We analyze this saturation through a bias--complexity bound, in which longer supervision reduces temporal supervision bias but increases finite-sample estimation error from more heterogeneous late-stage histories. Guided by this analysis, we propose \emph{selective long-horizon refinement}, which first trains on short prefixes and then refines only on continuations that are most likely under the warm-start model. It consistently outperforms full long-horizon training. At 16K, it raises successful attempts from $110\pm2.7$ to $126\pm2.1$ and tasks solved in at least six of eight attempts from $9\pm0.7$ to $14\pm0.6$; with half of the long-horizon data, it still reaches $122\pm2.4$ while cutting training time by 23\%. The gains transfer across benchmarks, from $64\pm2.6$ to $73\pm2.1$ on Terminal-Bench v2.0 and from $137\pm2.7$ to $155\pm2.2$ on OpenThoughts-TBLite. For long-horizon supervision, selecting the right trajectories matters more than training on all of them.

\end{abstract}
\section{Introduction}

\begin{figure}[ht]
    \centering
    \includegraphics[width=.8\linewidth]{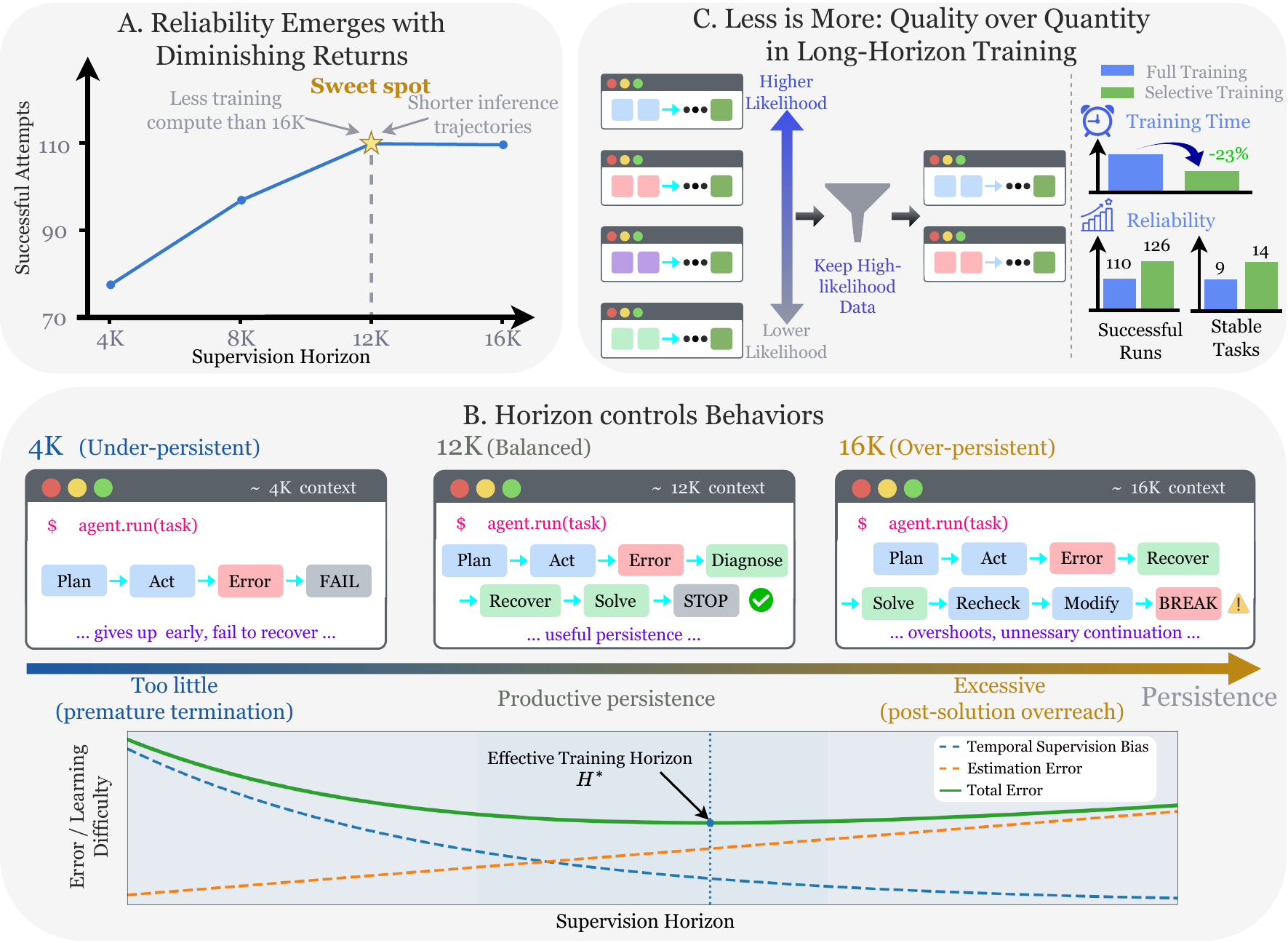}
    \caption{\textbf{Overview of our three main findings on effective training horizons for terminal agents.}\\ \textbf{(A) \textit{Reliability Improves with Diminishing Returns}:} increasing the training horizon substantially improves reliability, but the marginal gains diminish beyond an intermediate sweet spot, where 12K provides a favorable balance between successful attempts, training compute, and inference length.\\ \textbf{(B) \textit{Horizon Controls Behaviors}:} increasing the training horizon shifts agents from under-persistence at 4K, to productive persistence at 12K, and over-persistence at 16K; theoretically, this reflects a trade-off between decreasing temporal supervision bias and increasing estimation difficulty.\\ \textbf{(C) \textit{Less is More: Quality over Quantity in Long-Horizon Training}:} simply training on more long-horizon trajectories is not necessarily beneficial; training on a smaller set of appropriately selected long-horizon trajectories can achieve higher reliability than using the full long-horizon data, showing that data quality matters more than data quantity for exploiting long-horizon supervision.}
    \label{fig:teaser}
\end{figure}

Terminal-based coding agents such as OpenAI Codex~\citep{openai2025codex} and Anthropic Claude Code~\citep{anthropic2026claudeopus48} are becoming a common interface for software development, operating directly over terminals and repositories to execute commands, modify code, and iteratively solve tasks. Recent works have therefore begun to study how to train smaller, capable terminal agents efficiently~\citep{pi2026on,cheng2026terminal,gandhi2026endless,wu2026largescale}.

\definecolor{headerblue}{RGB}{235,242,250}
\definecolor{bestgreen}{RGB}{0,120,90}
\definecolor{rowgray}{RGB}{248,248,248}
\definecolor{sectionteal}{RGB}{20,110,110}
\definecolor{rulegray}{RGB}{100,100,100}
\definecolor{sectionslate}{RGB}{45,80,120}
\definecolor{headerdeep}{RGB}{218,231,246}   
\definecolor{gaingreen}{RGB}{225,243,235}    
\newcommand{\best}[1]{\textbf{\textcolor{bestgreen}{#1}}}
\newcommand{\gain}{\cellcolor{gaingreen}}
\newcommand{\topl}{\textbf{\textcolor{sectionteal}{Top}}}
\newcommand{\botl}{\textcolor{rulegray}{Bot.}}
\newcommand{\saving}[1]{\textcolor{bestgreen}{#1}}
\newcommand{\hz}[1]{\textbf{\textcolor{sectionslate}{#1}}}

\begin{wraptable}{r}{0.45\columnwidth}
\centering
\caption{\textbf{Training time and accuracy} across supervision horizons.}
\label{tab:training_horizon}
\footnotesize
\renewcommand{\arraystretch}{1.1}
\setlength{\tabcolsep}{5pt}
\begin{tabular}{@{}lcccc@{}}
\toprule
\rowcolor{headerblue}
\textbf{Cutoff} & \textbf{4K} & \textbf{8K} & \textbf{12K} & \textbf{16K} \\
\midrule
\rowcolor{rowgray}
Time (h) $\downarrow$ & \textcolor{bestgreen}{\textbf{10.17}} & 21.65 & 35.44 & 50.81 \\
Accuracy (\%) $\uparrow$ & 24 & 25 & \textcolor{bestgreen}{\textbf{29}} & 26 \\
\bottomrule
\end{tabular}
\end{wraptable}

A common way to train terminal agents is imitation learning from teacher-generated trajectories, where the student is supervised to reproduce the teacher’s actions conditioned on the preceding interaction history. These long-horizon terminal trajectories involve extended sequences of interdependent decisions in which later actions depend on earlier observations, modifications, errors, and intermediate outcomes. As trajectories unfold, the agent may need to recover from failed commands, revise hypotheses, verify intermediate results, or switch strategies based on accumulated interaction history. Training with a short horizon can therefore omit these later-stage behaviors and provide insufficient supervision for recovery, correction, verification, and strategy switching. In contrast, longer horizons expose the student to richer temporal dependencies and may improve persistence and long-range reasoning, but they also incur substantially higher training cost and introduce increasingly heterogeneous histories and more complex decision situations. As shown in \Tableref{tab:training_horizon}, increasing the cutoff from 8K to 16K more than \textit{doubles training time}, from 21.65 to 50.81 hours, while Terminal-Bench accuracy improves only marginally from 25 to 26. Moreover, the 12K model achieves the highest accuracy of 29 with 35.44 hours of training, suggesting that simply extending the supervision horizon does not yield proportional performance gains and may even exhibit diminishing returns. This motivates \textbf{two important design questions} that remains largely unresolved:

\begin{quote}
\centering
\emph{\textbf{Q1:} Is supervising the longer teacher trajectory always better?}
\end{quote}

\begin{quote}
\centering
\emph{\textbf{Q2:} Can selectively chosen long-horizon trajectories provide better and more efficient supervision than using all long-horizon data?}
\end{quote}

To answer these questions, we study the \emph{effective training horizon} for terminal agents: a horizon that is sufficiently long to capture useful temporal dependencies, yet not so long that additional supervision becomes inefficient or counterproductive. As summarized in \Figureref{fig:teaser}, \textbf{our study reveals three main findings.} \textbf{\ding{182}} \textit{First}, increasing the training horizon substantially improves reliability, but the gains diminish beyond an intermediate sweet spot, indicating that full-horizon supervision is not necessarily optimal. \textbf{\ding{183}} \textit{Second}, the training horizon systematically shapes the agent’s behavioral regime: short horizons induce under-persistence, intermediate horizons encourage productive persistence, while excessively long horizons lead to over-persistence, causing agents to continue acting unnecessarily and resulting in longer test-time inference trajectories. We further explain this intermediate optimum theoretically by decomposing the total error into two competing quantities: \emph{temporal supervision bias}, which decreases as longer horizons expose the model to richer later-stage behaviors, and \emph{finite-sample estimation error}, which increases as the learner must model a more complex set of long-horizon histories. The effective training horizon therefore emerges at an intermediate point that balances the decreasing temporal supervision bias against the increasing estimation error. These findings lead to a declarative answer to \textbf{Q1}: \textbf{No}, supervising a longer teacher trajectory is not always better; the effective training horizon lies in the middle. \textbf{\ding{184}} \textit{Third}, we show that more long-horizon data is not always better. A two-stage strategy that first trains on the full short-horizon data and then refines the model using only half of the long-horizon trajectories achieves higher reliability than full long-horizon training while reducing both training and inference costs. This leads to a declarative answer to \textbf{Q2}: \textbf{Yes}, selectively chosen long-horizon trajectories can provide more effective and efficient supervision than using the full long-horizon dataset.

\textbf{Our contributions are fourfold} and organized as follows. \textbf{\ding{182}} \textit{First}, we characterize the relationship between training horizon and agent reliability, showing that reliability improves with longer supervision but exhibits diminishing returns (\Sectionref{sec:reliability}). \textbf{\ding{183}} \textit{Second}, we explain this scaling behavior empirically through the notion of persistence, revealing a transition from under-persistence to productive persistence and eventually over-persistence (\Sectionref{sec:horizon_behavior}). \textbf{\ding{184}} \textit{Third}, we theoretically explain why longer supervision eventually exhibits diminishing returns, characterizing the effective training horizon through a trade-off between temporal supervision bias and finite-sample estimation error (\Sectionref{sec:theory_analyze}). \textbf{\ding{185}} \textit{Fourth}, we propose a simple, efficient two-stage training strategy that combines full short-horizon training with selective long-horizon refinement to further improve reliability (\Sectionref{sec:method}).

\section{Related Works}
\label{sec:related_works}

\paragraph{Terminal-Agent Training and Data Engineering.}
Recent work improves terminal agents through data scaling, environment construction, and richer supervision. \citet{gandhi2026endless}, \citet{cheng2026terminal}, and~\citet{shrivastava2026echo} improve environments and supervision signals, while \citet{pi2026on} study long-context training and observe that extending context does not always improve performance. However, \emph{their analysis measures aggregate accuracy and leaves open how longer supervision changes agent reliability}. We show that aggregate accuracy hides a qualitative shift: beyond an intermediate horizon, longer supervision unlocks few new tasks but makes already-solvable tasks more repeatable, redistributing success from fragile to consistent solving. Building on this observation, \textit{we characterize how the supervision horizon shapes agent behavior, from under-persistence to productive recovery and eventually over-persistence, explain the saturation theoretically through a bias--complexity trade-off, and use this analysis to motivate selective refinement of newly exposed long-horizon trajectory segments.}

\paragraph{Likelihood-based data selection for SFT.}
Data quality often matters more than quantity: LIMA~\citep{zhou2023lima} shows that a small, well-chosen dataset can rival much larger ones. One line of work selects data by how well it fits the model being trained. \citet{ren-etal-2024-learn} show that LLM-generated responses outperform human-written ones largely because their lower perplexity makes them easier to learn, and \citet{zhang2025the} select, among candidate responses, those with the highest likelihood under the target model. Other criteria instead favor harder examples, e.g., instruction-following difficulty~\citep{li-etal-2024-quantity}, and perplexity-based pruning in pretraining finds that the best subset depends on the reference model and the retained fraction~\citep{marion2023less,ankner2024perplexed}. Reconciling these views, \citet{cuong2026capacitydependent} show that the preference for high-likelihood data is not universal: in mathematical reasoning, it yields faster early gains, especially for small students, whereas low-likelihood data becomes more beneficial for larger students under longer training. These methods score \emph{whole} examples, typically in single-turn settings. \textit{In contrast, we score only the newly exposed continuation $(H_1, H_2]$ of each long-horizon trajectory, using a warm-start model trained on shorter prefixes, and treat likelihood as compatibility with the learned short-horizon policy rather than as intrinsic quality.}
\section{Background}
\label{sec:background}
\subsection{Terminal Agents as Sequential Decision Making}

In this work, we formulate the interaction following the \emph{Terminus-2} scaffold~\citep{merrill2026terminalbench}, which provides the model with a single headless terminal interface and requires it to complete tasks through Bash commands. This lightweight setup minimizes additional scaffold-specific mechanisms, allowing the learned sequential behavior to be more directly attributed to the underlying language model. At each interaction step $t$, the agent observes the current terminal output $o_t$ and selects an action $a_t$, such as executing a command, inspecting or modifying a file, installing a dependency, or running a test. The action is sampled from a policy $\pi_\theta$ conditioned on the complete interaction history $h_t=(o_1,a_1,\ldots,o_{t-1},a_{t-1},o_t)$, i.e., $a_t\sim\pi_\theta(\cdot\mid h_t)$. A complete interaction forms a trajectory $\tau=(o_1,a_1,o_2,a_2,\ldots,o_T,a_T)$, where $T$ denotes the number of environment-level actions.

\subsection{Terminal Agent Training as Imitation Learning}
\label{sec:background-imitation-learning}
A common paradigm for training terminal agents is to learn from trajectories generated by a stronger teacher model, which can be viewed as \emph{offline imitation learning}. Let $\pi_{\mathrm{T}}$ denote the teacher policy and $\pi_\theta$ the student policy. At each step $t$, the teacher selects an action according to $a_t \sim \pi_{\mathrm{T}}(\cdot \mid h_t)$, where $h_t=(o_1,a_1,\ldots,o_t)$ denotes the interaction history. Rolling out the teacher policy produces a trajectory $\tau=(o_1,a_1,o_2,a_2,\ldots,o_T,a_T)$. A dataset of $N$ teacher trajectories is denoted by $\mathcal{D}=\{\tau^{(i)}\}_{i=1}^{N}$, where $\tau^{(i)}=(o_1^{(i)},a_1^{(i)},\ldots,o_{T_i}^{(i)},a_{T_i}^{(i)})$.

Given these offline demonstrations, behavior cloning trains the student to imitate the teacher action at each visited history. The student assigns probability $\pi_\theta(a_t^{(i)}\mid h_t^{(i)})$ to the teacher action observed at step $t$, with the standard log-loss objective
\begin{equation}
    \mathcal{L}(\theta)=-\frac{1}{N}\sum_{i=1}^{N}\sum_{t=1}^{T_i}\log\pi_\theta\left(a_t^{(i)} \mid h_t^{(i)}\right).
\end{equation}

\subsection{From Decision Horizon to Token-Level Supervision}
\label{sec:sft}
In practice, LLM-based agents are trained with supervised fine-tuning (SFT) over tokenized interaction sequences. In SFT, each action $a_t=(y_{t,1},\ldots,y_{t,\ell_t})$ is generated autoregressively at the token level. Let $p_\theta(y_{t,j}\mid h_t,y_{t,<j})$ denote the next-token distribution. Then,
\begin{equation}
    \pi_\theta(a_t\mid h_t)
    =
    \prod_{j=1}^{\ell_t}
    p_\theta(y_{t,j}\mid h_t,y_{t,<j}),
    \qquad
    \log \pi_\theta(a_t\mid h_t)
    =
    \sum_{j=1}^{\ell_t}
    \log p_\theta(y_{t,j}\mid h_t,y_{t,<j}).
\end{equation}

A common cutoff at a particular decision step does not correspond to a common sequence length across trajectories. Instead, the supervision horizon is implemented as a token budget $K$, retaining only the prefix of the serialized trajectory within the first $K$ tokens. Letting $n_t(\tau)$ denote the cumulative number of tokens through action $a_t$, a token budget $K$ induces the decision horizon $H_K(\tau)=\max\{t:n_t(\tau)\leq K\}$, which is non-decreasing in $K$. Throughout the empirical analysis, we therefore use the \emph{token supervision horizon} $K$ as the controllable training variable, while the theoretical analysis is expressed in terms of the induced \emph{decision horizon} $H$.
\section{Emergence and Saturation of Reliability}
\label{sec:reliability}

We first investigate how the amount of trajectory supervision affects the reliability of terminal agents. We vary only the token-level supervision horizon while keeping the base model, underlying training trajectories, and evaluation protocol fixed. The resulting trend is clear: extending the supervision horizon initially improves both the breadth and repeatability of task solving, but the gains diminish beyond an intermediate regime. In particular, substantially longer supervision incurs considerably higher training cost without a corresponding improvement in aggregate reliability.

\subsection{Experimental Setting}

\textbf{Model and Training Data.}
Following prior work on terminal-agent training~\citep{pi2026on,gandhi2026endless,cheng2026terminal,shrivastava2026echo}, we use \texttt{Qwen/Qwen3-8B} as the base model throughout our experiments. We train on trajectories from \texttt{nvidia/Nemotron-Terminal-Corpus}, which contains terminal-interaction trajectories generated by DeepSeek-V3.2~\citep{liu2025deepseek}. Following \citet{pi2026on}, we restrict training to completed trajectories, allowing us to isolate the effect of supervision horizon from confounding effects introduced by incomplete or imperfect trajectories. All training variants are constructed from the same underlying trajectory set, with supervision truncated at $4096$, $8192$, $12288$, or $16384$ response tokens. We refer to these settings as $4$K, $8$K, $12$K, and $16$K, respectively.

\textbf{Evaluation Protocol.} We evaluate the resulting models on Terminal-Bench v1.0~\citep{merrill2026terminalbench}, using the Terminus-2 agent scaffold throughout. Each task is evaluated with eight independent attempts. Repeated evaluation allows us to distinguish between \emph{coverage} --- whether a model can solve a task at all --- and \emph{repeatability} --- whether it can solve the same task consistently across independent runs.

\textbf{Reliability Metrics.}
Let $s_i=\sum_{r=1}^{8}\mathbf{1}\{\text{attempt }r\text{ succeeds on task }i\}$ be the number of successes for task $i$. We define
\begin{equation}
    \mathrm{Rel}_{\geq k}
    =
    \sum_{i=1}^{M}\mathbf{1}\{s_i\geq k\},
    \qquad
    \bar{s}_{+}
    =
    \frac{\sum_{i:s_i\geq1}s_i}{\mathrm{Rel}_{\geq1}},
\end{equation}
where $\mathrm{Rel}_{\geq k}$ counts tasks solved in at least $k$ of eight attempts, and $\bar{s}_{+}$ is the mean success count among solved tasks. We also report fragile tasks, defined by $1\leq s_i\leq2$.


\subsection{Experimental Results}

\begin{wraptable}{r}{0.5\textwidth}
\vspace{-1.6em}
\centering
\caption{\textbf{Reliability across supervision horizons.}
  Values are mean\,$\pm$\,variation over four evaluations.
  }
\label{tab:horizon_reliability}

\scriptsize
\renewcommand{\arraystretch}{0.9}
\setlength{\tabcolsep}{1.5pt}

\begin{tabularx}{\linewidth}{@{}l*{4}{>{\centering\arraybackslash}X}@{}}
\toprule
\rowcolor{headerblue}
\textbf{Metric} & \textbf{4K} & \textbf{8K} & \textbf{12K} & \textbf{16K} \\
\midrule

\multicolumn{5}{c}{\textcolor{sectionteal}{\textit{Aggregate}}} \\
Succ.\ $\uparrow$              & $78{\pm}2.8$   & $97{\pm}2.6$   & \textcolor{bestgreen}{\boldmath$111{\pm}2.3$} & $110{\pm}2.7$ \\
$\mathrm{Rel}_{\geq1}\uparrow$ & $24{\pm}0.8$   & $25{\pm}0.8$   & \textcolor{bestgreen}{\boldmath$29{\pm}0.7$}  & $26{\pm}0.8$  \\
$\bar{s}_{+}\uparrow$          & $3.25{\pm}0.1$ & $3.88{\pm}0.1$ & $3.83{\pm}0.1$ & \textcolor{bestgreen}{\boldmath$4.23{\pm}0.1$} \\
\midrule

\multicolumn{5}{c}{\textcolor{sectionteal}{\textit{Repeatability}}} \\
$\mathrm{Rel}_{\geq4}\uparrow$ & $9{\pm}0.7$    & $10{\pm}0.7$   & $13{\pm}0.8$   & \textcolor{bestgreen}{\boldmath$14{\pm}0.8$} \\
$\mathrm{Rel}_{\geq6}\uparrow$ & $4{\pm}0.5$    & $8{\pm}0.6$    & \textcolor{bestgreen}{\boldmath$9{\pm}0.6$} & \textcolor{bestgreen}{\boldmath$9{\pm}0.7$} \\
$\mathrm{Rel}_{\geq8}\uparrow$ & $2{\pm}0.4$    & $5{\pm}0.5$    & \textcolor{bestgreen}{\boldmath$6{\pm}0.5$} & $4{\pm}0.5$ \\
Fragile $\downarrow$           & $15{\pm}0.8$   & $11{\pm}0.7$   & $12{\pm}0.8$   & \textcolor{bestgreen}{\boldmath$9{\pm}0.7$} \\
\bottomrule
\end{tabularx}
\vspace{-1.8em}
\end{wraptable}

\textbf{Reliability emerges with longer supervision.}
As shown in \Tableref{tab:horizon_reliability}, increasing the supervision horizon from $4$K to $12$K substantially improves reliability. Successful attempts increase from $78{\pm}2.8$ to $111{\pm}2.3$, while task coverage rises from $\mathrm{Rel}_{\geq1}=24{\pm}0.8$ to $29{\pm}0.7$. The improvement also extends to repeated success: $\mathrm{Rel}_{\geq6}$ increases from $4{\pm}0.5$ to $9{\pm}0.6$, and $\mathrm{Rel}_{\geq8}$ from $2{\pm}0.4$ to $6{\pm}0.5$. In every case, the gain far exceeds the variation across evaluations. Thus, longer supervision improves both the breadth and consistency of task solving.

\textbf{Reliability saturates beyond an intermediate horizon.}
As shown in \Tableref{tab:horizon_reliability}, extending supervision from $12$K to $16$K yields little additional aggregate benefit. Successful attempts remain nearly unchanged ($111{\pm}2.3$ vs.\ $110{\pm}2.7$, within variation), while task coverage decreases from $29{\pm}0.7$ to $26{\pm}0.8$ and $\mathrm{Rel}_{\geq8}$ from $6{\pm}0.5$ to $4{\pm}0.5$. However, the reliability profile shifts: the number of fragile tasks decreases from $12{\pm}0.8$ to $9{\pm}0.7$, while $\mathrm{Rel}_{\geq4}$ increases slightly from $13{\pm}0.8$ to $14{\pm}0.8$. Together with the higher mean success count ($\bar{s}_{+}=3.83{\pm}0.1$ vs.\ $4.23{\pm}0.1$), this suggests that after the reliability peak, additional supervision primarily \emph{redistributes} success toward more consistent performance on already-solvable tasks rather than expanding overall coverage.

We defer task-level reliability and further analyses in \Appendixref{sec:task_level_reliability}.

\begin{takeaway}[Emergence-and-Saturation of Reliability]
Reliability improves markedly up to an intermediate horizon, after which additional supervision mainly redistributes success from fragile to more repeatable task solving rather than increasing aggregate reliability.
\end{takeaway}
\section{Horizon controls Behaviors}
\label{sec:horizon_vs_behaviors}

\subsection{Empirical Analysis}
\label{sec:horizon_behavior}

\definecolor{bestgreen}{HTML}{2E7D32}

\begin{wraptable}{r}{0.45\linewidth}
    \vspace{-2.5em}
    \centering
    \caption{\textbf{Proactive vs.\ reactive} successful trajectories across four context lengths. Proactive share peaks at 12K and declines at 16K.}
    \label{tab:proactive_reactive}
    \resizebox{\linewidth}{!}{
    \begin{tabular}{l*{4}{c}}
        \arrayrulecolor{rulegray}\toprule
        \rowcolor{headerblue}
        \textbf{Context length} & \textbf{4K} & \textbf{8K} & \textbf{12K} & \textbf{16K} \\
        \midrule
        Proactive & 17 & 25 & 29 & 27 \\
        \rowcolor{rowgray}
        Reactive  & 61 & 72 & 82 & 83 \\
        \midrule
        Proactive share
        & \cellcolor{sectionslate!12}\textcolor{sectionslate}{\textbf{21.79\%}}
        & \cellcolor{sectionslate!32}\textcolor{sectionslate}{\textbf{25.77\%}}
        & \cellcolor{bestgreen!25}\textcolor{bestgreen}{\textbf{26.13\%}}
        & \cellcolor{sectionslate!24}\textcolor{sectionslate}{\textbf{24.55\%}} \\
        \bottomrule
    \end{tabular}
    }
    \vspace{-1em}
\end{wraptable}

\textbf{Longer horizons strengthen reactive problem solving.}
We classify a successful trajectory as proactive if the target failure never manifests, and as reactive if the failure manifests but the agent subsequently diagnoses and recovers from it to solve the task; details in \Sectionref{sec:proactive_reactive}. As shown in \Tableref{tab:proactive_reactive}, increasing the supervision horizon consistently increases the number of successful reactive trajectories, from 61 at 4K to 83 at 16K. This suggests that longer-horizon supervision supports successful diagnosis and recovery after errors occur. Proactive successes also increase from 17 at 4K to 29 at 12K before slightly decreasing to 27 at 16K, with the proactive share peaking at 26.13\% at 12K. Overall, longer horizons are associated with more successful recovery trajectories while preserving strong proactive solving.

\textbf{Longer horizons can induce over-persistence.} Together, \Tableref{tab:proactive_reactive} and \Figureref{fig:episodes_per_run} suggest that 12K provides a favorable balance between proactive and reactive behavior. At 12K, proactive successes peak at 29 (26.13\% of successful trajectories), while reactive successes reach 82. It also produces the shortest successful trajectories, averaging 10.8 episodes overall and 12.6 episodes among reactive runs. Scaling to 16K yields only a marginal increase in reactive successes (82$\rightarrow$83), while proactive successes decrease (29$\rightarrow$27) and successful trajectories become longer on average, both overall (10.8$\rightarrow$12.7 episodes) and among reactive runs (12.6$\rightarrow$15.1 episodes). These results suggest that the additional persistence induced by longer supervision becomes less efficient beyond 12K and is consistent with over-persistence at 16K. We leave detailed task-specific analyses in \Appendixref{sec:persistence}.

\begin{figure}[t]
    \centering
    \vspace{-1.5em}
    \caption{\textbf{Episodes per run for successful trials, all and reactive only across training horizons.}}
    \includegraphics[width=.9\linewidth]{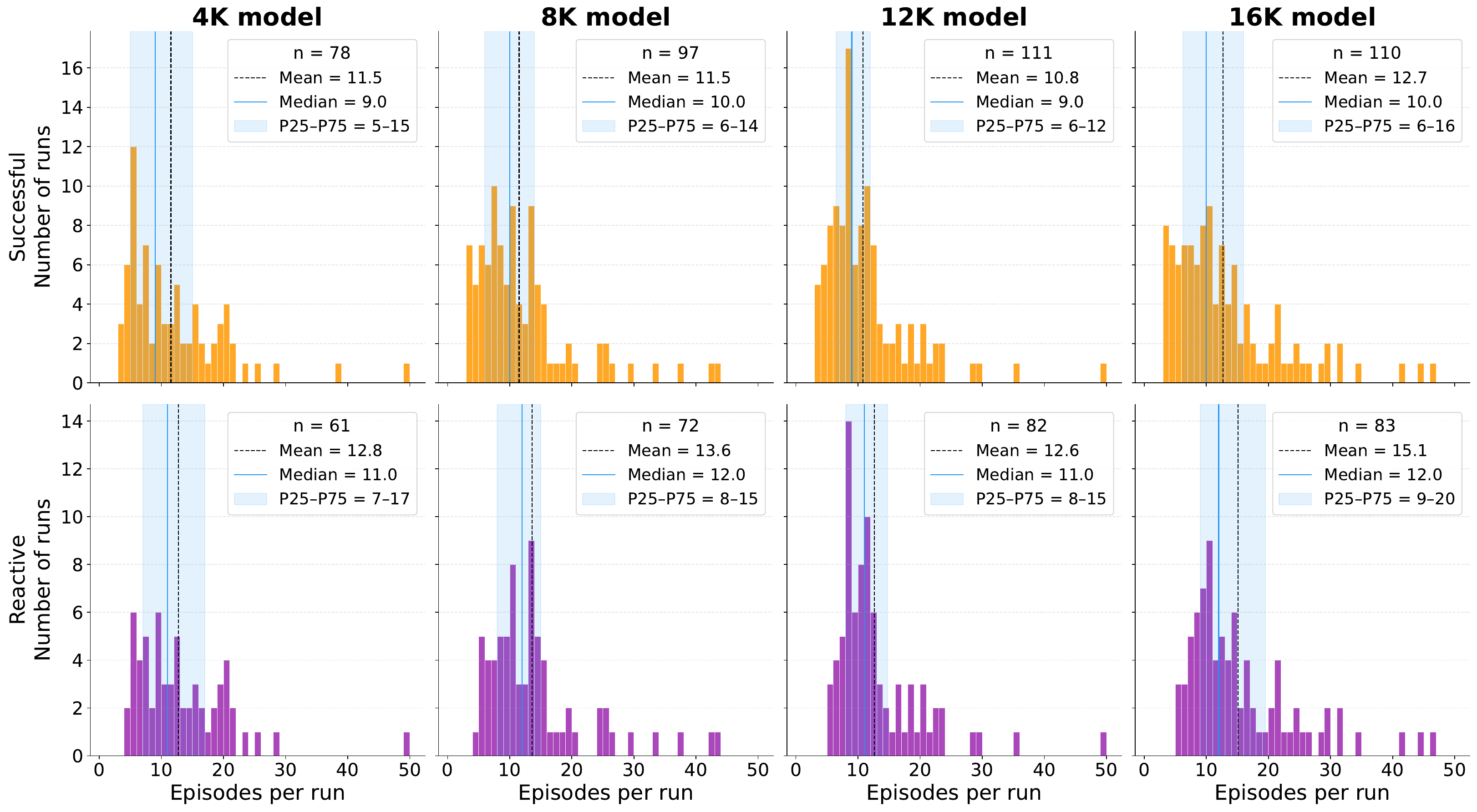}
    \label{fig:episodes_per_run}
    \vspace{-1em}
\end{figure}

\begin{takeaway}[Persistence Emerges and Eventually Overshoots]
    Increasing the supervision horizon is associated with more successful recovery trajectories, with the most favorable balance around 12K. Further scaling provides little additional reactive success while producing longer trajectories, suggesting over-persistence at 16K.
\end{takeaway}

\subsection{Why Does Reliability Saturate?}
\label{sec:theory_analyze}

Longer supervision exposes later-stage behavior absent under short-horizon training, but these stages involve increasingly diverse histories. We formalize this tension as a trade-off between temporal supervision bias and finite-sample estimation difficulty. Let $L_H(\pi)$ be the expected imitation loss over the first $H$ teacher decisions, and $\widehat{\pi}_H$ the empirical risk minimizer at horizon $H$.

\begin{definition}[Temporal supervision bias]
\label{def:temporal_bias}
Let $\Pi_H^\star$ be the set of prefix-optimal policies and $\pi_T^\star$ a full-horizon optimal policy. The temporal supervision bias at horizon $H$ is
\begin{equation}
\begin{gathered}
    \Pi_H^\star=\arg\min_{\pi\in\Pi}L_H(\pi),
    \qquad
    \pi_T^\star\in\arg\min_{\pi\in\Pi}L_T(\pi),\\
    B(H):=\sup_{\pi\in\Pi_H^\star}L_T(\pi)-L_T(\pi_T^\star),
    \qquad
    B(H)\geq 0,\quad B(T)=0.
\end{gathered}
\end{equation}
\end{definition}

$B(H)$ is the worst-case full-horizon risk left undetermined by supervising only the first $H$ decisions. We assume it is non-increasing in $H$, while the horizon-induced policy complexity $\mathcal{C}(H)$ is non-decreasing (\Lemmaref{lem:complexity}).

\begin{theorem}[Bias--Complexity Bound]
\label{thm:bias_complexity_bound}
Assume $N$ independent teacher trajectories and a bounded loss $0\leq\ell\leq M$. Let $L_H^\star:=\min_{\pi\in\Pi}L_H(\pi)$, and assume the prefix-to-full-horizon transfer coefficient is bounded:
\begin{equation}
\label{eq:transfer_coefficient}
    \Gamma_H
    :=
    \sup_{\pi\in\Pi:\,L_H(\pi)>L_H^\star}
    \frac{\bigl[L_T(\pi)-\sup_{\pi'\in\Pi_H^\star}L_T(\pi')\bigr]_+}{L_H(\pi)-L_H^\star}
    \leq\Gamma<\infty,
    \qquad \forall H\leq T.
\end{equation}
Then, with probability at least $1-\delta$,
\begin{equation}
\label{eq:bias_complexity_bound}
    L_T(\widehat{\pi}_H)-L_T(\pi_T^\star)
    \leq
    B(H)
    +
    \sqrt{2}\,\Gamma M
    \sqrt{\frac{\mathcal{C}(H)+\log(2/\delta)}{N}}.
\end{equation}
\end{theorem}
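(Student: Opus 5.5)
We split the excess full-horizon risk of the empirical minimizer into a bias term and an estimation term, then control the estimation term with a uniform deviation bound over the horizon-$H$ loss class. Write $L_H^\star=\min_{\pi\in\Pi}L_H(\pi)$ and $S_H:=\sup_{\pi'\in\Pi_H^\star}L_T(\pi')$, so that $B(H)=S_H-L_T(\pi_T^\star)$ by \Definitionref{def:temporal_bias}. Then
\begin{equation*}
L_T(\widehat{\pi}_H)-L_T(\pi_T^\star)
=\bigl[L_T(\widehat{\pi}_H)-S_H\bigr]+B(H)
\leq \bigl[L_T(\widehat{\pi}_H)-S_H\bigr]_+ + B(H).
\end{equation*}

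Next we bound the bracket through the transfer coefficient in \eqref{eq:transfer_coefficient}. There are two cases.
\begin{itemize}
\item If $L_H(\widehat{\pi}_H)=L_H^\star$, then $\widehat{\pi}_H\in\Pi_H^\star$. Hence $L_T(\widehat{\pi}_H)\leq S_H$ and the bracket vanishes.
\item Otherwise, the definition of $\Gamma_H$ gives $\bigl[L_T(\widehat{\pi}_H)-S_H\bigr]_+\leq \Gamma\,\bigl(L_H(\widehat{\pi}_H)-L_H^\star\bigr)$.
\end{itemize}
In both cases, the problem reduces to bounding the prefix excess risk of the empirical risk minimizer by $\sqrt{2}M\sqrt{(\mathcal{C}(H)+\log(2/\delta))/N}$.

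For this step we use the standard ERM argument. Let $\widehat{L}_H$ be the empirical risk over the $N$ independent trajectories. Each per-trajectory prefix loss lies in $[0,M]$, since $\ell$ is bounded and the risk is normalized per trajectory. Pick $\pi_H^\star\in\Pi_H^\star$. Then
\begin{equation*}
L_H(\widehat{\pi}_H)-L_H^\star
\leq \bigl[L_H(\widehat{\pi}_H)-\widehat{L}_H(\widehat{\pi}_H)\bigr]+\bigl[\widehat{L}_H(\pi_H^\star)-L_H(\pi_H^\star)\bigr]
\leq 2\sup_{\pi\in\Pi}\bigl|L_H(\pi)-\widehat{L}_H(\pi)\bigr|.
\end{equation*}
We invoke \Lemmaref{lem:complexity}, reading $\mathcal{C}(H)$ as a log-cardinality, covering, or union-bound complexity of the horizon-$H$ loss class. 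Combined with Hoeffding's inequality, this gives, with probability at least $1-\delta$,
\begin{equation*}
\sup_{\pi\in\Pi}\bigl|L_H-\widehat{L}_H\bigr|\leq M\sqrt{\frac{\mathcal{C}(H)+\log(2/\delta)}{2N}}.
\end{equation*}
The factor $2$ inside the logarithm comes from the two-sided deviation. Doubling this uniform deviation gives $2M/\sqrt{2}=\sqrt{2}M$. Multiplying by $\Gamma$ and adding $B(H)$ yields \eqref{eq:bias_complexity_bound}.

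\textbf{Main obstacle.} The constants must match exactly: $\sqrt{2}$ and $\log(2/\delta)$ require that $\mathcal{C}(H)$ be defined as $\log|\Pi_H|$ or an equivalent quantity absorbing the union bound. The bound also requires that the per-trajectory loss, summed over $H$ steps, is still bounded by $M$. If \Lemmaref{lem:complexity} instead defines $\mathcal{C}(H)$ via Rademacher complexity, we would switch to McDiarmid's inequality plus symmetrization and absorb the constants into the definition.
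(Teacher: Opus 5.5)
Your proposal is correct and matches the paper's proof essentially step for step: the same split $L_T(\widehat{\pi}_H)-L_T(\pi_T^\star)=B(H)+[L_T(\widehat{\pi}_H)-R_H]$, the same two-case use of $\Gamma_H$ (Lemma~\ref{lem:transfer}), and the same Hoeffding-plus-union-bound ERM argument giving $\sqrt{2}M\sqrt{(\mathcal{C}(H)+\log(2/\delta))/N}$. One small correction: the union bound is the content of Lemma~\ref{lem:estimation}, not Lemma~\ref{lem:complexity}, which only gives monotonicity in $H$; it runs over the $|\Pi_H|$ equivalence classes because $L_H$ and $\widehat{L}_H$ are constant on each $\equiv_H$-class, so no Rademacher fallback is needed.
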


The proof, assumptions, and scope are in \Appendixref{appdendix:proofs}. The result bounds imitation risk rather than task success, so we use it as a lens on the trade-off and test it empirically.

\begin{wrapfigure}{r}{0.45\linewidth}
    \centering
    \vspace{-2em}
    \caption{
        \textbf{Supervision Bias $\widehat{B}(\cdot)$ and Temporal Difficulty $\ell^{\mathrm{loc}}_{H}(\cdot;4)$ for the 8K, 12K and 16K models.}
    }
    \includegraphics[width=\linewidth]{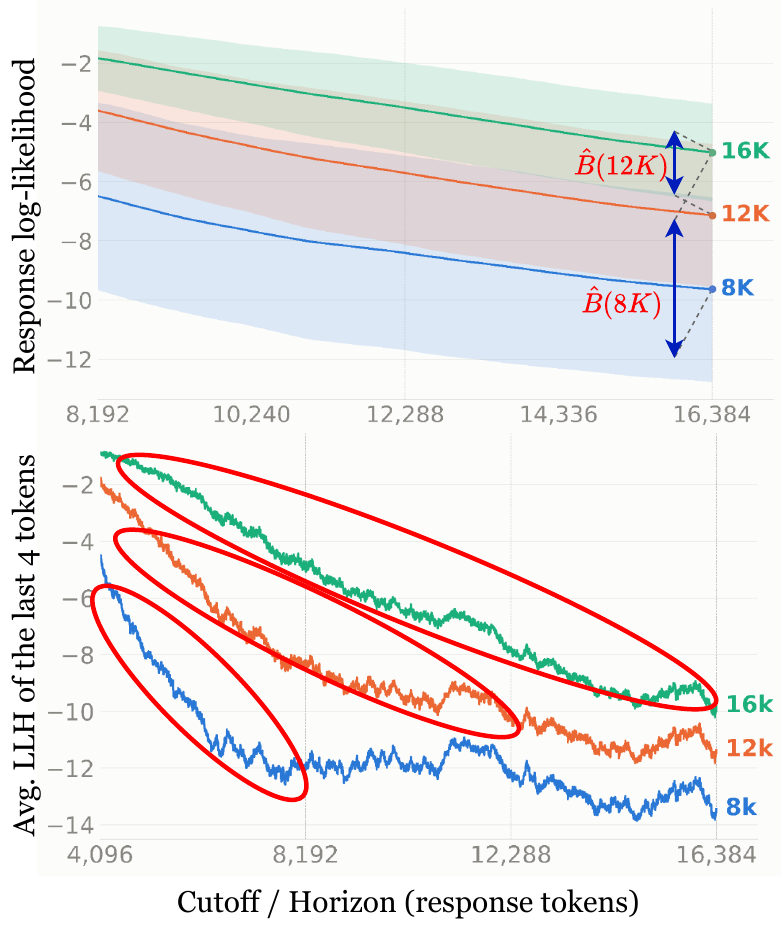}
    \vspace{-7em}
    \label{fig:temporal_likelihood}
\end{wrapfigure}

\textbf{Empirical diagnostics.} The two terms in \Theoremref{thm:bias_complexity_bound} are not directly observable, so we measure one diagnostic for each on training trajectories longer than $16$K tokens. For each horizon-$H$ model $\theta_H$, we compute
\begin{equation}
\label{eq:temporal_metrics}
\begin{aligned}
    \bar{\ell}_{H}(K) &= \frac{1}{K}\sum_{t\leq K}\log p_{\theta_H}(x_t\mid x_{<t}),\\
    \widehat{B}(H) &= \bar{\ell}_{16K}(K_{\max})-\bar{\ell}_{H}(K_{\max}),\\
    \ell^{\mathrm{loc}}_{H}(K;w) &= \frac{1}{w}\sum_{t=K+1}^{K+w}\log p_{\theta_H}(x_t\mid x_{<t}),
\end{aligned}
\end{equation}
where $K_{\max}=16$K and we use a local window of $w=4$ tokens throughout. The bias proxy $\widehat{B}(H)$ targets the first term, measuring how much worse a model supervised up to $H$ fits the full $16$K trajectory than one supervised on all of it, thereby mirroring the truncation-induced excess risk in $B(H)$. The local difficulty $\ell^{\mathrm{loc}}_H(K;4)$ targets the second term: tracking a short window's likelihood as $K$ increases reveals how difficult each position becomes to model, reflecting the growing heterogeneity of late-stage histories that drives estimation burden. These quantities are diagnostics, not direct estimates of the theoretical terms (\Sectionref{sec:empirical_measures}).

As shown in \Figureref{fig:temporal_likelihood}, $\ell^{\mathrm{loc}}_{H}(K;4)$ drops substantially as the trajectory progresses, even within supervised regions (\textcolor{red}{red oval regions}), so later-stage behavior is harder to model. Meanwhile, $\widehat{B}(12K)<\widehat{B}(8K)$: longer supervision narrows the gap to the $16$K reference, consistent with the assumption that temporal supervision bias is non-increasing in $H$. Together with \Theoremref{thm:bias_complexity_bound}, this suggests that longer supervision reduces truncation bias but adds behavior that is harder to estimate from finite data.

\textbf{Interpretation.}
\Theoremref{thm:bias_complexity_bound} and \Figureref{fig:temporal_likelihood} point to the same tension: longer supervision reduces the deficit caused by truncating later-stage behavior, but the newly introduced behavior is harder to estimate from finite data. This is consistent with the saturation observed in \Sectionref{sec:reliability}, and it motivates the selective refinement strategy in \Sectionref{sec:method}. The bound does not explain the behavioral shift toward over-persistence, which we regard as an empirical finding.

\begin{takeaway}[Estimation Burden]
    Longer supervision reduces temporal bias but increases estimation burden by exposing the model to more heterogeneous late-stage behaviors. Empirically, supervision beyond the effective horizon coincides with over-persistence (\Sectionref{sec:horizon_behavior}).
\end{takeaway}
\section{Selective Long-Horizon Refinement}
\label{sec:method}

\textbf{Motivation \& Method Overview.}
Longer supervision exposes useful later-stage behaviors, but training on all long-horizon continuations does not consistently improve reliability. Motivated by the bias--complexity trade-off in \Theoremref{thm:bias_complexity_bound} and increasing temporal difficulty in \Figureref{fig:temporal_likelihood}, we propose a two-stage selective refinement strategy. We first train on all trajectories up to a shorter horizon $H_1$, then extend supervision to $H_2$ using continuations with high likelihood under the warm-start model. We interpret likelihood as \emph{compatibility} with behavior learned at $H_1$, rather than as a universal measure of trajectory quality. This provides a conservative way to introduce long-horizon behavior and improve the reliability--efficiency trade-off without uniformly training on difficult continuations.

\begin{figure}[ht]
    \centering
    \includegraphics[width=\linewidth]{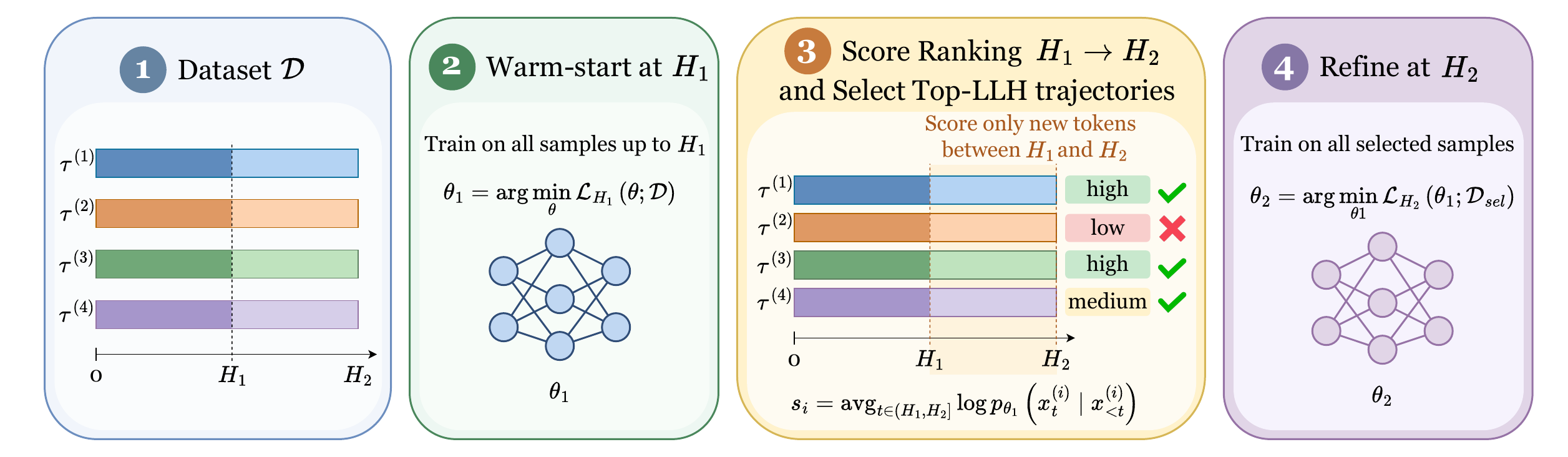}
    \vspace{-1em}
    \caption{\textbf{Overview of the proposed two-stage selective long-horizon training method.}}
    \label{fig:method}
\end{figure}

\textbf{Stage I: Data Preparation.} Following \Sectionref{sec:background}, let
$\mathcal{D}=\{\tau^{(i)}\}_{i=1}^{N}$ denote the offline teacher dataset,
where each trajectory
$\tau^{(i)}=(x_1^{(i)},\ldots,x_{L_i}^{(i)})$
contains $L_i$ supervised response tokens. We consider two supervision
horizons, $H_1<H_2$, where $H_1$ is used for short-horizon warm-start
training and $H_2$ for subsequent long-horizon refinement. Since scoring
requires observing the complete continuation from $H_1$ to $H_2$, we restrict
the selection pool to trajectories of length at least $H_2$,
$\mathcal{D}_{H_2}=\{\tau^{(i)}\in\mathcal{D}:L_i\geq H_2\}$.
For each candidate trajectory, the interval $(H_1,H_2]$ is used for
likelihood-based scoring and selection.

\textbf{Stage II: Short-Horizon Warm Start.} We train on the full dataset with supervision truncated at $H_1$:
\begin{equation}
\label{eq:token-average}
    \mathcal{L}_{H_1}(\theta;\mathcal{D})
    =
    -\frac{
    \sum_{i=1}^{N}\sum_{t=1}^{\min(H_1,L_i)}
    \log p_{\theta}(x_t^{(i)}\mid x_{<t}^{(i)})
    }{
    \sum_{i=1}^{N}\min(H_1,L_i)
    },
    \qquad
    \theta_1
    =
    \arg\min_{\theta}
    \mathcal{L}_{H_1}(\theta;\mathcal{D}).
\end{equation}
The resulting model $\theta_1$ provides both the initialization for
long-horizon training and the reference model for selecting learnable
continuations.

\textbf{Stage III: Continuation Scoring and Selection.} For each $\tau^{(i)}\in\mathcal{D}_{H_2}$, we score the continuation between
$H_1$ and $H_2$ using the warm-start model:
\begin{equation}
    s_i(H_1,H_2)
    =
    \frac{1}{H_2-H_1}
    \sum_{t=H_1+1}^{H_2}
    \log p_{\theta_1}(x_t^{(i)}\mid x_{<t}^{(i)}).
    \label{eq:continuation_score}
\end{equation}
Given a selection ratio $\rho$, we retain the highest-scoring trajectories
to form $\mathcal{D}_{\mathrm{sel}}\subseteq\mathcal{D}_{H_2}$.

\textbf{Stage IV: Selective Long-Horizon Refinement.} Starting from the warm-start checkpoint $\theta_1$, we continue training on
$\mathcal{D}_{\mathrm{sel}}$ with the supervision horizon extended to $H_2$.
Since all selected trajectories satisfy $L_i\geq H_2$, the refinement objective is
\begin{equation}
    \mathcal{L}_{H_2}(\theta;\mathcal{D}_{\mathrm{sel}})
    =
    -\frac{1}{|\mathcal{D}_{\mathrm{sel}}|H_2}
    \sum_{\tau^{(i)}\in\mathcal{D}_{\mathrm{sel}}}
    \sum_{t=1}^{H_2}
    \log p_{\theta}
    \left(
        x_t^{(i)} \mid x_{<t}^{(i)}
    \right),
    \qquad
    \theta_2
    =
    \arg\min_{\theta\,;\,\theta\leftarrow\theta_1}
    \mathcal{L}_{H_2}(\theta;\mathcal{D}_{\mathrm{sel}}).
\end{equation}
\section{Experiments}
\label{sec:experiments}

We compare selective long-horizon refinement with one-stage training at 4K, 8K, 12K, and 16K. For 12K and 16K refinement, we warm-start from the 8K checkpoint, rank candidate continuations by likelihood under the warm-start model, and retain the top or bottom $25\%$, $50\%$, or $75\%$for training. All models are trained for two epochs, with each stage of two-stage training run for two epochs. Following \Sectionref{sec:reliability}, we report successful attempts, $\mathrm{Rel}{\geq 1}$, $\mathrm{Rel}{\geq 4}$, $\mathrm{Rel}{\geq 6}$, $\mathrm{Rel}{\geq 8}$, $\bar{s}_{+}$, and fragile tasks. For each checkpoint, we evaluate eight attempts per task, reporting the mean and run-to-run variation across four evaluations. Our main results use Terminal-Bench v1.0, while results on Terminal-Bench v2.0 and OpenThoughts-TBLite~\citep{merrill2026terminalbench} are deferred to \Appendixref{sec:selective_other_benchmarks}.

\subsection{Main Results}
\label{sec:main_results}

\begin{wraptable}{r}{0.5\linewidth}
\vspace{-2em}
\centering
\caption{\textbf{Training cost of one-stage and selective two-stage training.}
\topl-50\% models are warm-started from 8K; \emph{Total} includes the warm-start stage.
Parentheses give the change vs.\ one-stage training at the same horizon.}
\label{tab:training_cost}

\footnotesize
\renewcommand{\arraystretch}{1.12}

\resizebox{\linewidth}{!}{%
\begin{tabular}{llrrr}
\toprule
\rowcolor{headerdeep}
\textbf{Hor.} &
\textbf{Strategy} &
\textbf{\#Samples} &
\textbf{Time (h)} &
\textbf{Total (h)}$\downarrow$ \\
\midrule

\hz{8K}
& Full
& 59,408
& 21.65
& 21.65 \\

\midrule
\hz{12K}
& Full
& 59,408
& 35.44
& 35.44 \\

\rowcolor{rowgray}
\hz{12K}
& \topl-50\%
& \gain 19,568
& \gain 11.83
& \gain \best{33.48}\,{\scriptsize\saving{($-$5.5\%)}} \\

\midrule
\hz{16K}
& Full
& 59,408
& 50.81
& 50.81 \\

\rowcolor{rowgray}
\hz{16K}
& \topl-50\%
& \gain 19,600
& \gain 17.48
& \gain 39.13\,{\scriptsize\saving{($-$23.0\%)}} \\

\bottomrule
\end{tabular}%
}

\vspace{-2em}
\end{wraptable}

\paragraph{Selective refinement improves long-horizon training.}
With $\rho \geq 50\%$, Top-$\rho$ refinement improves over one-stage training at the same horizon (\Tableref{tab:all_selective_results}). At 12K, Top-50 increases successful attempts from $111{\pm2.3}$ to $121{\pm2.3}$, raises $\mathrm{Rel}_{\geq6}$ from $9{\pm0.6}$ to $11{\pm0.6}$, and reduces fragile tasks from $12{\pm0.8}$ to $9{\pm0.7}$. At 16K, Top-75 performs best, increasing successful attempts from $110{\pm2.7}$ to $126{\pm2.1}$, $\mathrm{Rel}_{\geq6}$ from $9{\pm0.7}$ to $14{\pm0.6}$, and $\mathrm{Rel}_{\geq8}$ from $4{\pm0.5}$ to $7{\pm0.5}$. These gains exceed run-to-run variation, showing that selective long-horizon supervision is more effective than uniform training.

\paragraph{Selective refinement also reduces training cost.}
The refinement stage uses only a third of the samples of one-stage training (19{,}568 vs.\ 59{,}408; \Tableref{tab:training_cost}).
Including the 8K warm start, 8K$\rightarrow$12K \topl-50\% takes 33.48\,h in total, less than one-stage 12K (35.44\,h) and 16K (50.81\,h) training, while solving more attempts (121 vs.\ 111 and 110).
At 16K, selective refinement cuts total training time by 23.0\% (39.13 vs.\ 50.81\,h).

\subsection{Ablation Study}
\label{sec:ablation}

\definecolor{gaingreen}{HTML}{E3F4E6}
\definecolor{bestgreen}{HTML}{1B7F3B}
\definecolor{headerdeep}{HTML}{DCE6F2}
\definecolor{headerblue}{HTML}{EEF3FA}
\definecolor{rowgray}{HTML}{F6F7F9}
\definecolor{rulegray}{HTML}{8A9099}
\definecolor{sectionslate}{HTML}{34495E}
\definecolor{sectionteal}{HTML}{1F7A7A}

\begin{table}[t]
\centering
\vspace{-1em}
\caption{
\textbf{Reliability of one-stage and selective two-stage training on Terminal-Bench v1.0.}
Two-stage models warm-start from 8K and refine at 12K or 16K on the highest- (\topl) or lowest-likelihood (\botl) $\rho\%$ of continuations.
Values are reported with estimated run-to-run variation.
\colorbox{gaingreen}{Shaded}: improves over one-stage training at the same horizon;
\textbf{\textcolor{bestgreen}{green}}: best in row.
}
\label{tab:all_selective_results}
\setlength{\tabcolsep}{3.2pt}
\setlength{\aboverulesep}{0pt}
\setlength{\belowrulesep}{0pt}
\renewcommand{\arraystretch}{1.18}
\arrayrulecolor{rulegray}
\resizebox{\textwidth}{!}{%
\begin{tabular}{@{}l cccc cccccc cccccc@{}}
\toprule
\rowcolor{headerdeep}
& \multicolumn{4}{c}{\textbf{\textcolor{sectionslate}{One-stage}}}
& \multicolumn{6}{c}{\textbf{\textcolor{sectionslate}{12K refinement}} (from 8K)}
& \multicolumn{6}{c}{\textbf{\textcolor{sectionslate}{16K refinement}} (from 8K)} \\
\cmidrule(lr){2-5}\cmidrule(lr){6-11}\cmidrule(lr){12-17}

\rowcolor{headerblue}
& \multicolumn{4}{c}{\textit{training horizon}}
& \multicolumn{2}{c}{$\rho{=}25\%$}
& \multicolumn{2}{c}{$\rho{=}50\%$}
& \multicolumn{2}{c}{$\rho{=}75\%$}
& \multicolumn{2}{c}{$\rho{=}25\%$}
& \multicolumn{2}{c}{$\rho{=}50\%$}
& \multicolumn{2}{c}{$\rho{=}75\%$} \\
\cmidrule(lr){6-7}\cmidrule(lr){8-9}\cmidrule(lr){10-11}
\cmidrule(lr){12-13}\cmidrule(lr){14-15}\cmidrule(lr){16-17}

\rowcolor{headerblue}
\textbf{Metric}
& 4K & 8K & 12K & 16K
& \topl & \botl & \topl & \botl & \topl & \botl
& \topl & \botl & \topl & \botl & \topl & \botl \\

\midrule
\multicolumn{17}{@{}l}{
\textbf{\textcolor{sectionteal}{\textit{Aggregate reliability}}}
} \\

Succ.\ attempts $\uparrow$
& $78{\pm2.8}$ & $97{\pm2.6}$ & $111{\pm2.3}$ & $110{\pm2.7}$
& \gain $125{\pm2.2}$ & $108{\pm2.8}$
& \gain $121{\pm2.3}$ & \gain $112{\pm2.6}$
& \gain $119{\pm2.5}$ & \gain $117{\pm2.7}$
& \gain $114{\pm2.6}$ & $108{\pm2.9}$
& \gain $122{\pm2.4}$ & \gain $120{\pm2.5}$
& \gain \textcolor{bestgreen}{\boldmath$126{\pm2.1}$} & \gain $114{\pm2.7}$ \\

\rowcolor{rowgray}
$\mathrm{Rel}_{\geq1}$ $\uparrow$
& $24{\pm0.8}$ & $25{\pm0.8}$ & $29{\pm0.7}$ & $26{\pm0.8}$
& $26{\pm0.7}$ & $28{\pm0.8}$
& $27{\pm0.7}$ & $28{\pm0.8}$
& $29{\pm0.7}$ & \gain \textcolor{bestgreen}{\boldmath$31{\pm0.8}$}
& \gain $27{\pm0.7}$ & $24{\pm0.8}$
& \gain $28{\pm0.7}$ & $25{\pm0.8}$
& $26{\pm0.7}$ & \gain $30{\pm0.8}$ \\

$\bar{s}_{+}$ $\uparrow$
& $3.25{\pm0.1}$ & $3.88{\pm0.1}$ & $3.83{\pm0.1}$ & $4.23{\pm0.1}$
& \gain $4.81{\pm0.1}$ & \gain $3.86{\pm0.1}$
& \gain $4.48{\pm0.1}$ & \gain $4.00{\pm0.1}$
& \gain $4.10{\pm0.1}$ & $3.77{\pm0.1}$
& $4.22{\pm0.1}$ & \gain $4.50{\pm0.1}$
& \gain $4.36{\pm0.1}$ & \gain $4.80{\pm0.1}$
& \gain \textcolor{bestgreen}{\boldmath$4.85{\pm0.1}$} & $3.80{\pm0.1}$ \\

\midrule
\multicolumn{17}{@{}l}{
\textbf{\textcolor{sectionslate}{\textit{Repeatability}}}
} \\

$\mathrm{Rel}_{\geq4}$ $\uparrow$
& $9{\pm0.7}$ & $10{\pm0.7}$ & $13{\pm0.8}$ & $14{\pm0.8}$
& \gain \textcolor{bestgreen}{\boldmath$19{\pm0.7}$} & $13{\pm0.8}$
& \gain $15{\pm0.7}$ & \gain $15{\pm0.8}$
& \gain $16{\pm0.8}$ & \gain $14{\pm0.8}$
& $14{\pm0.7}$ & $13{\pm0.8}$
& \gain $15{\pm0.7}$ & \gain $16{\pm0.8}$
& \gain $15{\pm0.7}$ & \gain $15{\pm0.8}$ \\

\rowcolor{rowgray}
$\mathrm{Rel}_{\geq6}$ $\uparrow$
& $4{\pm0.5}$ & $8{\pm0.6}$ & $9{\pm0.6}$ & $9{\pm0.7}$
& \gain $10{\pm0.6}$ & $7{\pm0.6}$
& \gain $11{\pm0.6}$ & $8{\pm0.6}$
& $9{\pm0.6}$ & $9{\pm0.7}$
& $9{\pm0.6}$ & $9{\pm0.7}$
& \gain $11{\pm0.6}$ & \gain $12{\pm0.7}$
& \gain \textcolor{bestgreen}{\boldmath$14{\pm0.6}$} & \gain $11{\pm0.7}$ \\

$\mathrm{Rel}_{\geq8}$ $\uparrow$
& $2{\pm0.4}$ & $5{\pm0.5}$ & $6{\pm0.5}$ & $4{\pm0.5}$
& $5{\pm0.5}$ & $4{\pm0.5}$
& $6{\pm0.5}$ & $3{\pm0.4}$
& $6{\pm0.5}$ & $5{\pm0.5}$
& \gain $5{\pm0.5}$ & $4{\pm0.5}$
& \gain $6{\pm0.5}$ & \gain $6{\pm0.5}$
& \gain \textcolor{bestgreen}{\boldmath$7{\pm0.5}$} & $4{\pm0.5}$ \\

\rowcolor{rowgray}
Fragile ($1 \le s_i \le 2$) $\downarrow$
& $15{\pm0.8}$ & $11{\pm0.7}$ & $12{\pm0.8}$ & $9{\pm0.7}$
& \gain \textcolor{bestgreen}{\boldmath$6{\pm0.6}$} & $13{\pm0.8}$
& \gain $9{\pm0.7}$ & \gain $9{\pm0.7}$
& \gain $10{\pm0.7}$ & $14{\pm0.8}$
& $10{\pm0.7}$ & $9{\pm0.7}$
& $9{\pm0.7}$ & \gain $8{\pm0.6}$
& $9{\pm0.7}$ & $13{\pm0.8}$ \\

\bottomrule
\end{tabular}%
}
\end{table}

\paragraph{Does high-likelihood selection improve long-horizon learnability?}

\Figureref{fig:temporal_difficulty_selection} provides direct evidence that high-likelihood selection improves long-horizon learnability. At the same selection ratio, Top-likelihood models consistently achieve higher local likelihood than their Bottom-likelihood counterparts (Top-25\% $>$ Bottom-25\%, Top-50\% $>$ Bottom-50\%, and Top-75\% $>$ Bottom-75\%). Notably, Top-25\% even surpasses full 12K training around the target horizon. These results suggest that warm-start likelihood identifies continuations that are more compatible with the learned short-horizon policy and therefore easier to model during long-horizon refinement.

\paragraph{How much long-horizon data should be retained?} \Tableref{tab:all_selective_results} and \Figureref{fig:temporal_difficulty_selection} show a horizon-dependent quality--quantity trade-off. At 12K, Top-25 gives the most successes ($125{\pm2.2}$), while Top-50 yields slightly stronger $\mathrm{Rel}_{\geq6}$ ($11$ vs.\ $10$); at 16K, Top-75 is best ($126{\pm2.1}$ successes, $\mathrm{Rel}_{\geq6}=14$). This matches \Theoremref{thm:bias_complexity_bound}: smaller $\rho$ improves continuation compatibility but reduces $N_{\mathrm{sel}}$, increasing the estimation burden $O\!\left(\sqrt{\mathcal{C}(H)/N_{\mathrm{sel}}}\right)$. Since $\mathcal{C}(H)$ also grows with horizon, longer horizons require retaining more data to control estimation error.

\begin{wrapfigure}{r}{0.55\linewidth}
    \vspace{-1.5em}
    \centering
    \caption{\textbf{Temporal Supervision Difficulty $\ell^{\mathrm{loc}}_{H}(\cdot;4)$ of the base, selective 12K-refined, and full 12K models.}
    }
    \includegraphics[width=\linewidth]{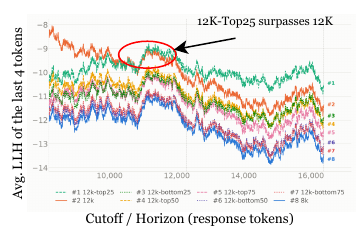}
    \vspace{-3em}
    \label{fig:temporal_difficulty_selection}
\end{wrapfigure}

\paragraph{Does likelihood-based selection matter?}
Likelihood-based selection matters, but its effect depends on the selection ratio and target horizon (\Tableref{tab:all_selective_results}). At 12K with $\rho=50\%$, Top-50 clearly outperforms Bottom-50, with $121{\pm2.3}$ vs.\ $112{\pm2.6}$ successful attempts, $\mathrm{Rel}_{\geq6}=11{\pm0.6}$ vs.\ $8{\pm0.6}$, and $\mathrm{Rel}_{\geq8}=6{\pm0.5}$ vs.\ $3{\pm0.4}$. These gaps exceed the estimated run-to-run variation, whereas at other settings the differences narrow or reverse. Thus, likelihood is a useful compatibility signal, but should be considered jointly with the retained data and target horizon.

\paragraph{Is a longer refinement horizon always beneficial?}
No. At $\rho=50\%$, 12K-Top and 16K-Top are statistically comparable ($121{\pm2.3}$ vs.\ $122{\pm2.4}$ successes; both $\mathrm{Rel}_{\geq6}=11{\pm0.6}$). At $\rho=75\%$, however, extending to 16K improves $119{\pm2.5}\!\rightarrow\!126{\pm2.1}$ successes and $\mathrm{Rel}_{\geq6}$ from $9{\pm0.6}$ to $14{\pm0.6}$. Thus, longer refinement helps only when enough continuation data are retained.

\paragraph{Additional ablations.} We provide other ablations, covering task-level and selection analyses, schedule/data-quantity controls, and cross-benchmark results on Terminal-Bench v2.0 and OpenThoughts-TBLite in \Appendixref{sec:selective_additional}.
\section{Conclusion}

We study the supervision horizon in terminal-agent training and identify an emergence-and-saturation pattern: reliability improves with longer supervision but plateaus beyond an intermediate horizon. Based on this, we propose selective long-horizon refinement, using continuation likelihood to retain more learnable long-horizon trajectories while reducing unnecessary training burden.

\clearpage

\section*{AI Use Statement}
Generative AI tools were used to assist with language polishing, LaTeX formatting, and code debugging. All research ideas, experimental design, analyses, results, and conclusions were developed and verified by the authors.

\section*{Ethics Statement}
This work studies the training and evaluation of terminal agents using existing datasets and benchmarks. It does not involve human subjects or personally identifiable information. While more capable terminal agents could potentially be misused for harmful automation, our work focuses on improving training reliability and efficiency rather than expanding access to harmful capabilities.

\clearpage
\bibliography{iclr2027_conference}
\bibliographystyle{iclr2027_conference}

\clearpage
\appendix
\etocdepthtag.toc{mtappendix}
\renewcommand{\thesection}{\Alph{section}}
\renewcommand{\thesubsection}{\thesection.\arabic{subsection}}
\etocdepthtag.toc{mtappendix}

\begingroup
\hypersetup{linkcolor=black}
\etocsettagdepth{mtbody}{none}
\etocsettagdepth{mtappendix}{subsection}

\etocsettocstyle{%
  \begin{center}
    {\large\scshape Appendix Contents}
  \end{center}
  \vspace{-2pt}\hrule height 0.4pt\vspace{8pt}%
}{%
  \vspace{6pt}\hrule height 0.4pt\vspace{12pt}%
}

\etocsetstyle{section}{}{}{%
  \par\addvspace{6pt}\noindent
  \makebox[1.8em][l]{\bfseries\etocnumber}%
  {\bfseries\etocname}%
  \nobreak\leaders\hbox to 0.7em{\hss.\hss}\hfill
  \nobreak\makebox[1.6em][r]{\bfseries\etocpage}\par
}{}

\etocsetstyle{subsection}{}{}{%
  \par\addvspace{1pt}\noindent\hspace{1.8em}%
  \makebox[2.6em][l]{\etocnumber}%
  \etocname
  \nobreak\leaders\hbox to 0.7em{\hss.\hss}\hfill
  \nobreak\makebox[1.6em][r]{\etocpage}\par
}{}

\tableofcontents
\endgroup

\providecommand{\rhohi}{75\%}
\providecolor{gaingreen}{HTML}{E4F3E7}
\providecolor{bestgreen}{HTML}{1B7A3A}
\providecolor{grpbg}{HTML}{F2F4F7}

\section{Setting}

This section provides the detailed setup used throughout our paper, including the training configuration, evaluation protocol, reliability metrics, and auxiliary analyses underlying the main results.



\subsection{Training Setting}
\label{sec:training_setting_appendix}

For the experiments in \Sectionref{sec:reliability}, \Sectionref{sec:horizon_vs_behaviors} and \Sectionref{sec:method}, we train all models with full-parameter supervised fine-tuning using LLaMA-Factory on the Nemotron terminal-agent corpus, which contains 59,407 ShareGPT-format trajectories. The base model for one-stage training is \texttt{Qwen/Qwen3-8B}. Unless otherwise specified, all runs use the same hyperparameters: two training epochs, BF16 precision, DeepSpeed ZeRO-3 on four NVIDIA H100 GPUs, per-device batch size 1 with no gradient accumulation (global batch size 4), AdamW with learning rate $1\times10^{-5}$, cosine decay without warmup, gradient clipping at 1.0, and no sequence packing. We train complete-data models with supervision cutoffs of 4,096, 8,192, 12,288, and 16,384 tokens, respectively, while keeping the underlying 59,407 trajectories fixed so that these runs isolate the effect of supervision horizon. For selective refinement, we warm-start from the complete 8K checkpoint and continue training for two additional epochs at either 12K or 16K on likelihood-filtered subsets of long trajectories. Candidate trajectories are those extending beyond 8,192 tokens and are ranked by the average teacher-forced log-probability assigned by the 8K model to assistant tokens in the newly exposed continuation, i.e., 8K--12K or 8K--16K. We retain either the highest- or lowest-likelihood 25\%, 50\%, or 75\% of the scored trajectories, yielding approximately 9.8K, 19.6K, and 29.4K training examples, respectively. Trajectories not extending beyond 8K are excluded from the selective-refinement stage. As shown in \Tableref{tab:training_cost}, the 12K Top-50\% setting contains slightly fewer samples than the 16K Top-50\% setting because some candidate trajectories do not contain enough tokens in the required 8K--12K continuation and are therefore skipped during scoring and selection.

\subsection{Evaluation Protocol and Metrics}
\label{sec:eval_protocol}

We evaluate all models on Terminal-Bench v1.0 using the Terminus-2 scaffold, with eight independent attempts per task. For each task $i$, $s_i \in \{0,\dots,8\}$ denotes the number of successful attempts; the reliability metrics $\mathrm{Rel}_{\geq k}$, $\bar{s}_{+}$, and fragile tasks ($1 \le s_i \le 2$) are defined in \Sectionref{sec:reliability}. Because each task has only eight attempts, per-task differences of one or two successes are within sampling noise; in task-level analyses we therefore interpret only large or consistent shifts.

\subsection{Empirical Measures of the Bias--Complexity Trade-off}
\label{sec:empirical_measures}

The metrics in \Equationref{eq:temporal_metrics} are empirical diagnostics rather than estimates of the quantities in \Theoremref{thm:bias_complexity_bound}. The bias $B(H)$ is defined through population risks over the unknown prefix-optimal set $\Pi_H^\star$ and the full-horizon optimum $\pi_T^\star$, while the complexity term depends on the policy class. In particular, $\widehat{B}(H)$ measures the gap to the trained $16$K model, itself a finite-sample estimate, rather than to $\pi_T^\star$, and corresponds to a single element of $\Pi_H^\star$ rather than the supremum in $B(H)$. We therefore construct observable metrics that capture the same intuition. Specifically, $\widehat{B}(H)$ measures the full-horizon modeling gap between the empirically trained horizon-$H$ policy and the 16K reference policy, mirroring the population risk gap in $B(H)$. Meanwhile, $\ell^{\mathrm{loc}}_{H}(K;w)$ measures the average log-likelihood of a local token window on the training trajectories. Tracking this quantity as $K$ increases reveals how the empirical learnability of behavior changes deeper into the trajectory: decreasing local likelihood indicates that later-stage behavior is increasingly difficult for the learned policy to model. Thus, these metrics do not numerically instantiate the theoretical terms, but provide empirical diagnostics for the same bias--difficulty trade-off.

\subsection{Proactive vs.\ Reactive Classification}
\label{sec:proactive_reactive}

For the experiment in \Sectionref{sec:horizon_behavior}, we classify each trial according to whether the agent encounters an execution error during interaction. A trial is \emph{proactive} if no genuine command error appears in any terminal observation returned to the agent, and \emph{reactive} if at least one such error occurs and the agent must respond to it. We inspect the terminal observation after every interaction step, excluding the initial task prompt, and match lines against common error signatures, including Python tracebacks and \texttt{*Error}/\texttt{*Exception}, \texttt{command not found}, \texttt{No such file or directory}, permission and access failures, \texttt{fatal}/\texttt{error}, segmentation faults, missing modules, syntax errors, linker failures, \texttt{FAILED}, \texttt{Killed}, connection failures, and \texttt{apt} \texttt{E:} messages. To avoid false positives, we ignore lines that merely echo the agent's own commands or code, harness/parser messages about response formatting, and warnings. This classification is independent of task outcome: a resolved trial may therefore be either proactive, meaning it succeeds without encountering an error, or reactive, meaning it succeeds after encountering and recovering from an error. The procedure is intentionally mechanical and identifies whether an error occurred, rather than inferring how the agent reasoned about it.

\section{Task-Level Emergence and Saturation of Reliability}
\label{sec:task_level_reliability}

\definecolor{bestgreen}{HTML}{2E7D32}
\definecolor{rulegray}{HTML}{9AA0A6}
\definecolor{headerdeep}{HTML}{EAF2F8}
\definecolor{headerblue}{HTML}{F4F8FB}
\definecolor{sectionslate}{HTML}{34495E}
\definecolor{sectionteal}{HTML}{167D7F}
\definecolor{rowgray}{HTML}{F5F5F5}

\newcommand{\zerodot}{%
  \textcolor{rulegray}{\ensuremath{\cdot}}%
}

\newcommand{\heatlegend}{%
  \begingroup
  \setlength{\fboxsep}{1pt}%
  \colorbox{bestgreen!8}{\strut 1}%
  \hspace{1pt}%
  \colorbox{bestgreen!24}{\strut 3}%
  \hspace{1pt}%
  \colorbox{bestgreen!40}{\strut 5}%
  \hspace{1pt}%
  \colorbox{bestgreen!64}{\strut 8}%
  \endgroup
}

\renewcommand{\hm}[1]{%
  \ifcase#1
    \textcolor{rulegray}{$\cdot$}%
  \or \cellcolor{bestgreen!8}1%
  \or \cellcolor{bestgreen!16}2%
  \or \cellcolor{bestgreen!24}3%
  \or \cellcolor{bestgreen!32}4%
  \or \cellcolor{bestgreen!40}5%
  \or \cellcolor{bestgreen!48}6%
  \or \cellcolor{bestgreen!56}7%
  \or \cellcolor{bestgreen!64}8%
  \else #1%
  \fi
}

\begin{table*}[t]
\centering
\scriptsize
\setlength{\tabcolsep}{2.5pt}
\setlength{\aboverulesep}{0pt}
\setlength{\belowrulesep}{0pt}
\renewcommand{\arraystretch}{0.92}
\arrayrulecolor{rulegray}

\caption{\textbf{\textcolor{sectionslate}{Task-level reliability across one-stage and selective training.}} Each cell reports the number of successful attempts out of eight; shading scales with success count (\heatlegend), and \zerodot{} denotes zero. \textbf{\textcolor{sectionteal}{T}}/\textbf{\textcolor{sectionteal}{B}} denote selection of the highest-/lowest-likelihood continuations during refinement. \textbf{\#Runs} counts the configurations (out of 12) that solve each task at least once; \textbf{\textcolor{bestgreen}{green}} marks the best value in each summary row.}
\label{tab:task_level_reliability}

\begin{tabular}{
@{}
>{\raggedright\arraybackslash}p{0.27\textwidth}
*{12}{c}
c
@{}
}
\toprule

\rowcolor{headerdeep}
&
\multicolumn{4}{c}{\textbf{\textcolor{sectionslate}{One-stage}}}
&
\multicolumn{4}{c}{\textbf{\textcolor{sectionslate}{12K refinement}}}
&
\multicolumn{4}{c}{\textbf{\textcolor{sectionslate}{16K refinement}}}
& \\

\cmidrule(lr){2-5}
\cmidrule(lr){6-9}
\cmidrule(lr){10-13}

\rowcolor{headerblue}
\textbf{Task}
& \textbf{4K}
& \textbf{8K}
& \textbf{12K}
& \textbf{16K}
& \textbf{T50}
& \textbf{B50}
& \textbf{T75}
& \textbf{B75}
& \textbf{T50}
& \textbf{B50}
& \textbf{T75}
& \textbf{B75}
& \textbf{\#Runs} \\

\midrule

\multicolumn{14}{@{}l}{
\textbf{\textcolor{sectionteal}{
\textit{Solved by all 12 configurations}}}} \\

crack-7z-hash.easy
& \hm{4}&\hm{1}&\hm{6}&\hm{4}
& \hm{3}&\hm{6}&\hm{6}&\hm{5}
& \hm{4}&\hm{8}&\hm{6}&\hm{7}
& 12\\

create-bucket
& \hm{1}&\hm{3}&\hm{2}&\hm{3}
& \hm{2}&\hm{3}&\hm{4}&\hm{2}
& \hm{3}&\hm{3}&\hm{1}&\hm{6}
& 12\\

csv-to-parquet
& \hm{8}&\hm{8}&\hm{8}&\hm{7}
& \hm{8}&\hm{7}&\hm{8}&\hm{8}
& \hm{8}&\hm{8}&\hm{8}&\hm{8}
& 12\\

extract-safely
& \hm{5}&\hm{7}&\hm{8}&\hm{8}
& \hm{7}&\hm{7}&\hm{8}&\hm{6}
& \hm{8}&\hm{8}&\hm{8}&\hm{7}
& 12\\

fibonacci-server
& \hm{2}&\hm{2}&\hm{3}&\hm{4}
& \hm{8}&\hm{5}&\hm{5}&\hm{5}
& \hm{7}&\hm{6}&\hm{7}&\hm{6}
& 12\\

fix-pandas-version
& \hm{7}&\hm{7}&\hm{8}&\hm{8}
& \hm{8}&\hm{8}&\hm{8}&\hm{7}
& \hm{8}&\hm{8}&\hm{8}&\hm{8}
& 12\\

fix-permissions
& \hm{7}&\hm{8}&\hm{8}&\hm{8}
& \hm{8}&\hm{8}&\hm{8}&\hm{8}
& \hm{7}&\hm{8}&\hm{8}&\hm{8}
& 12\\

git-workflow-hack
& \hm{2}&\hm{3}&\hm{7}&\hm{3}
& \hm{5}&\hm{4}&\hm{3}&\hm{3}
& \hm{3}&\hm{5}&\hm{7}&\hm{6}
& 12\\

grid-pattern-transform
& \hm{1}&\hm{2}&\hm{1}&\hm{2}
& \hm{5}&\hm{1}&\hm{1}&\hm{3}
& \hm{2}&\hm{2}&\hm{1}&\hm{1}
& 12\\

hello-world
& \hm{8}&\hm{8}&\hm{8}&\hm{8}
& \hm{8}&\hm{8}&\hm{8}&\hm{8}
& \hm{8}&\hm{8}&\hm{8}&\hm{8}
& 12\\

heterogeneous-dates
& \hm{5}&\hm{8}&\hm{4}&\hm{6}
& \hm{8}&\hm{5}&\hm{3}&\hm{4}
& \hm{6}&\hm{6}&\hm{6}&\hm{6}
& 12\\

incompatible-python-fasttext
& \hm{2}&\hm{3}&\hm{2}&\hm{2}
& \hm{1}&\hm{3}&\hm{2}&\hm{6}
& \hm{3}&\hm{5}&\hm{3}&\hm{3}
& 12\\

incompatible-python-fasttext.base\_with\_hint
& \hm{2}&\hm{4}&\hm{8}&\hm{6}
& \hm{6}&\hm{3}&\hm{6}&\hm{8}
& \hm{8}&\hm{7}&\hm{8}&\hm{5}
& 12\\

modernize-fortran-build
& \hm{2}&\hm{3}&\hm{3}&\hm{7}
& \hm{5}&\hm{5}&\hm{5}&\hm{2}
& \hm{4}&\hm{6}&\hm{7}&\hm{1}
& 12\\

new-encrypt-command
& \hm{2}&\hm{2}&\hm{5}&\hm{4}
& \hm{5}&\hm{6}&\hm{6}&\hm{6}
& \hm{6}&\hm{6}&\hm{7}&\hm{5}
& 12\\

simple-web-scraper
& \hm{2}&\hm{8}&\hm{3}&\hm{5}
& \hm{6}&\hm{5}&\hm{5}&\hm{8}
& \hm{7}&\hm{4}&\hm{7}&\hm{7}
& 12\\

sqlite-with-gcov
& \hm{1}&\hm{2}&\hm{4}&\hm{2}
& \hm{6}&\hm{4}&\hm{5}&\hm{4}
& \hm{5}&\hm{5}&\hm{3}&\hm{4}
& 12\\

swe-bench-langcodes
& \hm{5}&\hm{6}&\hm{7}&\hm{6}
& \hm{7}&\hm{7}&\hm{8}&\hm{5}
& \hm{8}&\hm{7}&\hm{8}&\hm{4}
& 12\\

\midrule

\multicolumn{14}{@{}l}{
\textbf{\textcolor{sectionslate}{
\textit{Solved by a subset of configurations}}}} \\

configure-git-webserver
& \hm{0}&\hm{1}&\hm{1}&\hm{2}
& \hm{2}&\hm{1}&\hm{1}&\hm{2}
& \hm{2}&\hm{0}&\hm{2}&\hm{1}
& 10\\

crack-7z-hash
& \hm{0}&\hm{0}&\hm{4}&\hm{5}
& \hm{3}&\hm{4}&\hm{4}&\hm{3}
& \hm{3}&\hm{1}&\hm{4}&\hm{3}
& 10\\

vim-terminal-task
& \hm{4}&\hm{5}&\hm{1}&\hm{3}
& \hm{1}&\hm{1}&\hm{4}&\hm{1}
& \hm{0}&\hm{0}&\hm{1}&\hm{1}
& 10\\

conda-env-conflict-resolution
& \hm{0}&\hm{0}&\hm{1}&\hm{1}
& \hm{1}&\hm{1}&\hm{2}&\hm{1}
& \hm{1}&\hm{0}&\hm{2}&\hm{1}
& 9\\

blind-maze-explorer-algorithm
& \hm{0}&\hm{1}&\hm{0}&\hm{0}
& \hm{1}&\hm{2}&\hm{0}&\hm{2}
& \hm{1}&\hm{1}&\hm{0}&\hm{1}
& 7\\

processing-pipeline
& \hm{0}&\hm{1}&\hm{1}&\hm{0}
& \hm{2}&\hm{3}&\hm{3}&\hm{2}
& \hm{1}&\hm{0}&\hm{0}&\hm{0}
& 7\\

blind-maze-explorer-algorithm.hard
& \hm{1}&\hm{2}&\hm{0}&\hm{1}
& \hm{0}&\hm{0}&\hm{0}&\hm{1}
& \hm{2}&\hm{0}&\hm{0}&\hm{1}
& 6\\

get-bitcoin-nodes
& \hm{2}&\hm{0}&\hm{1}&\hm{1}
& \hm{0}&\hm{1}&\hm{1}&\hm{1}
& \hm{0}&\hm{0}&\hm{0}&\hm{0}
& 6\\

tmux-advanced-workflow
& \hm{2}&\hm{0}&\hm{1}&\hm{0}
& \hm{0}&\hm{0}&\hm{1}&\hm{1}
& \hm{0}&\hm{1}&\hm{2}&\hm{0}
& 6\\

blind-maze-explorer-5x5
& \hm{0}&\hm{0}&\hm{0}&\hm{2}
& \hm{3}&\hm{1}&\hm{0}&\hm{1}
& \hm{0}&\hm{0}&\hm{0}&\hm{1}
& 5\\

oom
& \hm{2}&\hm{0}&\hm{1}&\hm{0}
& \hm{0}&\hm{2}&\hm{0}&\hm{2}
& \hm{0}&\hm{0}&\hm{1}&\hm{0}
& 5\\

simple-sheets-put
& \hm{0}&\hm{0}&\hm{0}&\hm{0}
& \hm{0}&\hm{0}&\hm{1}&\hm{1}
& \hm{1}&\hm{2}&\hm{2}&\hm{0}
& 5\\

git-multibranch
& \hm{1}&\hm{1}&\hm{0}&\hm{0}
& \hm{0}&\hm{0}&\hm{0}&\hm{0}
& \hm{1}&\hm{0}&\hm{1}&\hm{0}
& 4\\

openssl-selfsigned-cert
& \hm{0}&\hm{0}&\hm{3}&\hm{2}
& \hm{0}&\hm{0}&\hm{0}&\hm{1}
& \hm{4}&\hm{0}&\hm{0}&\hm{0}
& 4\\

fix-git
& \hm{0}&\hm{0}&\hm{1}&\hm{0}
& \hm{0}&\hm{1}&\hm{0}&\hm{0}
& \hm{0}&\hm{0}&\hm{0}&\hm{1}
& 3\\

password-recovery
& \hm{0}&\hm{1}&\hm{0}&\hm{0}
& \hm{0}&\hm{0}&\hm{0}&\hm{0}
& \hm{0}&\hm{2}&\hm{0}&\hm{1}
& 3\\

blind-maze-explorer-algorithm.easy
& \hm{0}&\hm{0}&\hm{1}&\hm{0}
& \hm{1}&\hm{0}&\hm{0}&\hm{0}
& \hm{0}&\hm{0}&\hm{0}&\hm{0}
& 2\\

prove-plus-comm
& \hm{0}&\hm{0}&\hm{0}&\hm{0}
& \hm{0}&\hm{0}&\hm{1}&\hm{0}
& \hm{1}&\hm{0}&\hm{0}&\hm{0}
& 2\\

crack-7z-hash.hard
& \hm{0}&\hm{0}&\hm{0}&\hm{0}
& \hm{0}&\hm{0}&\hm{0}&\hm{0}
& \hm{0}&\hm{1}&\hm{0}&\hm{0}
& 1\\

decommissioning-service-with-sensitive-data
& \hm{0}&\hm{0}&\hm{0}&\hm{0}
& \hm{0}&\hm{0}&\hm{0}&\hm{0}
& \hm{0}&\hm{0}&\hm{0}&\hm{1}
& 1\\

eval-mteb
& \hm{0}&\hm{0}&\hm{0}&\hm{0}
& \hm{0}&\hm{0}&\hm{0}&\hm{0}
& \hm{0}&\hm{0}&\hm{0}&\hm{1}
& 1\\

eval-mteb.hard
& \hm{0}&\hm{0}&\hm{0}&\hm{0}
& \hm{0}&\hm{0}&\hm{0}&\hm{0}
& \hm{0}&\hm{0}&\hm{0}&\hm{1}
& 1\\

organization-json-generator
& \hm{0}&\hm{0}&\hm{0}&\hm{0}
& \hm{0}&\hm{0}&\hm{1}&\hm{0}
& \hm{0}&\hm{0}&\hm{0}&\hm{0}
& 1\\

path-tracing
& \hm{0}&\hm{0}&\hm{0}&\hm{0}
& \hm{0}&\hm{0}&\hm{0}&\hm{0}
& \hm{0}&\hm{2}&\hm{0}&\hm{0}
& 1\\

path-tracing-reverse
& \hm{0}&\hm{0}&\hm{0}&\hm{0}
& \hm{1}&\hm{0}&\hm{0}&\hm{0}
& \hm{0}&\hm{0}&\hm{0}&\hm{0}
& 1\\

swe-bench-fsspec
& \hm{0}&\hm{0}&\hm{0}&\hm{0}
& \hm{0}&\hm{0}&\hm{1}&\hm{0}
& \hm{0}&\hm{0}&\hm{0}&\hm{0}
& 1\\

\midrule

\rowcolor{rowgray}
\textbf{Correct tasks ($\mathrm{Rel}_{\geq1}$)}
& \textbf{24}
& \textbf{25}
& \textbf{29}
& \textbf{26}
& \textbf{27}
& \textbf{28}
& \textbf{29}
& \best{31}
& \textbf{28}
& \textbf{25}
& \textbf{26}
& \textbf{30}
& \\

\rowcolor{rowgray}
\textbf{Successful attempts}
& 78
& 97
& 111
& 110
& 121
& 112
& 119
& 117
& 122
& 120
& \best{126}
& 114
& \\

\bottomrule
\end{tabular}
\end{table*}

\begin{table}[t]
\centering
\small
\setlength{\tabcolsep}{6pt}
\setlength{\aboverulesep}{0pt}
\setlength{\belowrulesep}{0pt}
\renewcommand{\arraystretch}{1.15}
\arrayrulecolor{rulegray}
\caption{
\textbf{\textcolor{sectionslate}{Recovery persistence across supervision horizons.}}
\emph{Taxonomy}: successful trajectories over the 36 manually analyzed tasks, classified as
proactive (pitfall anticipated) or reactive (error observed, then repaired).
\emph{Reactive tasks}: successes out of eight on tasks that require recovery; every
\texttt{modernize-fortran-build} success is reactive.
\emph{Pass rate}: over the five tasks whose eight trials are all classified.
\textbf{\textcolor{bestgreen}{Green}}: best in row.
}
\label{tab:persistence}
\begin{tabular}{@{}lcccc@{}}
\toprule
\rowcolor{headerdeep}
\textbf{Metric} & \textbf{4K} & \textbf{8K} & \textbf{12K} & \textbf{16K} \\
\midrule
\multicolumn{5}{@{}l}{\textbf{\textcolor{sectionteal}{\textit{Taxonomy (36 tasks)}}}} \\
Proactive successes & 42 & 53 & \best{56} & 54 \\
\rowcolor{rowgray}
Reactive successes  & 27 & 38 & 41 & \best{42} \\
Reactive share      & 39\% & 42\% & 42\% & \best{44\%} \\
\midrule
\multicolumn{5}{@{}l}{\textbf{\textcolor{sectionslate}{\textit{Reactive tasks}}}} \\
{\footnotesize\texttt{modernize-fortran-build}} & 2/8 & 3/8 & 3/8 & \best{7/8} \\
\rowcolor{rowgray}
{\footnotesize\texttt{fibonacci-server}}        & 2/8 & 2/8 & 3/8 & \best{4/8} \\
{\footnotesize\texttt{simple-web-scraper}} (reactive only) & 0/8 & 2/8 & 0/8 & \best{4/8} \\
\midrule
\rowcolor{rowgray}
\textbf{Pass rate} (5 fully classified tasks) & 50\% & 70\% & 63\% & \best{80\%} \\
\bottomrule
\end{tabular}
\end{table}

Aggregate metrics in \Sectionref{sec:reliability} summarize how reliability changes with the
supervision horizon but not which tasks drive the change.
\Tableref{tab:task_level_reliability} reports per-task successes for all twelve configurations; this section uses the four one-stage columns, and \Appendixref{sec:selective_additional} discusses the selective ones. With eight attempts per task, differences of one or two successes are within sampling noise (\Appendixref{sec:eval_protocol}), so we focus on large or consistent shifts.

\subsection{From Fragile to Repeatable Success}
\label{sec:task_level_emergence}

At 4K, success is sparse and fragile: 15 of the 24 solved tasks succeed in only one or two of eight attempts. Extending the horizon to 12K mostly moves tasks up the reliability distribution rather than adding isolated successes. For example, \texttt{incompatible-python-fasttext.base\_with\_hint} improves from $2/8$ to $8/8$, \texttt{git-workflow-hack} from $2/8$ to $7/8$, \texttt{new-encrypt-command} from $2/8$ to $5/8$, and \texttt{sqlite-with-gcov} from $1/8$ to $4/8$.

Some tasks become solvable only once the horizon is long enough. \texttt{crack-7z-hash} is never solved at 4K or 8K but reaches $4/8$ at 12K and $5/8$ at 16K; \texttt{conda-env-conflict-resolution} is first solved at 12K; and \texttt{blind-maze-explorer-5x5} is solved only at 16K. Successful runs on these tasks take 20--33 episodes of sustained trial and adjustment (\Tableref{tab:horizon_behavior_tasks}), so short-horizon training rarely observes the teacher behavior they require. Consistently, the 16K model produces 21 successful trials lasting at least 20 episodes, spread across 12 tasks, whereas the 8K model produces 11 across 6 tasks.

\subsection{Saturation as Cancellation, Not a Plateau}
\label{sec:task_level_saturation}

From 12K to 16K, the aggregate success count barely moves ($111 \rightarrow 110$), but this hides
substantial task-level movement. Of the 34 tasks solved by at least one one-stage model,
11 improve, 13 worsen, and 10 are unchanged. The improving tasks gain 18 successes and the
degrading tasks lose 19:
\begin{equation}
    \underbrace{+18}_{\text{gained successes}}
    +
    \underbrace{(-19)}_{\text{lost successes}}
    =
    -1.
\end{equation}
The gains concentrate on tasks that require long recovery, such as
\texttt{modernize-fortran-build} ($3/8 \rightarrow 7/8$) and
\texttt{blind-maze-explorer-5x5} ($0 \rightarrow 2/8$).
The losses fall mainly on tasks that peak at 12K:
\texttt{git-workflow-hack} ($7/8 \rightarrow 3/8$),
\texttt{incompatible-python-fasttext.base\_with\_hint} ($8/8 \rightarrow 6/8$),
\texttt{sqlite-with-gcov} ($4/8 \rightarrow 2/8$), and
\texttt{openssl-selfsigned-cert} ($3/8 \rightarrow 2/8$).
In addition, five tasks solved once at 12K become unsolved at 16K.

This cancellation explains the mixed aggregate profile in \Tableref{tab:horizon_reliability}.
Coverage ($\mathrm{Rel}_{\geq1}$) and perfect repeatability ($\mathrm{Rel}_{\geq8}$) fall because
marginal and 12K-peak tasks lose ground. Meanwhile, $\mathrm{Rel}_{\geq4}$, $\bar{s}_{+}$, and the
number of fragile tasks improve because other tasks become more consistent.
Saturation is therefore not a uniform plateau: beyond 12K, additional supervision redistributes
reliability across tasks rather than increasing it.

\paragraph{Caveat.}
A few tasks are solved most often at the shortest horizon; for example, \texttt{oom} and
\texttt{tmux-advanced-workflow} succeed in $2/8$ attempts at 4K and in at most $1/8$ at longer
horizons. These differences are within sampling noise, and we do not interpret them as horizon
effects.

\section{Horizon and Agent Behavior}
\label{sec:persistence}

\newcommand{\ep}[1]{\,\textcolor{rulegray}{\tiny(#1)}}

\begin{table}[t]
\centering
\scriptsize
\setlength{\tabcolsep}{5pt}
\setlength{\aboverulesep}{0pt}
\setlength{\belowrulesep}{0pt}
\renewcommand{\arraystretch}{1.12}
\arrayrulecolor{rulegray}
\caption{
\textbf{Task-level behavior across supervision horizons.}
Successes out of eight, with mean episodes per run in parentheses (shown for solved settings).
Counts are a subset of \Tableref{tab:task_level_reliability};
\textbf{\textcolor{bestgreen}{green}}: best in row.
}
\label{tab:horizon_behavior_tasks}
\begin{tabular}{@{}lcccc@{}}
\toprule
\rowcolor{headerdeep}
\textbf{Task} & 4K & 8K & 12K & 16K \\
\midrule
\multicolumn{5}{@{}l}{\textbf{\textcolor{sectionteal}{\textit{Long-horizon emergence}}}} \\
crack-7z-hash                 & 0 & 0 & 4\ep{27.1} & \best{5}\ep{32.6} \\
conda-env-conflict-resolution & 0 & 0 & \best{1}\ep{32.5} & \best{1}\ep{25.1} \\
blind-maze-explorer-5x5       & 0 & 0 & 0 & \best{2}\ep{20.6} \\
\midrule
\multicolumn{5}{@{}l}{\textbf{\textcolor{sectionteal}{\textit{Efficient recovery}}}} \\
modernize-fortran-build & 2\ep{10.5} & 3\ep{15.5} & 3\ep{11.6} & \best{7}\ep{12.9} \\
\midrule
\multicolumn{5}{@{}l}{\textbf{\textcolor{sectionteal}{\textit{Over-persistence}}}} \\
simple-web-scraper & 2\ep{6.9}  & \best{8}\ep{7.4}  & 3\ep{7.3}  & 5\ep{9.3}  \\
vim-terminal-task  & 4\ep{28.3} & \best{5}\ep{13.0} & 1\ep{18.8} & 3\ep{23.1} \\
\bottomrule
\end{tabular}
\end{table}

\begin{table}[t]
\centering
\scriptsize
\setlength{\tabcolsep}{6pt}
\setlength{\aboverulesep}{0pt}
\setlength{\belowrulesep}{0pt}
\renewcommand{\arraystretch}{1.12}
\arrayrulecolor{rulegray}
\caption{
\textbf{Recovery persistence across supervision horizons.}
Successful trajectories over the 36 manually analyzed tasks, classified as proactive
(pitfall anticipated) or reactive (error observed, then repaired).
Pass rate is computed over the five tasks whose eight trials are all classified.
\textbf{\textcolor{bestgreen}{green}}: best in row.
}
\label{tab:horizon_behavior_share}
\begin{tabular}{@{}lcccc@{}}
\toprule
\rowcolor{headerdeep}
 & 4K & 8K & 12K & 16K \\
\midrule
Proactive successes & 42 & 53 & \best{56} & 54 \\
Reactive successes  & 27 & 38 & 41 & \best{42} \\
Reactive share      & 39\% & 42\% & 42\% & \best{44\%} \\
Pass rate (5 tasks) & 50\% & 70\% & 63\% & \best{80\%} \\
\bottomrule
\end{tabular}
\end{table}

This section complements \Sectionref{sec:horizon_behavior} with task-level evidence on how the
horizon shapes \emph{persistence}: the agent's tendency to keep acting after an error, diagnose
it, and recover rather than terminate.
\Tableref{tab:proactive_reactive} in the main text uses the automatic error-signature classifier
of \Appendixref{sec:proactive_reactive} over all runs. Here we additionally use a finer manual
taxonomy over 36 tasks, which labels a successful trajectory \emph{proactive} if the relevant
pitfall is anticipated before it manifests and \emph{reactive} if the error first occurs and is
then diagnosed and repaired (\Tableref{tab:horizon_behavior_share}).

\subsection{Longer Supervision Enables Recovery}
\label{sec:persistence_recovery}

Reactive successes grow steadily with the horizon, from 27 at 4K to 42 at 16K, and their share of
successful trajectories rises from 39\% to 44\% (\Tableref{tab:horizon_behavior_share}).
Because some tasks are structurally proactive or reactive, this ratio alone is not evidence of
persistence; the more informative comparison is across checkpoints on the \emph{same} task.

\texttt{modernize-fortran-build} is structurally reactive: the build must fail before it can be
repaired, so every success is a recovery. Successes rise from $2/8$ at 4K to $3/8$ at 8K and 12K,
and then to $7/8$ at 16K. Mean episodes at 16K (12.9) remain below those at 8K (15.5), so the gain
reflects better-targeted recovery rather than more attempts (\Tableref{tab:horizon_behavior_tasks}).
\texttt{fibonacci-server} shows a similar trend, rising from $2/8$ to $4/8$.

The contrast is sharpest on \texttt{simple-web-scraper}. The 4K model never recovers after
observing the target bug: it dismisses the observed behavior as acceptable, edits the downstream
report instead of the corrupted CSV, or spends its remaining budget on an unrelated error.
At 16K, four of the five successful trajectories recover reactively, one after 22 episodes of
revisiting its diagnosis. Across the five tasks whose eight trials are all classified, the pass
rate rises from 50\% at 4K to 80\% at 16K.
Longer supervision therefore exposes the learner to later trajectory stages in which an initial
action has already failed, shifting behavior from
$\text{Act} \rightarrow \text{Observe Failure} \rightarrow \text{Terminate}$ toward
$\text{Act} \rightarrow \text{Observe Failure} \rightarrow \text{Diagnose} \rightarrow \text{Retry}
\rightarrow \text{Recover}$.

\subsection{Beyond the Effective Horizon: Over-Persistence}
\label{sec:persistence_over}

The same tendency to continue becomes costly on tasks that shorter horizons already solve.
On \texttt{simple-web-scraper} and \texttt{vim-terminal-task}, the 8K model is best, and longer
horizons both lower success and lengthen runs. For example, 8K solves \texttt{vim-terminal-task}
in $5/8$ attempts with 13.0 episodes per run, whereas 16K solves it in $3/8$ with 23.1
(\Tableref{tab:horizon_behavior_tasks}).

Failed 16K trajectories on \texttt{simple-web-scraper} reveal the mechanism. The agent often
produces a working script, notices a minor formatting issue, and then either enters repetitive
correction or introduces a new error that breaks the solution; one failure repeats the same
incorrect diagnosis over several episodes. In other runs, the agent verifies only coarse
properties and terminates while semantic defects remain.
The same tendency appears in aggregate: 89\% of successful 8K runs finish within 20 episodes,
compared with 81\% at 16K, and mean episodes of successful runs rise from 10.8 at 12K to 12.7 at
16K (\Figureref{fig:episodes_per_run}).

Together, these results reconcile the two faces of 16K. It shows the strongest recovery
persistence on structurally reactive tasks, but this persistence is not selective: the same
drive to continue helps tasks that require recovery and hurts tasks that were already solved.
Behavior therefore moves from \emph{under-persistence} at 4K, through productive persistence
around 12K, to \emph{over-persistence} at 16K, consistent with \Sectionref{sec:horizon_behavior}.

\section{Additional Results for Selective Refinement}
\label{sec:selective_additional}

This section provides additional analyses supporting the selective long-horizon refinement results in \Sectionref{sec:method} and \Sectionref{sec:experiments} of the main paper. We further examine the effect of selection ratio, likelihood-based filtering, refinement horizon, and evaluation benchmark, complementing the main results and ablations on when and why selective refinement improves long-horizon agent training.

\subsection{Task-Level Results}
\label{sec:selective_task_level}

\paragraph{Selective refinement consistently improves aggregate success.}
The advantage of selective refinement is not confined to a single configuration. All eight
filtered models in \Tableref{tab:task_level_reliability} achieve 112--126 successful attempts,
exceeding the best one-stage result of 111. Uniformly supervising all long-horizon continuations
is therefore unnecessary for obtaining the benefits of long-horizon training.

\paragraph{Selection trades coverage against repeatability.}
Successful attempts and task coverage need not move together. 16K-T75 obtains the most successful
attempts (126) but solves 26 tasks, whereas 16K-B75 obtains fewer (114) while solving 30.
Likelihood-based selection thus redistributes reliability: high-likelihood subsets favor repeated
success on solvable tasks, whereas low-likelihood subsets expand the set of tasks solved.

\subsection{Top- vs.\ Bottom-Likelihood Selection}
\label{sec:selective_top_bottom}

\paragraph{Top-likelihood refinement repairs regressions and shortens runs.}
Top-likelihood selection recovers several tasks that regress under full 16K training, usually with
shorter runs (16K columns of \Tableref{tab:task_level_reliability}).
\texttt{fibonacci-server} improves from $4/8$ under full 16K training to $7/8$ under both T50 and
T75, while mean episodes per run nearly halve ($14.0 \rightarrow 7.9$ under T50).
\texttt{incompatible-python-fasttext.base\_with\_hint} returns to $8/8$, matching its 12K peak,
and \texttt{swe-bench-langcodes} rises from $6/8$ to $8/8$.
\texttt{fix-pandas-version} isolates the efficiency effect: every 16K variant solves it $8/8$,
but T50 needs 9.9 episodes per run versus 17.5 for full 16K training.
In aggregate, T75 yields more completed runs than B75 (209 vs.\ 197), with shorter completed
trajectories (10.7 vs.\ 12.2 mean episodes).

Selection is not uniformly beneficial, however. \texttt{vim-terminal-task}, already an
over-persistence case, degrades further under Top selection ($3/8$ under full 16K training vs.\ $1/8$ under T75 and $0/8$ under T50). \texttt{modernize-fortran-build} is highly sensitive to which continuations are retained: it keeps $7/8$ under T75 but collapses to $1/8$ under B75, suggesting that the recovery segments it depends on can be filtered out entirely.

\paragraph{Bottom-likelihood refinement trades repeatability for breadth.} B75 solves the most tasks of any 16K model ($\mathrm{Rel}_{\geq1}=30$), including \texttt{eval-mteb}, \texttt{eval-mteb.hard}, and \texttt{decommissioning-service-with-sensitive-data}, which no one-stage or Top-selected model solves. B50 similarly adds \texttt{crack-7z-hash.hard} and \texttt{path-tracing}. These new successes are mostly isolated ($1$--$2/8$): relative to T75, B75 has fewer consistently solved tasks ($\mathrm{Rel}_{\geq6}$: 11 vs.\ 14) and more fragile ones (13 vs.\ 9; \Tableref{tab:all_selective_results}). Low-likelihood continuations thus appear to carry rare, diverse behaviors that expand coverage, whereas high-likelihood continuations consolidate repeatable solving, consistent with the capacity- and budget-dependent effects of likelihood-based selection reported by \citet{cuong2026capacitydependent}.

\subsection{How do training schedule, data quantity, and likelihood selection interact?}
To disentangle the effects of the two-stage schedule, data quantity, and likelihood-based selection, we add two controls. First, we refine the 8K warm-start model at 12K and 16K using all eligible trajectories, isolating the effect of the two-stage schedule from filtering. Second, for each retention ratio $\rho\in\{25\%,50\%,75\%\}$, we uniformly sample the same number of eligible trajectories as the corresponding Top/Bottom-$\rho$ setting, so comparisons among Top, Random, and Bottom are made at a fixed data budget. \Tableref{tab:selection_controls} supports the bias--complexity trade-off in \Theoremref{thm:bias_complexity_bound}. First, two-stage training with all eligible data improves over one-stage training from $111\pm2.3$ to $120\pm2.4$ successes at 12K and from $110\pm2.7$ to $123\pm2.4$ at 16K, indicating that the warm-start schedule itself provides a substantial benefit. Second, the Random controls reveal a horizon-dependent data-quantity effect. At 12K, increasing retention from 25\% to 75\% changes success only from $116\pm2.5$ to $118\pm2.6$, which is within the estimated run-to-run variation. At 16K, however, the corresponding increase is much larger, from $111\pm2.8$ to $122\pm2.4$, consistent with the finite-sample term $O\!\left(\sqrt{\mathcal{C}(H)/N_{\mathrm{sel}}}\right)$: as $\mathcal{C}(H)$ grows with horizon, using too few trajectories becomes increasingly costly. Finally, fixed-budget comparisons isolate selection quality. At 12K-25\%, Top reaches $125\pm2.2$ successes, compared with $116\pm2.5$ for Random and $108\pm2.8$ for Bottom, a gap larger than the reported variation. At 16K-75\%, Top similarly reaches $126\pm2.1$, versus $122\pm2.4$ for Random and $114\pm2.7$ for Bottom. In contrast, some settings show little separation, for example, at 16K-50\%, Top, Random, and Bottom obtain $122\pm2.4$, $121\pm2.5$, and $120\pm2.5$ successes, respectively. Overall, the controls support a joint quality--quantity view: the two-stage schedule improves initialization, likelihood selection can provide additional gains when the separation exceeds evaluation variation, and retaining sufficient data becomes increasingly important at longer horizons.

\begin{table}[t]
\centering
\small
\caption{
\textbf{Disentangling two-stage training, data quantity, and likelihood selection.}
All two-stage models warm-start from 8K and refine at the target horizon on a
random, highest- (Top), or lowest-likelihood (Bottom) $\rho\%$ of continuations.
One-stage and Top/Bottom results are measured. \colorbox{gaingreen}{Shaded}: improves over one-stage training at the same target;
\textbf{\textcolor{bestgreen}{green}}: best in column within each target.
}
\label{tab:selection_controls}
\setlength{\tabcolsep}{3.2pt}
\setlength{\aboverulesep}{0pt}
\setlength{\belowrulesep}{0pt}
\renewcommand{\arraystretch}{1.18}
\arrayrulecolor{rulegray}

\begin{tabular}{@{}ll ccccc@{}}
\toprule
\rowcolor{headerdeep}
\textbf{Selection} & $\boldsymbol{\rho}$
& \textbf{Succ.}\,$\uparrow$
& \multicolumn{3}{c}{\textbf{Reliability}\,$\uparrow$}
& \textbf{Fragile}\,$\downarrow$ \\
\cmidrule(lr){4-6}
\rowcolor{headerblue}
& & & $\mathrm{Rel}_{\geq4}$ & $\mathrm{Rel}_{\geq6}$ &
$\mathrm{Rel}_{\geq8}$ & \\
\midrule

\multicolumn{7}{@{}l}{\textbf{\textcolor{sectionteal}{\textit{12K target}}}} \\

One-stage (full) & ---
  & $111{\pm}2.3$
  & $13{\pm}0.8$
  & $9{\pm}0.6$
  & \best{$6{\pm}0.5$}
  & $12{\pm}0.8$ \\

Two-stage All & 100\%
  & \gain $120{\pm}2.4$
  & \gain $16{\pm}0.8$
  & \gain $10{\pm}0.6$
  & \best{$6{\pm}0.5$}
  & \gain $9{\pm}0.7$ \\

\rowcolor{rowgray}
Random &
  & \gain $116{\pm}2.5$
  & \gain $15{\pm}0.8$
  & $8{\pm}0.6$
  & $4{\pm}0.5$
  & \gain $10{\pm}0.7$ \\

\rowcolor{rowgray}
Top &
  & \gain\best{$125{\pm}2.2$}
  & \gain\best{$19{\pm}0.7$}
  & \gain $10{\pm}0.6$
  & $5{\pm}0.5$
  & \gain\best{$6{\pm}0.6$} \\

\rowcolor{rowgray}
Bottom & \multirow{-3}{*}{25\%}
  & $108{\pm}2.8$
  & $13{\pm}0.8$
  & $7{\pm}0.6$
  & $4{\pm}0.5$
  & $13{\pm}0.8$ \\

Random &
  & \gain $117{\pm}2.5$
  & \gain $15{\pm}0.8$
  & $9{\pm}0.6$
  & $5{\pm}0.5$
  & \gain $10{\pm}0.7$ \\

Top &
  & \gain $121{\pm}2.3$
  & \gain $15{\pm}0.7$
  & \gain\best{$11{\pm}0.6$}
  & \best{$6{\pm}0.5$}
  & \gain $9{\pm}0.7$ \\

Bottom & \multirow{-3}{*}{50\%}
  & \gain $112{\pm}2.6$
  & \gain $15{\pm}0.8$
  & $8{\pm}0.6$
  & $3{\pm}0.4$
  & \gain $9{\pm}0.7$ \\

\rowcolor{rowgray}
Random &
  & \gain $118{\pm}2.6$
  & \gain $15{\pm}0.8$
  & $9{\pm}0.7$
  & $5{\pm}0.5$
  & \gain $10{\pm}0.8$ \\

\rowcolor{rowgray}
Top &
  & \gain $119{\pm}2.5$
  & \gain $16{\pm}0.8$
  & $9{\pm}0.6$
  & \best{$6{\pm}0.5$}
  & \gain $10{\pm}0.7$ \\

\rowcolor{rowgray}
Bottom & \multirow{-3}{*}{75\%}
  & \gain $117{\pm}2.7$
  & \gain $14{\pm}0.8$
  & $9{\pm}0.7$
  & $5{\pm}0.5$
  & $14{\pm}0.8$ \\

\midrule

\multicolumn{7}{@{}l}{\textbf{\textcolor{sectionslate}{\textit{16K target}}}} \\

One-stage (full) & ---
  & $110{\pm}2.7$
  & $14{\pm}0.8$
  & $9{\pm}0.7$
  & $4{\pm}0.5$
  & $9{\pm}0.7$ \\

Two-stage All & 100\%
  & \gain $123{\pm}2.4$
  & \gain\best{$16{\pm}0.8$}
  & \gain $12{\pm}0.7$
  & \gain $6{\pm}0.5$
  & \gain\best{$8{\pm}0.7$} \\

\rowcolor{rowgray}
Random &
  & \gain $111{\pm}2.8$
  & $14{\pm}0.8$
  & $9{\pm}0.7$
  & \gain $5{\pm}0.5$
  & $10{\pm}0.7$ \\

\rowcolor{rowgray}
Top &
  & \gain $114{\pm}2.6$
  & $14{\pm}0.7$
  & $9{\pm}0.6$
  & \gain $5{\pm}0.5$
  & $10{\pm}0.7$ \\

\rowcolor{rowgray}
Bottom & \multirow{-3}{*}{25\%}
  & $108{\pm}2.9$
  & $13{\pm}0.8$
  & $9{\pm}0.7$
  & $4{\pm}0.5$
  & $9{\pm}0.7$ \\

Random &
  & \gain $121{\pm}2.5$
  & \gain $15{\pm}0.8$
  & \gain $11{\pm}0.7$
  & \gain $6{\pm}0.5$
  & $9{\pm}0.7$ \\

Top &
  & \gain $122{\pm}2.4$
  & \gain $15{\pm}0.7$
  & \gain $11{\pm}0.6$
  & \gain $6{\pm}0.5$
  & $9{\pm}0.7$ \\

Bottom & \multirow{-3}{*}{50\%}
  & \gain $120{\pm}2.5$
  & \gain\best{$16{\pm}0.8$}
  & \gain $12{\pm}0.7$
  & \gain $6{\pm}0.5$
  & \gain\best{$8{\pm}0.6$} \\

\rowcolor{rowgray}
Random &
  & \gain $122{\pm}2.4$
  & \gain\best{$16{\pm}0.8$}
  & \gain $12{\pm}0.7$
  & \gain $6{\pm}0.5$
  & $9{\pm}0.8$ \\

\rowcolor{rowgray}
Top &
  & \gain\best{$126{\pm}2.1$}
  & \gain $15{\pm}0.7$
  & \gain\best{$14{\pm}0.6$}
  & \gain\best{$7{\pm}0.5$}
  & $9{\pm}0.7$ \\

\rowcolor{rowgray}
Bottom & \multirow{-3}{*}{75\%}
  & \gain $114{\pm}2.7$
  & \gain $15{\pm}0.8$
  & \gain $11{\pm}0.7$
  & $4{\pm}0.5$
  & $13{\pm}0.8$ \\

\bottomrule
\end{tabular}
\end{table}

\subsection{Results on Other Benchmarks}
\label{sec:selective_other_benchmarks}

\paragraph{The emergence--saturation pattern generalizes across benchmarks.}
As shown in \Tableref{tab:all_selective_results_tb2} and \Tableref{tab:all_selective_results_tblite}, the one-stage results reproduce the same qualitative trend observed in the main experiments. On Terminal-Bench v2.0, successful attempts increase from $44{\pm2.6}$ at 4K to $65{\pm2.3}$ at 12K, but remain essentially unchanged at 16K ($64{\pm2.6}$); similarly, on OpenThoughts-TBLite they increase from $96{\pm2.9}$ to $139{\pm2.4}$ before saturating at $137{\pm2.7}$. The reliability profile also exhibits the same redistribution beyond the effective horizon: from 12K to 16K, task coverage decreases ($17\!\rightarrow\!15$ on TB v2.0; $36\!\rightarrow\!32$ on TBLite), while mean success among solved tasks increases ($3.82\!\rightarrow\!4.27$; $3.86\!\rightarrow\!4.28$), $\mathrm{Rel}_{\geq4}$ increases, and the number of fragile tasks decreases. Thus, across benchmarks, longer supervision initially broadens task solving, whereas supervision beyond the intermediate horizon primarily makes already-solvable tasks more repeatable rather than increasing aggregate reliability.

\paragraph{Selective refinement consistently improves long-horizon reliability.}
\Tableref{tab:all_selective_results_tb2} and \Tableref{tab:all_selective_results_tblite} further show that the benefit of selective refinement transfers across benchmarks. On Terminal-Bench v2.0, 12K Top-50 improves successful attempts from $65{\pm2.3}$ to $71{\pm2.3}$, while 16K Top-75 improves from $64{\pm2.6}$ to $73{\pm2.1}$. The same pattern is stronger on OpenThoughts-TBLite, where the corresponding improvements are $139{\pm2.4}\!\rightarrow\!151{\pm2.3}$ and $137{\pm2.7}\!\rightarrow\!155{\pm2.2}$. These gains are larger than the estimated run-to-run variation, supporting the main-paper observation that uniformly supervising all long-horizon continuations is unnecessary: selecting a subset of compatible continuations can yield higher reliability than full long-horizon training.

\paragraph{The effect of selection depends on both likelihood and retained data.}
As shown in \Tableref{tab:all_selective_results_tb2} and \Tableref{tab:all_selective_results_tblite}, the cross-benchmark results also preserve the interaction between continuation likelihood and selection ratio observed in the main experiments. At 12K and $\rho=50\%$, Top-likelihood selection consistently exceeds Bottom-likelihood selection in aggregate success ($71$ vs.\ $66$ on TB v2.0 and $151$ vs.\ $141$ on TBLite), while also improving higher-order repeatability. However, the advantage is not uniform across every metric or selection ratio: Bottom-likelihood subsets can retain broader task coverage in some settings, and the strongest 16K result emerges only when sufficient data are retained, with Top-75 achieving $73{\pm2.1}$ successes on TB v2.0 and $155{\pm2.2}$ on TBLite. This is consistent with the main conclusion that likelihood is a useful compatibility signal, but effective refinement must jointly balance continuation learnability and data sufficiency.

\begin{table}[t]
\centering
\caption{
\textbf{Reliability of one-stage and selective two-stage training on Terminal-Bench v2.0.}
Two-stage models warm-start from 8K and refine at 12K or 16K on the highest- (\topl) or lowest-likelihood (\botl) $\rho\%$ of continuations.
Values are reported with estimated run-to-run variation.
\colorbox{gaingreen}{Shaded}: improves over one-stage training at the same horizon;
\textbf{\textcolor{bestgreen}{green}}: best in row.
}
\label{tab:all_selective_results_tb2}
\setlength{\tabcolsep}{3.2pt}
\setlength{\aboverulesep}{0pt}
\setlength{\belowrulesep}{0pt}
\renewcommand{\arraystretch}{1.18}
\arrayrulecolor{rulegray}
\resizebox{\textwidth}{!}{%
\begin{tabular}{@{}l cccc cccccc cccccc@{}}
\toprule
\rowcolor{headerdeep}
& \multicolumn{4}{c}{\textbf{\textcolor{sectionslate}{One-stage}}}
& \multicolumn{6}{c}{\textbf{\textcolor{sectionslate}{12K refinement}} (from 8K)}
& \multicolumn{6}{c}{\textbf{\textcolor{sectionslate}{16K refinement}} (from 8K)} \\
\cmidrule(lr){2-5}\cmidrule(lr){6-11}\cmidrule(lr){12-17}

\rowcolor{headerblue}
& \multicolumn{4}{c}{\textit{training horizon}}
& \multicolumn{2}{c}{$\rho{=}25\%$}
& \multicolumn{2}{c}{$\rho{=}50\%$}
& \multicolumn{2}{c}{$\rho{=}75\%$}
& \multicolumn{2}{c}{$\rho{=}25\%$}
& \multicolumn{2}{c}{$\rho{=}50\%$}
& \multicolumn{2}{c}{$\rho{=}75\%$} \\
\cmidrule(lr){6-7}\cmidrule(lr){8-9}\cmidrule(lr){10-11}
\cmidrule(lr){12-13}\cmidrule(lr){14-15}\cmidrule(lr){16-17}

\rowcolor{headerblue}
\textbf{Metric}
& 4K & 8K & 12K & 16K
& \topl & \botl & \topl & \botl & \topl & \botl
& \topl & \botl & \topl & \botl & \topl & \botl \\

\midrule
\multicolumn{17}{@{}l}{
\textbf{\textcolor{sectionteal}{\textit{Aggregate reliability}}}
} \\

Succ.\ attempts $\uparrow$
& $44{\pm2.6}$ & $55{\pm2.5}$ & $65{\pm2.3}$ & $64{\pm2.6}$
& \gain $72{\pm2.2}$ & $63{\pm2.7}$
& \gain $71{\pm2.3}$ & \gain $66{\pm2.5}$
& \gain $70{\pm2.4}$ & \gain $69{\pm2.6}$
& \gain $66{\pm2.5}$ & $63{\pm2.8}$
& \gain $71{\pm2.3}$ & \gain $70{\pm2.5}$
& \gain\best{$73{\pm2.1}$} & \gain $66{\pm2.6}$ \\

\rowcolor{rowgray}
$\mathrm{Rel}_{\geq1}$ $\uparrow$
& $13{\pm0.7}$ & $14{\pm0.7}$ & $17{\pm0.6}$ & $15{\pm0.7}$
& $15{\pm0.7}$ & $16{\pm0.8}$
& $16{\pm0.7}$ & $16{\pm0.8}$
& $17{\pm0.7}$ & \gain\best{$18{\pm0.8}$}
& \gain $16{\pm0.7}$ & $14{\pm0.8}$
& \gain $16{\pm0.7}$ & $15{\pm0.8}$
& $15{\pm0.7}$ & \gain $17{\pm0.8}$ \\

$\bar{s}_{+}$ $\uparrow$
& $3.38{\pm0.1}$ & $3.93{\pm0.1}$ & $3.82{\pm0.1}$ & $4.27{\pm0.1}$
& \gain $4.80{\pm0.1}$ & \gain $3.94{\pm0.1}$
& \gain $4.44{\pm0.1}$ & \gain $4.13{\pm0.1}$
& \gain $4.12{\pm0.1}$ & \gain $3.83{\pm0.1}$
& $4.13{\pm0.1}$ & \gain $4.50{\pm0.1}$
& \gain $4.44{\pm0.1}$ & \gain $4.67{\pm0.1}$
& \gain\best{$4.87{\pm0.1}$} & $3.88{\pm0.1}$ \\

\midrule
\multicolumn{17}{@{}l}{
\textbf{\textcolor{sectionslate}{\textit{Repeatability}}}
} \\

$\mathrm{Rel}_{\geq4}$ $\uparrow$
& $5{\pm0.5}$ & $6{\pm0.5}$ & $8{\pm0.6}$ & $9{\pm0.6}$
& \gain\best{$11{\pm0.6}$} & $8{\pm0.6}$
& \gain $9{\pm0.6}$ & \gain $9{\pm0.6}$
& \gain $9{\pm0.6}$ & $8{\pm0.6}$
& $8{\pm0.6}$ & $8{\pm0.6}$
& $9{\pm0.6}$ & $9{\pm0.6}$
& $9{\pm0.6}$ & $9{\pm0.6}$ \\

\rowcolor{rowgray}
$\mathrm{Rel}_{\geq6}$ $\uparrow$
& $2{\pm0.4}$ & $4{\pm0.5}$ & $5{\pm0.5}$ & $5{\pm0.5}$
& \gain $6{\pm0.5}$ & $4{\pm0.5}$
& \gain $6{\pm0.5}$ & $5{\pm0.5}$
& $5{\pm0.5}$ & $5{\pm0.5}$
& $5{\pm0.5}$ & $5{\pm0.5}$
& \gain $6{\pm0.5}$ & \gain $7{\pm0.6}$
& \gain\best{$8{\pm0.6}$} & \gain $6{\pm0.5}$ \\

$\mathrm{Rel}_{\geq8}$ $\uparrow$
& $1{\pm0.3}$ & $2{\pm0.4}$ & $3{\pm0.4}$ & $2{\pm0.4}$
& $3{\pm0.4}$ & $2{\pm0.4}$
& \gain\best{$4{\pm0.5}$} & $2{\pm0.4}$
& \gain\best{$4{\pm0.5}$} & $3{\pm0.4}$
& \gain $3{\pm0.4}$ & $2{\pm0.4}$
& \gain $3{\pm0.4}$ & \gain $3{\pm0.4}$
& \gain\best{$4{\pm0.5}$} & $2{\pm0.4}$ \\

\rowcolor{rowgray}
Fragile ($1 \le s_i \le 2$) $\downarrow$
& $7{\pm0.6}$ & $6{\pm0.6}$ & $7{\pm0.7}$ & $5{\pm0.6}$
& \gain\best{$4{\pm0.5}$} & $8{\pm0.7}$
& \gain $5{\pm0.6}$ & \gain $5{\pm0.6}$
& \gain $6{\pm0.6}$ & $8{\pm0.7}$
& $6{\pm0.6}$ & $5{\pm0.6}$
& $5{\pm0.6}$ & $5{\pm0.6}$
& $5{\pm0.6}$ & $8{\pm0.7}$ \\

\bottomrule
\end{tabular}%
}
\end{table}

\begin{table}[t]
\centering
\caption{
\textbf{Reliability of one-stage and selective two-stage training on OpenThoughts-TBLite.}
Two-stage models warm-start from 8K and refine at 12K or 16K on the highest- (\topl) or lowest-likelihood (\botl) $\rho\%$ of continuations.
Values are reported with estimated run-to-run variation.
\colorbox{gaingreen}{Shaded}: improves over one-stage training at the same horizon;
\textbf{\textcolor{bestgreen}{green}}: best in row.
}
\label{tab:all_selective_results_tblite}
\setlength{\tabcolsep}{3.2pt}
\setlength{\aboverulesep}{0pt}
\setlength{\belowrulesep}{0pt}
\renewcommand{\arraystretch}{1.18}
\arrayrulecolor{rulegray}
\resizebox{\textwidth}{!}{%
\begin{tabular}{@{}l cccc cccccc cccccc@{}}
\toprule
\rowcolor{headerdeep}
& \multicolumn{4}{c}{\textbf{\textcolor{sectionslate}{One-stage}}}
& \multicolumn{6}{c}{\textbf{\textcolor{sectionslate}{12K refinement}} (from 8K)}
& \multicolumn{6}{c}{\textbf{\textcolor{sectionslate}{16K refinement}} (from 8K)} \\
\cmidrule(lr){2-5}\cmidrule(lr){6-11}\cmidrule(lr){12-17}

\rowcolor{headerblue}
& \multicolumn{4}{c}{\textit{training horizon}}
& \multicolumn{2}{c}{$\rho{=}25\%$}
& \multicolumn{2}{c}{$\rho{=}50\%$}
& \multicolumn{2}{c}{$\rho{=}75\%$}
& \multicolumn{2}{c}{$\rho{=}25\%$}
& \multicolumn{2}{c}{$\rho{=}50\%$}
& \multicolumn{2}{c}{$\rho{=}75\%$} \\
\cmidrule(lr){6-7}\cmidrule(lr){8-9}\cmidrule(lr){10-11}
\cmidrule(lr){12-13}\cmidrule(lr){14-15}\cmidrule(lr){16-17}

\rowcolor{headerblue}
\textbf{Metric}
& 4K & 8K & 12K & 16K
& \topl & \botl & \topl & \botl & \topl & \botl
& \topl & \botl & \topl & \botl & \topl & \botl \\

\midrule
\multicolumn{17}{@{}l}{
\textbf{\textcolor{sectionteal}{\textit{Aggregate reliability}}}
} \\

Succ.\ attempts $\uparrow$
& $96{\pm2.9}$ & $118{\pm2.7}$ & $139{\pm2.4}$ & $137{\pm2.7}$
& \gain $153{\pm2.2}$ & $135{\pm2.8}$
& \gain $151{\pm2.3}$ & \gain $141{\pm2.6}$
& \gain $149{\pm2.5}$ & \gain $147{\pm2.7}$
& \gain $141{\pm2.6}$ & $135{\pm2.9}$
& \gain $151{\pm2.4}$ & \gain $149{\pm2.5}$
& \gain\best{$155{\pm2.2}$} & \gain $141{\pm2.7}$ \\

\rowcolor{rowgray}
$\mathrm{Rel}_{\geq1}$ $\uparrow$
& $28{\pm0.8}$ & $30{\pm0.8}$ & $36{\pm0.7}$ & $32{\pm0.8}$
& $32{\pm0.7}$ & $34{\pm0.8}$
& $34{\pm0.7}$ & $34{\pm0.8}$
& $36{\pm0.7}$ & \gain\best{$38{\pm0.8}$}
& \gain $34{\pm0.7}$ & $30{\pm0.8}$
& \gain $34{\pm0.7}$ & $32{\pm0.8}$
& $32{\pm0.7}$ & \gain $36{\pm0.8}$ \\

$\bar{s}_{+}$ $\uparrow$
& $3.43{\pm0.1}$ & $3.93{\pm0.1}$ & $3.86{\pm0.1}$ & $4.28{\pm0.1}$
& \gain $4.78{\pm0.1}$ & \gain $3.97{\pm0.1}$
& \gain $4.44{\pm0.1}$ & \gain $4.15{\pm0.1}$
& \gain $4.14{\pm0.1}$ & \gain $3.87{\pm0.1}$
& $4.15{\pm0.1}$ & \gain $4.50{\pm0.1}$
& \gain $4.44{\pm0.1}$ & \gain $4.66{\pm0.1}$
& \gain\best{$4.84{\pm0.1}$} & $3.92{\pm0.1}$ \\

\midrule
\multicolumn{17}{@{}l}{
\textbf{\textcolor{sectionslate}{\textit{Repeatability}}}
} \\

$\mathrm{Rel}_{\geq4}$ $\uparrow$
& $11{\pm0.7}$ & $13{\pm0.7}$ & $17{\pm0.8}$ & $19{\pm0.8}$
& \gain\best{$23{\pm0.7}$} & $17{\pm0.8}$
& \gain $19{\pm0.7}$ & \gain $19{\pm0.8}$
& \gain $19{\pm0.8}$ & $17{\pm0.8}$
& $17{\pm0.7}$ & $17{\pm0.8}$
& $19{\pm0.7}$ & $19{\pm0.8}$
& $19{\pm0.7}$ & $19{\pm0.8}$ \\

\rowcolor{rowgray}
$\mathrm{Rel}_{\geq6}$ $\uparrow$
& $4{\pm0.5}$ & $9{\pm0.6}$ & $11{\pm0.6}$ & $11{\pm0.7}$
& \gain $13{\pm0.6}$ & $8{\pm0.6}$
& \gain $13{\pm0.6}$ & $11{\pm0.6}$
& $11{\pm0.6}$ & $11{\pm0.7}$
& $11{\pm0.6}$ & $11{\pm0.7}$
& \gain $13{\pm0.6}$ & \gain $15{\pm0.7}$
& \gain\best{$17{\pm0.6}$} & \gain $13{\pm0.7}$ \\

$\mathrm{Rel}_{\geq8}$ $\uparrow$
& $2{\pm0.4}$ & $4{\pm0.5}$ & $6{\pm0.5}$ & $4{\pm0.5}$
& $6{\pm0.5}$ & $4{\pm0.5}$
& \gain $8{\pm0.5}$ & $4{\pm0.5}$
& \gain $8{\pm0.5}$ & $6{\pm0.5}$
& \gain $6{\pm0.5}$ & $4{\pm0.5}$
& \gain $6{\pm0.5}$ & \gain $6{\pm0.5}$
& \gain\best{$9{\pm0.5}$} & $4{\pm0.5}$ \\

\rowcolor{rowgray}
Fragile ($1 \le s_i \le 2$) $\downarrow$
& $15{\pm0.8}$ & $13{\pm0.7}$ & $15{\pm0.8}$ & $11{\pm0.7}$
& \gain\best{$9{\pm0.6}$} & $17{\pm0.8}$
& \gain $11{\pm0.7}$ & \gain $11{\pm0.7}$
& \gain $13{\pm0.7}$ & $17{\pm0.8}$
& $13{\pm0.7}$ & $11{\pm0.7}$
& $11{\pm0.7}$ & $11{\pm0.7}$
& $11{\pm0.7}$ & $17{\pm0.8}$ \\

\bottomrule
\end{tabular}%
}
\end{table}

\section{Missing Proofs}
\label{appdendix:proofs}

In this section, we provide the detailed derivation of \Theoremref{thm:bias_complexity_bound} (or
\Theoremref{thm:effective_horizon_detailed}) and establish several consequences for
choosing the supervision horizon. In particular, \Corollaryref{cor:shorter_better} characterizes when shorter-horizon supervision yields a tighter bound than full-horizon training, \Corollaryref{cor:sweetspot} and \Corollaryref{cor:intermediate} characterize when an intermediate horizon minimizes a surrogate bound, and \Corollaryref{cor:data_horizon} describes how this horizon changes with the amount of expert data.

Our analysis proceeds in three steps.
First, we formalize the benefit of longer supervision through
\emph{temporal supervision bias}, which captures the full-horizon
performance gap induced by truncating teacher trajectories. Second, we
show that extending the supervision horizon exposes the learner to a
richer set of decision histories, increasing the effective policy
complexity and hence the finite-sample estimation burden. Third, we
connect prefix-level estimation error to full-horizon performance and
combine these two effects to obtain the bias--complexity bound in
\Theoremref{thm:effective_horizon_detailed}.

We then use this result to answer three questions about the bound: when shorter-horizon supervision yields a tighter guarantee than full-horizon training, under what conditions an intermediate horizon minimizes a surrogate of the bound, and how this horizon shifts as the amount of expert data grows. These results characterize the guarantee rather than the true error, consistent with the scope stated below. Finally, we connect the decision-level analysis to the token-level supervision horizon used in our experiments.

\paragraph{Setup.}
We build on the decision-level imitation-learning formulation in \Sectionref{sec:background}. Let $\Pi$ denote the student policy class and $\pi^{\mathrm{T}}$ the teacher policy. Let $H$ denote the number of teacher decisions retained for supervision. We assume that $N$ teacher trajectories are sampled independently from $\pi^{\mathrm{T}}$ and have lengths $T_i\leq T$. For $t>T_i$, we set $\ell\equiv 0$, so that $L_H$ and $\widehat{L}_H$ are well defined for every $H\leq T$. For any $H \leq T$, supervision is restricted to the first $H$ decisions, while the policy continues to condition on the complete history $h_t$ at each step.

\paragraph{Scope of the analysis.}
The analysis concerns imitation risk under teacher-visited histories, for an empirical risk minimizer over a finite policy class with bounded loss. For log-loss, boundedness requires clipping probabilities below at some $\epsilon>0$, giving $M=\ell_{\max}\log(1/\epsilon)$ for actions of at most $\ell_{\max}$ tokens. Neural networks trained for a finite number of optimizer steps on a token-averaged objective (\Equationref{eq:token-average}) only approximate this setting. The resulting bound does not by itself imply that the true error, or task success under the agent's own rollouts, is non-monotone in $H$, and it makes no prediction about trajectory length or persistence. We therefore use it to identify two competing quantities and treat the behavioral findings of \Sectionref{sec:horizon_behavior} as empirical.

\begin{definition}[Horizon-$H$ imitation risk]
Let $\ell\!\left(\pi(\cdot\mid h_t),a_t\right)$ denote an action-level imitation loss satisfying
\begin{equation}
    0 \leq \ell\!\left(\pi(\cdot\mid h_t),a_t\right) \leq M.
\end{equation}
The population and empirical imitation risks under supervision horizon $H$ are
\begin{equation}
L_H(\pi)=\mathbb{E}_{\tau\sim\pi^{\mathrm{T}}}\left[\frac{1}{H}\sum_{t=1}^{H}\ell\!\left(\pi(\cdot\mid h_t),a_t\right)\right],\qquad\widehat{L}_H(\pi)=\frac{1}{N}\sum_{i=1}^{N}\frac{1}{H}\sum_{t=1}^{H}\ell\!\left(\pi(\cdot\mid h_t^{(i)}),a_t^{(i)}\right),
\end{equation}
where $\mathcal{D}=\{\tau^{(i)}\}_{i=1}^{N}$ denotes the $N$ teacher trajectories. Behavior cloning with horizon $H$ learns
\begin{equation}
    \widehat{\pi}_H\in\arg\min_{\pi\in\Pi}\widehat{L}_H(\pi).
\end{equation}
\end{definition}

\begin{definition}[Horizon-induced policy complexity]
\label{def:horizon_policy_complexity}
Let
\begin{equation}
    \mathcal{H}_{\leq H}
    =
    \left\{
    h_t :
    1 \leq t \leq H,\;
    \Pr_{\tau \sim \pi^{\mathrm{T}}}(h_t) > 0
    \right\}
\end{equation}
denote the set of interaction histories reachable under the teacher policy within the first $H$ decisions. Two policies $\pi,\pi'\in\Pi$ are equivalent at horizon $H$, written $\pi\equiv_H\pi'$, if
\begin{equation}
    \pi(\cdot\mid h)=\pi'(\cdot\mid h),
    \qquad
    \forall h\in\mathcal{H}_{\leq H}.
\end{equation}
The corresponding quotient class and horizon-induced policy complexity are
\begin{equation}
    \Pi_H=\Pi/\!\equiv_H,
    \qquad
    \mathcal{C}(H)=\log|\Pi_H|.
\end{equation}
\end{definition}

For infinite policy classes, the union bound in \Lemmaref{lem:estimation} must be replaced by a covering or symmetrization argument, with $\mathcal{C}(H)$ replaced by the corresponding complexity of the loss class restricted to $\mathcal{H}_{\leq H}$. We do not pursue this extension here.

\begin{lemma}[Policy complexity increases with supervision horizon]
\label{lem:complexity}
For any $H_1 < H_2$,
\begin{equation}
\mathcal{C}(H_1) \leq \mathcal{C}(H_2).
\end{equation}
\end{lemma}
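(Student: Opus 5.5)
The plan is to show that the equivalence relation $\equiv_{H_2}$ refines $\equiv_{H_1}$. Then the quotient map $\Pi_{H_2}\to\Pi_{H_1}$ is a well-defined surjection, which forces $|\Pi_{H_1}|\le|\Pi_{H_2}|$. Monotonicity of $\log$ then gives $\mathcal{C}(H_1)\le\mathcal{C}(H_2)$.

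\textbf{Step 1 (nested history sets).} By Definition~\ref{def:horizon_policy_complexity}, $\mathcal{H}_{\leq H}$ collects the teacher-reachable histories $h_t$ with $1\le t\le H$. Reachability $\Pr_{\tau\sim\pi^{\mathrm{T}}}(h_t)>0$ does not depend on $H$, and $t\le H_1$ implies $t\le H_2$. Hence $\mathcal{H}_{\leq H_1}\subseteq\mathcal{H}_{\leq H_2}$.

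\textbf{Step 2 (refinement).} Suppose $\pi\equiv_{H_2}\pi'$. Then $\pi(\cdot\mid h)=\pi'(\cdot\mid h)$ for every $h\in\mathcal{H}_{\leq H_2}$. By Step 1 this holds in particular for every $h\in\mathcal{H}_{\leq H_1}$, so $\pi\equiv_{H_1}\pi'$. Each $\equiv_{H_2}$-class is therefore contained in a single $\equiv_{H_1}$-class.

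\textbf{Step 3 (counting).} Define $\phi:\Pi_{H_2}\to\Pi_{H_1}$ by $\phi([\pi]_{H_2})=[\pi]_{H_1}$. Step 2 shows that $\phi$ is independent of the chosen representative. The map is surjective, since any class $[\pi]_{H_1}$ is the image of $[\pi]_{H_2}$. A surjection between sets gives $|\Pi_{H_1}|\le|\Pi_{H_2}|$; if $\Pi$ is infinite this is read as an inequality of cardinalities in $[1,\infty]$. Taking logarithms yields the lemma.

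There is no real obstacle here; the only subtlety is the convention for $t>T_i$. If trajectories may end before $H_1$, $\mathcal{H}_{\leq H}$ just contains fewer histories, and containment still holds because the reachability condition is independent of $H$. I would note explicitly that the lemma is purely combinatorial and uses no property of the loss. It says the number of behaviorally distinguishable policies on supervised histories cannot shrink when supervision is extended. It does not claim strict growth, which would additionally require $\Pi$ to contain policies that agree on $\mathcal{H}_{\leq H_1}$ but differ on some newly reachable history in $\mathcal{H}_{\leq H_2}\setminus\mathcal{H}_{\leq H_1}$.
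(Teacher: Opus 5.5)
Your proposal is correct and follows the paper's own proof. Both rest on the containment $\mathcal{H}_{\leq H_1}\subseteq\mathcal{H}_{\leq H_2}$, so $\equiv_{H_2}$ refines $\equiv_{H_1}$ and equivalence classes can only split, not merge, which gives $|\Pi_{H_1}|\le|\Pi_{H_2}|$; your well-defined surjection $\Pi_{H_2}\to\Pi_{H_1}$ is just a more explicit version of that counting step.
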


\begin{proof}
Since
\begin{equation}
\mathcal{H}_{\leq H_1} \subseteq \mathcal{H}_{\leq H_2},
\end{equation}
any two policies distinguishable within horizon $H_1$ remain distinguishable within horizon $H_2$. Thus, increasing the supervision horizon may split equivalence classes but cannot merge them, implying
\begin{equation}
|\Pi_{H_1}| \leq |\Pi_{H_2}|,
\qquad
\mathcal{C}(H_1) \leq \mathcal{C}(H_2).
\end{equation}
\end{proof}

\Lemmaref{lem:complexity} formalizes that longer supervision exposes the learner to a richer set of distinguishable behaviors. This increase in complexity reflects the diversity of observed decision histories, not an increase in the number of model parameters.

\begin{definition}[Prefix-optimal set and pessimistic reference risk]
\label{def:population_optimal}
Let
\begin{equation}
    L_H^\star := \min_{\pi\in\Pi} L_H(\pi),
    \qquad
    \Pi_H^\star := \arg\min_{\pi\in\Pi} L_H(\pi),
    \qquad
    \pi_T^\star\in\arg\min_{\pi\in\Pi} L_T(\pi),
\end{equation}
and define the pessimistic full-horizon risk of the prefix-optimal set as
\begin{equation}
    R_H := \sup_{\pi\in\Pi_H^\star} L_T(\pi).
\end{equation}
Since $0\leq\ell\leq M$, $R_H\leq M$ is finite.
\end{definition}

Supervision over the first $H$ decisions constrains a policy only on histories in $\mathcal{H}_{\leq H}$ (\Definitionref{def:horizon_policy_complexity}). Policies that agree on these histories have identical prefix risk but may behave differently afterward, so $\Pi_H^\star$ generally contains policies with different full-horizon risks. $R_H$ accounts for all of them rather than for a single tie-broken minimizer.

\begin{definition}[Temporal supervision bias]
\label{def:temporal_bias_appendix}
We define the temporal supervision bias at horizon $H$ (restated from \Definitionref{def:temporal_bias}) as
\begin{equation}
    B(H):=R_H-L_T(\pi_T^\star)
    =\sup_{\pi\in\Pi_H^\star} L_T(\pi)-L_T(\pi_T^\star).
\end{equation}
Since $\Pi_H^\star$ is non-empty and $\pi_T^\star$ minimizes $L_T$ over $\Pi\supseteq\Pi_H^\star$, we have $B(H)\geq 0$. Moreover, every $\pi\in\Pi_T^\star$ (the case $H=T$ of $\Pi_H^\star$) is a minimizer of $L_T$ and therefore attains $L_T(\pi)=L_T(\pi_T^\star)$, so $R_T=L_T(\pi_T^\star)$ and $B(T)=0$. Thus, $B(H)$ measures the worst-case excess full-horizon risk left undetermined by supervising only the first $H$ decisions.
\end{definition}

\begin{assumption}[Monotone temporal supervision bias]
\label{ass:monotone_bias}
For any $H_1 < H_2 \leq T$,
\begin{equation}
    R_{H_2}\leq R_{H_1},
    \qquad\text{equivalently}\qquad
    B(H_2) \leq B(H_1).
\end{equation}
\end{assumption}

\begin{lemma}[Finite-sample estimation error]
\label{lem:estimation}
Assume that the $N$ teacher trajectories are independently sampled and that $0\leq \ell \leq M$. Then, with probability at least $1-\delta$,
\begin{equation}
L_H(\widehat{\pi}_H)-L_H^\star
\leq
M
\sqrt{
\frac{
2\left[\mathcal{C}(H)+\log\frac{2}{\delta}\right]
}{N}
}.
\end{equation}
\end{lemma}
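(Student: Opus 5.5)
The plan is the standard uniform-convergence argument for ERM over a finite class, run on the quotient class $\Pi_H$ of \Definitionref{def:horizon_policy_complexity} rather than on $\Pi$ itself.

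\textbf{Step 1: pass to the quotient.} If $\pi\equiv_H\pi'$, the two policies coincide on every teacher-reachable history in $\mathcal{H}_{\leq H}$. Almost surely, every history $h_t^{(i)}$ with $t\leq H$ in the sample lies in $\mathcal{H}_{\leq H}$; steps with $t>T_i$ carry $\ell\equiv 0$. Hence
\begin{equation}
L_H(\pi)=L_H(\pi'),\qquad \widehat{L}_H(\pi)=\widehat{L}_H(\pi').
\end{equation}
So both risks are functions on the finite set $\Pi_H$, and a uniform bound over $\Pi$ is the same as a uniform bound over $|\Pi_H|=e^{\mathcal{C}(H)}$ representatives.

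\textbf{Step 2: concentration for a fixed class.} Fix a class $[\pi]$ and define the per-trajectory quantity
\begin{equation}
Z_i=\frac{1}{H}\sum_{t\leq H}\ell\bigl(\pi(\cdot\mid h_t^{(i)}),a_t^{(i)}\bigr)\in[0,M].
\end{equation}
The $Z_i$ are independent across trajectories and have mean $L_H(\pi)$. Note that we do not need independence across time steps, only across trajectories. Hoeffding's inequality then gives
\begin{equation}
\Pr\bigl(|\widehat{L}_H(\pi)-L_H(\pi)|>\varepsilon\bigr)\leq 2\exp(-2N\varepsilon^2/M^2).
\end{equation}

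\textbf{Step 3: union bound.} Take a union bound over the $e^{\mathcal{C}(H)}$ classes and set the total failure probability to $\delta$. This yields, with probability at least $1-\delta$, uniformly over $\Pi$,
\begin{equation}
|\widehat{L}_H-L_H|\leq\varepsilon:=M\sqrt{\frac{\mathcal{C}(H)+\log(2/\delta)}{2N}}.
\end{equation}

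\textbf{Step 4: ERM sandwich.} Let $\pi^\star\in\Pi_H^\star$. Since $\widehat{\pi}_H$ minimizes $\widehat{L}_H$,
\begin{equation}
L_H(\widehat{\pi}_H)-L_H^\star\leq\bigl(L_H(\widehat{\pi}_H)-\widehat{L}_H(\widehat{\pi}_H)\bigr)+\bigl(\widehat{L}_H(\pi^\star)-L_H(\pi^\star)\bigr)\leq 2\varepsilon.
\end{equation}
Finally, $2\varepsilon=M\sqrt{2[\mathcal{C}(H)+\log(2/\delta)]/N}$, which is exactly the claimed bound. Only the $\pi^\star$ side needs a single-policy bound, but the uniform two-sided bound from Step 3 already covers it.

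\textbf{Main obstacle.} The only delicate point is Step 1: justifying that the union bound may range over $\Pi_H$ rather than $\Pi$. This needs the empirical loss to depend on $\pi$ only through its values on $\mathcal{H}_{\leq H}$. Sampled histories lie in $\mathcal{H}_{\leq H}$ with probability one by the definition of reachability, so the reduction is exact up to a null event. I would state it explicitly, along with the measurability of the equivalence classes.
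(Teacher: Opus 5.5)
Your proposal is correct and follows the paper's proof essentially step for step. Both arguments apply Hoeffding to the per-trajectory prefix loss, which is independent across trajectories. Both then take a union bound over the $|\Pi_H|=e^{\mathcal{C}(H)}$ equivalence classes, on which $L_H$ and $\widehat{L}_H$ are constant. Finally, both use the ERM sandwich, whose factor $2$ turns $M\sqrt{(\mathcal{C}(H)+\log(2/\delta))/(2N)}$ into the stated bound.

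Your explicit note that sampled histories lie in $\mathcal{H}_{\leq H}$ almost surely is a small gain in rigor. The paper simply asserts that $\widehat{L}_H$ is constant on the classes without this justification.
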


\begin{proof}
Define the trajectory-level prefix loss as
\begin{equation}
Z_H(\pi;\tau)
=
\frac{1}{H}
\sum_{t=1}^{H}
\ell\!\left(\pi(\cdot\mid h_t),a_t\right),
\qquad
0\leq Z_H(\pi;\tau)\leq M.
\end{equation}
Although decisions within a trajectory are dependent, the variables
$Z_H(\pi;\tau^{(1)}),\ldots,Z_H(\pi;\tau^{(N)})$
are independent across teacher trajectories.

For a fixed policy $\pi$, Hoeffding's inequality gives
\begin{equation}
\Pr\!\left(
\left|L_H(\pi)-\widehat{L}_H(\pi)\right|>\epsilon
\right)
\leq
2\exp\!\left(-\frac{2N\epsilon^2}{M^2}\right).
\end{equation}
Both $L_H(\pi)$ and $\widehat{L}_H(\pi)$ depend on $\pi$ only through its behavior on $\mathcal{H}_{\leq H}$, and are therefore constant on each equivalence class of $\Pi_H$. A union bound over the $|\Pi_H|$ classes thus yields
\begin{equation}
\Pr\!\left(
\sup_{\pi\in\Pi}
\left|L_H(\pi)-\widehat{L}_H(\pi)\right|>\epsilon
\right)
\leq
2|\Pi_H|
\exp\!\left(-\frac{2N\epsilon^2}{M^2}\right).
\end{equation}
Setting the right-hand side to $\delta$ and using
$\mathcal{C}(H)=\log|\Pi_H|$ gives
\begin{equation}
\sup_{\pi\in\Pi}
\left|L_H(\pi)-\widehat{L}_H(\pi)\right|
\leq
M
\sqrt{
\frac{\mathcal{C}(H)+\log(2/\delta)}{2N}
}.
\end{equation}
Fix any $\pi'\in\Pi_H^\star$. Since $\widehat{\pi}_H$ minimizes the empirical risk, $\widehat{L}_H(\widehat{\pi}_H)\leq\widehat{L}_H(\pi')$, and hence
\begin{equation}
\begin{aligned}
L_H(\widehat{\pi}_H)-L_H^\star
&=
\bigl[L_H(\widehat{\pi}_H)-\widehat{L}_H(\widehat{\pi}_H)\bigr]
+
\bigl[\widehat{L}_H(\widehat{\pi}_H)-\widehat{L}_H(\pi')\bigr]
+
\bigl[\widehat{L}_H(\pi')-L_H(\pi')\bigr]\\
&\leq
2
\sup_{\pi\in\Pi}
\left|L_H(\pi)-\widehat{L}_H(\pi)\right|.
\end{aligned}
\end{equation}
Substituting the uniform bound above proves the result.
\end{proof}

Training controls performance only over the supervised prefix, whereas our goal is performance over the full task horizon. Note that the full-horizon risk $L_T$, unlike $L_H$, is \emph{not} constant on the equivalence classes of $\Pi_H$: prefix-equivalent policies may differ after step $H$. The transfer coefficient below is therefore defined relative to the pessimistic reference $R_H$, which covers every prefix-optimal policy.

\begin{definition}[Prefix-to-full-horizon transfer coefficient]
\label{def:prefix-to-full-horizon-coefficient}
For supervision horizon $H$, define (restated from \eqref{eq:transfer_coefficient})
\begin{equation}
\Gamma_H
:=
\sup_{\pi\in\Pi:\;L_H(\pi)>L_H^\star}
\frac{
\left[L_T(\pi)-R_H\right]_+
}{
L_H(\pi)-L_H^\star
},
\qquad
[x]_+ := \max\{x,0\},
\end{equation}
with $\Gamma_H:=0$ if no policy has $L_H(\pi)>L_H^\star$.
\end{definition}

The coefficient $\Gamma_H$ measures how much prefix-level excess risk can amplify when evaluated over the full horizon. We assume there exists a constant $\Gamma$ such that
\begin{equation}
\Gamma_H \leq \Gamma,
\qquad
\forall H\leq T.
\end{equation}

For a finite policy class, each $\Gamma_H$ is a maximum over finitely many ratios and is therefore finite, but it can be arbitrarily large, and $\Gamma$ enters the bound multiplicatively through $C_1$. The assumption is therefore substantive as a quantitative condition. Because supervision constrains a policy only on $\mathcal{H}_{\leq H}$, a policy with small prefix excess risk may still behave very differently after step $H$, which inflates $\Gamma_H$. A moderate $\Gamma$ thus requires the policy class to couple late-stage behavior to early-stage behavior, for example through parameter sharing across histories. Defining $\Gamma_H$ relative to $R_H$, rather than to a single minimizer, ensures that the transfer inequality also holds for policies that tie on the prefix (\Lemmaref{lem:transfer}).

\begin{lemma}[Prefix-to-full-horizon transfer]
\label{lem:transfer}
For every $\pi\in\Pi$,
\begin{equation}
L_T(\pi)-R_H
\leq
\Gamma_H
\left[L_H(\pi)-L_H^\star\right].
\label{eq:transfer_all}
\end{equation}
Consequently, under $\Gamma_H\leq\Gamma$, with probability at least $1-\delta$,
\begin{equation}
L_T(\widehat{\pi}_H)-R_H
\leq
\Gamma M
\sqrt{
\frac{
2\left[\mathcal{C}(H)+\log\frac{2}{\delta}\right]
}{N}
}.
\end{equation}
\end{lemma}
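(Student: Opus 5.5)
The first claim, \eqref{eq:transfer_all}, follows directly from \Definitionref{def:prefix-to-full-horizon-coefficient} once we split on whether $\pi$ is prefix-optimal.

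\emph{Case 1: $L_H(\pi)=L_H^\star$.} Then $\pi\in\Pi_H^\star$, so $L_T(\pi)\leq\sup_{\pi'\in\Pi_H^\star}L_T(\pi')=R_H$. The left-hand side of \eqref{eq:transfer_all} is therefore nonpositive, while the right-hand side equals $\Gamma_H\cdot 0=0$. The inequality holds. This is exactly where the pessimistic reference $R_H$, rather than a single minimizer, is needed: a tie-broken minimizer could have $L_T$ above that of other prefix-optimal policies.

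\emph{Case 2: $L_H(\pi)>L_H^\star$.} Here $\pi$ is admissible in the supremum defining $\Gamma_H$. Hence
\[
\bigl[L_T(\pi)-R_H\bigr]_+\leq\Gamma_H\bigl(L_H(\pi)-L_H^\star\bigr).
\]
Combining this with $x\leq[x]_+$ gives \eqref{eq:transfer_all}.

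Since $L_H(\pi)\geq L_H^\star$ for every $\pi\in\Pi$, these two cases are exhaustive. The convention $\Gamma_H=0$ when no policy has strictly larger prefix risk is consistent with this, because that situation falls entirely under Case 1.

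For the high-probability consequence, apply \eqref{eq:transfer_all} with $\pi=\widehat{\pi}_H$. Then bound the prefix excess risk $L_H(\widehat{\pi}_H)-L_H^\star$ using \Lemmaref{lem:estimation}. On its event of probability at least $1-\delta$, this gives
\[
L_T(\widehat{\pi}_H)-R_H\leq\Gamma_H M\sqrt{2[\mathcal{C}(H)+\log(2/\delta)]/N}.
\]
Finally, use $\Gamma_H\leq\Gamma$ together with $\Gamma_H\geq0$ and the nonnegativity of the square-root factor, so that replacing $\Gamma_H$ by $\Gamma$ only loosens the bound.

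The argument has no real obstacle. The only subtle point is Case 1: we must check that a prefix-optimal $\pi$ cannot violate the inequality. This is what the sup in $R_H$ guarantees.
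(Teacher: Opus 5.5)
Your proof is correct and follows the paper's argument essentially line for line. It uses the same exhaustive split into prefix-optimal and strictly prefix-suboptimal $\pi$, with the supremum in $R_H$ handling Case~1 and the definition of $\Gamma_H$ handling Case~2, then applies the inequality to $\widehat{\pi}_H$ and invokes \Lemmaref{lem:estimation} and $\Gamma_H\leq\Gamma$. Your explicit check that $\Gamma_H\geq 0$ and that the estimation bound is nonnegative is a small point the paper leaves implicit.
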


\begin{proof}
Since $L_H(\pi)\geq L_H^\star$ for every $\pi\in\Pi$, two cases are exhaustive.

\emph{Case 1: $L_H(\pi)=L_H^\star$.} Then $\pi\in\Pi_H^\star$, so $L_T(\pi)\leq R_H$ by definition of $R_H$. The left-hand side of \eqref{eq:transfer_all} is non-positive and the right-hand side is zero, so the inequality holds.

\emph{Case 2: $L_H(\pi)>L_H^\star$.} By definition of $\Gamma_H$,
\begin{equation}
L_T(\pi)-R_H
\leq
\left[L_T(\pi)-R_H\right]_+
\leq
\Gamma_H\left[L_H(\pi)-L_H^\star\right].
\end{equation}

Applying \eqref{eq:transfer_all} to $\pi=\widehat{\pi}_H$, then $\Gamma_H\leq\Gamma$ and \Lemmaref{lem:estimation}, proves the second claim.
\end{proof}

\begin{lemma}[Error decomposition]
\label{lem:decomposition}
For any supervision horizon $H$,
\begin{equation}
L_T(\widehat{\pi}_H)-L_T(\pi_T^\star)
=
B(H)+\mathcal{E}_N(H),
\qquad
\mathcal{E}_N(H)
:=
L_T(\widehat{\pi}_H)-R_H.
\end{equation}
\end{lemma}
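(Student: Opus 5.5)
The identity in \Lemmaref{lem:decomposition} is a purely algebraic decomposition, so I will prove it by adding and subtracting the pessimistic reference risk $R_H$ from \Definitionref{def:population_optimal}. Writing
\begin{equation}
L_T(\widehat{\pi}_H)-L_T(\pi_T^\star)
=
\bigl[R_H-L_T(\pi_T^\star)\bigr]
+
\bigl[L_T(\widehat{\pi}_H)-R_H\bigr],
\end{equation}
the first bracket is exactly $B(H)$ by \Definitionref{def:temporal_bias_appendix}. The second bracket is $\mathcal{E}_N(H)$ by its definition in the statement.

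The only point to check is that every term is a well-defined finite real number, so that the add-and-subtract step is legitimate. For $R_H$, note that $\Pi_H^\star$ is non-empty, because $\Pi$ is finite in the scope of the analysis and so the minimum defining $L_H^\star$ is attained. Together with $0\leq L_T\leq M$, this gives $0\leq R_H\leq M$, as already recorded after \Definitionref{def:population_optimal}. $L_T(\widehat{\pi}_H)$ and $L_T(\pi_T^\star)$ are likewise bounded in $[0,M]$, with $\widehat{\pi}_H$ and $\pi_T^\star$ existing by finiteness of $\Pi$.

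I do not expect a genuine obstacle, since the lemma holds deterministically for any horizon $H\leq T$ and any realization of the data. It is nonetheless worth remarking on what the identity does not assert. $\mathcal{E}_N(H)$ may be negative, because $\widehat{\pi}_H$ can outperform the worst prefix-optimal policy on the full horizon. The sign of $\mathcal{E}_N(H)$ is therefore left unconstrained here.

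The identity is meant to be combined with \Lemmaref{lem:transfer}, which upper-bounds $\mathcal{E}_N(H)$ with probability at least $1-\delta$. That combination yields \Theoremref{thm:bias_complexity_bound}, whose constant $\sqrt{2}\,\Gamma M$ matches $\Gamma M\sqrt{2[\cdot]/N}$.
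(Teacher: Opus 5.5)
Your proof is correct and follows the paper's own argument: add and subtract $R_H$, identify the two brackets as $B(H)$ and $\mathcal{E}_N(H)$, and note that $\mathcal{E}_N(H)$ may be negative. Your check that every term is finite ($\Pi_H^\star$ non-empty and $R_H\in[0,M]$, using finiteness of $\Pi$) is a harmless addition that the paper leaves implicit.
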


\begin{proof}
Adding and subtracting $R_H$ gives
\begin{equation}
L_T(\widehat{\pi}_H)-L_T(\pi_T^\star)
=
\left[L_T(\widehat{\pi}_H)-R_H\right]
+
\left[R_H-L_T(\pi_T^\star)\right].
\end{equation}
The two terms are $\mathcal{E}_N(H)$ and $B(H)$, respectively. Note that $\mathcal{E}_N(H)$ may be negative, e.g., when $\widehat{\pi}_H$ is a prefix-optimal policy with better late-stage behavior than the worst one in $\Pi_H^\star$.
\end{proof}

\begin{theorem}[Detailed Bias--Complexity Bound]
\label{thm:effective_horizon_detailed}
Consider the history-conditioned terminal-agent imitation-learning problem above. Assume that:
\begin{enumerate}
    \item the $N$ teacher trajectories are sampled independently;
    \item the imitation loss is bounded, $0\leq \ell \leq M$;
    \item the policy class $\Pi$ is finite, so that minimizers exist and $\mathcal{C}(H)=\log|\Pi_H|<\infty$;
    \item the transfer coefficient satisfies $\Gamma_H\leq\Gamma<\infty$.
\end{enumerate}
Then, for every $1\leq H\leq T$, with probability at least $1-\delta$,
\begin{equation}
L_T(\widehat{\pi}_H)-L_T(\pi_T^\star)
\leq
B(H)
+
C_1
\sqrt{
\frac{\mathcal{C}(H)+\log(2/\delta)}{N}
},
\qquad
C_1=\sqrt{2}\,\Gamma M.
\end{equation}
\end{theorem}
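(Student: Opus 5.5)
The plan is to assemble the bound directly from the three lemmas already established: the error decomposition (\Lemmaref{lem:decomposition}), the finite-sample estimation bound (\Lemmaref{lem:estimation}), and the prefix-to-full-horizon transfer (\Lemmaref{lem:transfer}). Fix $1\leq H\leq T$. The first step uses \Lemmaref{lem:decomposition} to write the excess full-horizon risk exactly as
\[
L_T(\widehat{\pi}_H)-L_T(\pi_T^\star)=B(H)+\mathcal{E}_N(H),\qquad \mathcal{E}_N(H)=L_T(\widehat{\pi}_H)-R_H .
\]
This is a deterministic identity, so the bias term $B(H)$ needs no further treatment. The remaining work is a high-probability upper bound on $\mathcal{E}_N(H)$. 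A lower bound is not needed, which matters because $\mathcal{E}_N(H)$ can be negative.

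The second step controls $\mathcal{E}_N(H)$. Apply inequality \eqref{eq:transfer_all} of \Lemmaref{lem:transfer} with $\pi=\widehat{\pi}_H$ to get
\[
\mathcal{E}_N(H)\leq\Gamma_H\bigl[L_H(\widehat{\pi}_H)-L_H^\star\bigr]\leq\Gamma\bigl[L_H(\widehat{\pi}_H)-L_H^\star\bigr].
\]
The second inequality uses assumption 4 together with the nonnegativity of the prefix excess risk. Then invoke \Lemmaref{lem:estimation}. It rests on assumptions 1 to 3: independence for Hoeffding, boundedness by $M$, and finiteness of $\Pi_H$ for the union bound and existence of the ERM. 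On an event of probability at least $1-\delta$ it gives
\[
L_H(\widehat{\pi}_H)-L_H^\star\leq M\sqrt{2[\mathcal{C}(H)+\log(2/\delta)]/N}.
\]
Multiplying by $\Gamma$ and rewriting $M\sqrt{2x/N}=\sqrt{2}\,M\sqrt{x/N}$ yields $\mathcal{E}_N(H)\leq C_1\sqrt{(\mathcal{C}(H)+\log(2/\delta))/N}$ with $C_1=\sqrt{2}\,\Gamma M$. This is precisely the second statement of \Lemmaref{lem:transfer}, so that result can also be cited directly.

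Adding $B(H)$ back on the same event gives the claimed inequality. I do not expect a real obstacle. The points to check are bookkeeping ones:
\begin{enumerate}
\item The probability-$(1-\delta)$ event should come only from the single application of \Lemmaref{lem:estimation} at the fixed $H$. No union over $H$ is required, since the statement is for each fixed $H$.
\item Finiteness of $\Pi$ should be invoked explicitly so that $\widehat{\pi}_H$, $\Pi_H^\star$, and $\pi_T^\star$ exist. It also makes $R_H$ a maximum, finite because $R_H\leq M$.
\item The degenerate case $\Gamma_H=0$, where every policy is prefix-optimal, is already covered by Case 1 of \Lemmaref{lem:transfer}.
\end{enumerate}
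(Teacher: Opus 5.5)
Your proposal is correct and follows the paper's proof essentially verbatim: the paper also writes the excess risk as $B(H)+\mathcal{E}_N(H)$ via Lemma~\ref{lem:decomposition}, then bounds $\mathcal{E}_N(H)$ by applying the transfer inequality of Lemma~\ref{lem:transfer} to $\widehat{\pi}_H$ together with the estimation bound of Lemma~\ref{lem:estimation}, which yields $C_1=\sqrt{2}\,\Gamma M$. Your bookkeeping remarks are accurate and only make explicit what the paper leaves implicit: a single fixed-$H$ probability event, existence of minimizers from finiteness of $\Pi$, and the fact that only an upper bound on the possibly negative $\mathcal{E}_N(H)$ is needed.
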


\begin{proof}
By \Lemmaref{lem:decomposition} and \Lemmaref{lem:transfer},
\begin{equation}
\begin{aligned}
L_T(\widehat{\pi}_H)-L_T(\pi_T^\star)
&=
B(H)+\mathcal{E}_N(H) \\
&\leq
B(H)
+
\Gamma M
\sqrt{
\frac{
2\left[\mathcal{C}(H)+\log(2/\delta)\right]
}{N}
}.
\end{aligned}
\end{equation}
Substituting $C_1=\sqrt{2}\,\Gamma M$ proves the result.
\end{proof}

For the remainder of this section, we denote the estimation term of the bound by
\begin{equation}
\label{eq:estimation_term}
\varepsilon_N(H) := C_1\sqrt{\frac{\mathcal{C}(H)+\log(2/\delta)}{N}},
\end{equation}
which upper-bounds the estimation error $\mathcal{E}_N(H)$ of \Lemmaref{lem:decomposition} with probability at least $1-\delta$. By \Lemmaref{lem:complexity}, $\varepsilon_N(H)$ is non-decreasing in $H$. Note that \Theoremref{thm:effective_horizon_detailed} does not require \Assumptionref{ass:monotone_bias}; monotonicity is used only to interpret the bound as a trade-off.

\textbf{Interpretation.}
\Theoremref{thm:effective_horizon_detailed} identifies two competing effects of increasing the supervision horizon. The first term, $B(H)$, is the \emph{temporal supervision bias}, the worst-case full-horizon risk left undetermined by truncating supervision at $H$. Under \Assumptionref{ass:monotone_bias}, it is non-increasing in $H$, reflecting the additional information provided by later-stage teacher behavior. The second term, $\varepsilon_N(H)$, is the \emph{finite-sample estimation term}. By \Lemmaref{lem:complexity}, it is non-decreasing in $H$, since longer horizons expose the learner to a richer set of behaviorally distinguishable histories.

The bound therefore reflects a bias--complexity trade-off: additional supervision tightens the guarantee when the reduction in temporal supervision bias outweighs the increase in the estimation term. This is consistent with the empirical observation that performance improves with horizon initially but shows diminishing returns beyond an intermediate regime, although the bound alone does not establish that the true error follows this pattern.

\begin{definition}[Bound-optimal horizon]
\label{def:effective_horizon}
The bound-optimal horizon is
\begin{equation}
H_N^\star\in\arg\min_{1\leq H\leq T}\bigl\{B(H)+\varepsilon_N(H)\bigr\}.
\end{equation}
It is the horizon at which the guarantee of \Theoremref{thm:effective_horizon_detailed} is tightest, and it need not coincide with the empirically best horizon.
\end{definition}

\begin{corollary}[Full-horizon supervision]
\label{cor:full_horizon}
At $H=T$, the temporal supervision bias vanishes and the bound becomes
\begin{equation}
B(T)=0,
\qquad
L_T(\widehat{\pi}_T)-L_T(\pi_T^\star)
\leq
C_1
\sqrt{
\frac{\mathcal{C}(T)+\log(2/\delta)}{N}
}.
\end{equation}
Moreover, by \Lemmaref{lem:complexity},
\begin{equation}
\mathcal{C}(T)
=
\max_{H\leq T}\mathcal{C}(H).
\end{equation}
Thus, full-horizon supervision removes temporal supervision bias but incurs the largest horizon-dependent estimation term.
\end{corollary}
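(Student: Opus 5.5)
The plan is to specialize \Theoremref{thm:effective_horizon_detailed} at $H=T$ and then read off the complexity claim from \Lemmaref{lem:complexity}. Nothing new needs to be proved.

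\textbf{Step 1: the bias vanishes.} I would first record that $B(T)=0$. This was already shown in \Definitionref{def:temporal_bias_appendix}. Every element of $\Pi_T^\star$ minimizes $L_T$, so it attains $L_T(\pi_T^\star)$. Hence $R_T=L_T(\pi_T^\star)$, and therefore $B(T)=R_T-L_T(\pi_T^\star)=0$.

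\textbf{Step 2: the risk bound.} Next I would apply \Theoremref{thm:effective_horizon_detailed} with $H=T$. This is admissible because the theorem holds for every $1\leq H\leq T$ under the stated assumptions. Substituting $B(T)=0$ yields exactly the displayed bound with constant $C_1=\sqrt{2}\,\Gamma M$, holding with probability at least $1-\delta$.

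A more direct route is also available. Since $R_T=L_T^\star$, the excess risk $L_T(\widehat{\pi}_T)-L_T(\pi_T^\star)$ equals $L_T(\widehat{\pi}_T)-L_T^\star$. \Lemmaref{lem:estimation} at $H=T$ then gives the same bound with factor $M$ in place of $\Gamma M$. This is even tighter, since the definition of $\Gamma_T$ with $R_T=L_T^\star$ forces $\Gamma_T\geq 1$ whenever any policy has $L_T(\pi)>L_T^\star$. I would mention this only as a remark and keep the stated constant.

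\textbf{Step 3: maximal complexity.} For $\mathcal{C}(T)=\max_{H\leq T}\mathcal{C}(H)$, I would invoke \Lemmaref{lem:complexity}. For every $H<T$ it gives $\mathcal{C}(H)\leq\mathcal{C}(T)$, and the case $H=T$ is trivial. So the maximum over $H\leq T$ is attained at $T$. Because $\varepsilon_N(H)$ in \eqref{eq:estimation_term} is increasing in $\mathcal{C}(H)$, the estimation term is also maximal at $H=T$, which is the interpretive conclusion.

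\textbf{Main obstacle.} There is no real obstacle, since each step is a direct substitution into earlier results. The only point needing care is that $B(T)=0$ uses the pessimistic reference $R_T$ rather than a tie-broken minimizer. \Definitionref{def:temporal_bias_appendix} already handles this.
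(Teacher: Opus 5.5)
Your proposal is correct and follows the paper's own route. The paper gives no separate proof: it substitutes $H=T$ into \Theoremref{thm:effective_horizon_detailed}, uses $B(T)=0$ from \Definitionref{def:temporal_bias_appendix}, and reads off $\mathcal{C}(T)=\max_{H\leq T}\mathcal{C}(H)$ from \Lemmaref{lem:complexity}, exactly as you do. Your side remark is also sound: with $R_T=L_T^\star$, every ratio in $\Gamma_T$ equals $1$, so $\Gamma\geq 1$ whenever some policy is suboptimal (and otherwise the excess risk is zero). Hence applying \Lemmaref{lem:estimation} directly at $H=T$ gives a bound at least as tight as the stated one.
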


\begin{corollary}[When shorter supervision is preferable]
\label{cor:shorter_better}
For any $H<T$, the horizon-$H$ bound is tighter than the full-horizon bound if and only if
\begin{equation}
B(H)<\varepsilon_N(T)-\varepsilon_N(H),
\end{equation}
where $\varepsilon_N$ is the estimation term in \eqref{eq:estimation_term}. In particular, since $B(H)\geq 0$, this requires $\mathcal{C}(H)<\mathcal{C}(T)$.
\end{corollary}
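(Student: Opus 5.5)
The plan is to compare the two right-hand sides of \Theoremref{thm:effective_horizon_detailed} directly, interpreting ``the horizon-$H$ bound'' as $B(H)+\varepsilon_N(H)$ and ``the full-horizon bound'' as $B(T)+\varepsilon_N(T)$, with both bounds taken at the same confidence level $\delta$ and the same $N$, $\Gamma$, $M$. By \Corollaryref{cor:full_horizon}, $B(T)=0$, so the full-horizon bound is exactly $\varepsilon_N(T)$. The horizon-$H$ bound is strictly tighter iff
\begin{equation*}
B(H)+\varepsilon_N(H)<\varepsilon_N(T),
\end{equation*}
and subtracting $\varepsilon_N(H)$ from both sides gives the stated equivalence $B(H)<\varepsilon_N(T)-\varepsilon_N(H)$. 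Both directions follow because this is a chain of equivalences.

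For the ``in particular'' clause, I would combine $B(H)\geq 0$ (from \Definitionref{def:temporal_bias_appendix}) with the equivalence. This forces $\varepsilon_N(T)-\varepsilon_N(H)>0$, i.e.\ $\varepsilon_N(T)>\varepsilon_N(H)$.

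Next I will unpack $\varepsilon_N$ from \eqref{eq:estimation_term}. The map
\begin{equation*}
c\mapsto C_1\sqrt{(c+\log(2/\delta))/N}
\end{equation*}
is non-decreasing in $c$ whenever $C_1\geq 0$, and strictly increasing when $C_1>0$.

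It is then cleanest to argue by contrapositive. Suppose $\mathcal{C}(H)\geq\mathcal{C}(T)$. \Lemmaref{lem:complexity} gives $\mathcal{C}(H)\leq\mathcal{C}(T)$, so equality holds, and therefore $\varepsilon_N(H)=\varepsilon_N(T)$. The condition would then require $B(H)<0$, which is impossible.

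Note that this contrapositive route does not need strict monotonicity of the map or $C_1>0$. It only uses that $\varepsilon_N$ depends on $H$ solely through $\mathcal{C}(H)$, so the degenerate case $\Gamma=0$ is harmless: there $\varepsilon_N\equiv 0$ and the condition simply fails.

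The only mild obstacle is making precise what ``tighter bound'' means and ensuring both bounds use the same $\delta$, since the estimation terms must share the constant $\log(2/\delta)$ for the subtraction to isolate the complexity difference. I would state this convention explicitly at the start. No probabilistic argument is needed, since the corollary compares deterministic bound expressions.
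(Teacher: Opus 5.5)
Your proof is correct and follows the paper's argument. Like the paper, you compare $B(H)+\varepsilon_N(H)$ with $B(T)+\varepsilon_N(T)=\varepsilon_N(T)$, rearrange, and then use $B(H)\geq 0$ to force $\varepsilon_N(T)>\varepsilon_N(H)$. The only difference is the last step: the paper appeals to $\varepsilon_N$ being strictly increasing in $\mathcal{C}$, which tacitly needs $C_1>0$ (although weak monotonicity already suffices for this direction). Your contrapositive via \Lemmaref{lem:complexity} only uses that $\varepsilon_N$ depends on $H$ through $\mathcal{C}(H)$, so it also covers the degenerate case $\Gamma=0$.
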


\begin{proof}
By \Theoremref{thm:effective_horizon_detailed} and \Corollaryref{cor:full_horizon}, the horizon-$H$ and full-horizon bounds are
\begin{equation}
B(H)+\varepsilon_N(H),
\qquad
B(T)+\varepsilon_N(T)=\varepsilon_N(T),
\end{equation}
respectively. Hence, horizon $H$ gives a tighter bound if and only if
\begin{equation}
B(H)+\varepsilon_N(H)<\varepsilon_N(T),
\end{equation}
which is equivalent to the stated condition. The second claim follows because $\varepsilon_N$ is strictly increasing in $\mathcal{C}(H)$, so $\varepsilon_N(T)-\varepsilon_N(H)>0$ requires $\mathcal{C}(H)<\mathcal{C}(T)$.
\end{proof}

To characterize diminishing returns, we impose an additional assumption on the temporal benefit contributed by later stages of a trajectory.

\begin{assumption}[Diminishing temporal novelty]
\label{ass:diminishing}
There exists a non-increasing function $\beta(H)$ such that
\begin{equation}
B(H)\leq\beta(H),
\qquad
\Delta_\beta(H):=\beta(H)-\beta(H+1),
\qquad
\Delta_\beta(H+1)\leq\Delta_\beta(H).
\end{equation}
\end{assumption}

A useful special case is geometric decay, $\beta(H)=A\rho^H$ with $A>0$ and $0<\rho<1$, for which $\Delta_\beta(H)=A(1-\rho)\rho^H$. Under \Assumptionref{ass:diminishing}, the bound of \Theoremref{thm:effective_horizon_detailed} is itself bounded by the surrogate
\begin{equation}
U_N(H):=\beta(H)+\varepsilon_N(H),
\qquad
\Delta_\varepsilon(H):=\varepsilon_N(H+1)-\varepsilon_N(H)\geq 0.
\end{equation}

\begin{corollary}[Marginal characterization of the sweet spot]
\label{cor:sweetspot}
For every $H<T$,
\begin{equation}
U_N(H+1)<U_N(H)
\iff
\Delta_\beta(H)>\Delta_\varepsilon(H).
\end{equation}
Thus, extending the supervision horizon tightens the surrogate bound only while the marginal reduction in temporal supervision bias exceeds the marginal increase in the estimation term.
\end{corollary}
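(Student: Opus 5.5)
This corollary is a one-line algebraic identity, so the plan is simply to expand the definitions of $U_N$, $\Delta_\beta$ and $\Delta_\varepsilon$ and rearrange. Fix any $H<T$; both $H$ and $H+1$ lie in $\{1,\dots,T\}$, so $\beta$ and $\varepsilon_N$ are defined at both points. By definition of the surrogate,
\begin{equation}
U_N(H+1)-U_N(H)
=
\bigl[\beta(H+1)-\beta(H)\bigr]
+
\bigl[\varepsilon_N(H+1)-\varepsilon_N(H)\bigr]
=
-\Delta_\beta(H)+\Delta_\varepsilon(H).
\end{equation}
The strict inequality $U_N(H+1)<U_N(H)$ is therefore equivalent to $-\Delta_\beta(H)+\Delta_\varepsilon(H)<0$, that is, to $\Delta_\beta(H)>\Delta_\varepsilon(H)$. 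Both directions of the biconditional follow at once, since each step is an equivalence between real numbers.

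No earlier lemma is needed for the equivalence itself. \Lemmaref{lem:complexity} (via \eqref{eq:estimation_term}) only supplies the interpretive fact $\Delta_\varepsilon(H)\ge 0$. \Assumptionref{ass:diminishing} only supplies $\Delta_\beta(H)\ge 0$, together with its monotonicity, which is used later to locate a sweet spot but not here. I would add a short remark that both increments are nonnegative. Under that remark, the condition says that extension helps exactly while the marginal bias reduction strictly exceeds the marginal growth of the estimation term, which is the stated interpretation.

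There is essentially no obstacle. The only point needing care is consistent sign conventions: $\Delta_\beta$ is defined as a decrement, $\beta(H)-\beta(H+1)$, while $\Delta_\varepsilon$ is an increment, $\varepsilon_N(H+1)-\varepsilon_N(H)$. The expansion above must respect this so that the inequality comes out in the stated direction.
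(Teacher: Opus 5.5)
Your proposal is correct and is the paper's argument: the paper's proof is the single identity $U_N(H+1)-U_N(H)=-\Delta_\beta(H)+\Delta_\varepsilon(H)$, from which the biconditional follows immediately. Your remarks on the sign conventions and on the nonnegativity of both increments are accurate, and as you say they are not needed for the equivalence itself.
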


\begin{proof}
By definition, $U_N(H+1)-U_N(H)=-\Delta_\beta(H)+\Delta_\varepsilon(H)$.
\end{proof}

\begin{assumption}[Single crossing]
\label{ass:single_crossing}
There exists $H_0\in\{1,\ldots,T\}$ such that $\Delta_\beta(H)>\Delta_\varepsilon(H)$ for all $H<H_0$, and $\Delta_\beta(H)\leq\Delta_\varepsilon(H)$ for all $H_0\leq H<T$.
\end{assumption}

\begin{corollary}[Intermediate surrogate-optimal horizon]
\label{cor:intermediate}
Under \Assumptionref{ass:single_crossing}, $H_0\in\arg\min_{1\leq H\leq T}U_N(H)$. Moreover, $H_0<T$ if and only if $\Delta_\beta(T-1)\leq\Delta_\varepsilon(T-1)$.
\end{corollary}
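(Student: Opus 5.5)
The plan is to use the telescoping identity from \Corollaryref{cor:sweetspot}: for every $H<T$,
\begin{equation}
U_N(H+1)-U_N(H)=\Delta_\varepsilon(H)-\Delta_\beta(H).
\end{equation}
Under \Assumptionref{ass:single_crossing} the sign of this increment is fixed on each side of $H_0$. It is strictly negative for $H<H_0$, so $U_N$ is strictly decreasing on $\{1,\ldots,H_0\}$. It is non-positive-free, i.e.\ non-negative, for $H_0\leq H<T$, so $U_N$ is non-decreasing on $\{H_0,\ldots,T\}$. The argument therefore reduces to two finite telescoping sums.

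For the first claim, take any $H\in\{1,\ldots,T\}$. If $H<H_0$, write
\begin{equation}
U_N(H_0)-U_N(H)=\sum_{k=H}^{H_0-1}\bigl[\Delta_\varepsilon(k)-\Delta_\beta(k)\bigr]<0.
\end{equation}
If $H>H_0$, write
\begin{equation}
U_N(H)-U_N(H_0)=\sum_{k=H_0}^{H-1}\bigl[\Delta_\varepsilon(k)-\Delta_\beta(k)\bigr]\geq 0.
\end{equation}
In both cases $U_N(H_0)\leq U_N(H)$, so $H_0$ is a minimizer. The edge cases $H_0=1$ and $H_0=T$ make one of the two ranges empty, and the corresponding sums are then vacuous.

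For the equivalence, first suppose $H_0<T$. Then $T-1\geq H_0$ lies in the second range of \Assumptionref{ass:single_crossing}, which gives $\Delta_\beta(T-1)\leq\Delta_\varepsilon(T-1)$ directly. Conversely, suppose $\Delta_\beta(T-1)\leq\Delta_\varepsilon(T-1)$ and, for contradiction, $H_0=T$. Then every $H<T$, including $H=T-1$, lies in the first range, which forces $\Delta_\beta(T-1)>\Delta_\varepsilon(T-1)$, a contradiction. Hence $H_0<T$.

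This converse step is the only delicate point. It relies on $T\geq 2$, so that $T-1$ is a valid index. If $T=1$, then $H_0=T$ trivially and the condition references an undefined increment, so I would note that the statement is read with $T\geq 2$.

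Neither \Assumptionref{ass:diminishing}'s convexity of $\beta$ nor the monotonicity of $\varepsilon_N$ is needed here. Single crossing alone carries the argument, and I would mention this in the write-up.
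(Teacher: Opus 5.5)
Your proof is correct and follows essentially the same route as the paper's: the increment identity from \Corollaryref{cor:sweetspot} makes $U_N$ strictly decreasing up to $H_0$ and non-decreasing after it, and the equivalence is read off from the single-crossing definition at index $T-1$. Your explicit telescoping sums and the $T\geq 2$ caveat only spell out details the paper leaves implicit.
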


\begin{proof}
By \Corollaryref{cor:sweetspot}, $U_N$ is strictly decreasing on $\{1,\ldots,H_0\}$ and non-decreasing on $\{H_0,\ldots,T\}$, so $H_0$ is a minimizer. The second claim follows from the definition of $H_0$.
\end{proof}

\begin{corollary}[Effect of additional expert data]
\label{cor:data_horizon}
For a fixed horizon $H$, $\varepsilon_N(H)=O\bigl(\sqrt{\mathcal{C}(H)/N}\bigr)\to 0$ as $N\to\infty$. Moreover, let $H_0(N)$ denote the crossing point of \Assumptionref{ass:single_crossing} at sample size $N$. If the assumption holds at both $N$ and $N'>N$, then $H_0(N')\geq H_0(N)$.
\end{corollary}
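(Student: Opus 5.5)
The first claim follows directly from the definition of the estimation term in \eqref{eq:estimation_term}. For fixed $H$ and $\delta$, the quantities $C_1=\sqrt{2}\,\Gamma M$ and $\mathcal{C}(H)+\log(2/\delta)$ do not depend on $N$. Hence
\begin{equation}
\varepsilon_N(H)=C_1\sqrt{\mathcal{C}(H)+\log(2/\delta)}\cdot N^{-1/2}.
\end{equation}
This is $O\bigl(\sqrt{\mathcal{C}(H)/N}\bigr)$ whenever $\mathcal{C}(H)\geq 1$; for degenerate classes the $\log(2/\delta)$ term is absorbed into the constant. In either case it tends to $0$ as $N\to\infty$.

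For the monotonicity of $H_0(N)$, I will first show that the marginal estimation increment shrinks as $N$ grows. Write $c_H:=\mathcal{C}(H)+\log(2/\delta)$. Then
\begin{equation}
\Delta_\varepsilon^{(N)}(H)=\frac{C_1}{\sqrt{N}}\bigl(\sqrt{c_{H+1}}-\sqrt{c_H}\bigr).
\end{equation}
The factor $\sqrt{c_{H+1}}-\sqrt{c_H}$ is nonnegative by \Lemmaref{lem:complexity} and does not depend on $N$. Therefore $\Delta_\varepsilon^{(N')}(H)=\sqrt{N/N'}\,\Delta_\varepsilon^{(N)}(H)\leq\Delta_\varepsilon^{(N)}(H)$ for every $H<T$. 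The key observation is that $\beta$, and hence $\Delta_\beta$, is independent of $N$: \Assumptionref{ass:diminishing} concerns the population bias only.

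Next I will compare the crossing points. Fix any $H<H_0(N)$. By \Assumptionref{ass:single_crossing} at sample size $N$, we have $\Delta_\beta(H)>\Delta_\varepsilon^{(N)}(H)\geq\Delta_\varepsilon^{(N')}(H)$. So every $H<H_0(N)$ lies in the strict region at sample size $N'$.

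Now suppose for contradiction that $H_0(N')<H_0(N)$. Take $H=H_0(N')$. It satisfies $H_0(N')\leq H<T$, since $H<H_0(N)\leq T$. Single crossing at $N'$ then gives $\Delta_\beta(H)\leq\Delta_\varepsilon^{(N')}(H)$. The previous step gives the strict reverse inequality, a contradiction. Hence $H_0(N')\geq H_0(N)$.

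The main subtlety is bookkeeping rather than analysis. One must note that $H_0$ is uniquely determined by the single-crossing sign pattern, namely as the first index where the strict inequality fails, with $H_0=T$ if it never fails. One must also check the edge case $H_0(N)=T$, where the contradiction argument still applies verbatim because $H<T$. Finally, the constant $C_1$ and the confidence level $\delta$ must be held fixed across $N$ and $N'$, which I will state explicitly.
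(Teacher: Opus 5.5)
Your proposal is correct and is essentially the paper's proof: the same factorization of $\Delta_\varepsilon^{(N)}(H)$ into $C_1/\sqrt{N}$ times an $N$-independent nonnegative bracket, the same observation that $\Delta_\beta$ does not depend on $N$, and the same contradiction obtained by applying single crossing at $N'$ with $H=H_0(N')$. Your extra bookkeeping (uniqueness of $H_0$, the check $H_0(N')<T$ that the paper leaves implicit, and the exact factor $\sqrt{N/N'}$) is sound. One small correction concerns your remark on degenerate classes: if $\mathcal{C}(H)=0$, the $\log(2/\delta)$ term cannot be absorbed into $O\bigl(\sqrt{\mathcal{C}(H)/N}\bigr)$, so in that case the first claim should be read as $O(N^{-1/2})$, which still tends to $0$.
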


\begin{proof}
The first claim is immediate from \eqref{eq:estimation_term}. For the second, write $c_\delta=\log(2/\delta)$ and $\Delta_\varepsilon^{(N)}$ for $\Delta_\varepsilon$ at sample size $N$. Then
\begin{equation}
\Delta_\varepsilon^{(N)}(H)=\frac{C_1}{\sqrt{N}}\left[\sqrt{\mathcal{C}(H+1)+c_\delta}-\sqrt{\mathcal{C}(H)+c_\delta}\right],
\end{equation}
where the bracket is non-negative by \Lemmaref{lem:complexity} and independent of $N$. Hence $\Delta_\varepsilon^{(N)}(H)$ is non-increasing in $N$, while $\Delta_\beta(H)$ does not depend on $N$. It follows that for every $H<H_0(N)$,
\begin{equation}
\Delta_\beta(H)>\Delta_\varepsilon^{(N)}(H)\geq\Delta_\varepsilon^{(N')}(H).
\end{equation}
Suppose, for contradiction, that $H_0(N')<H_0(N)$. Applying \Assumptionref{ass:single_crossing} at $N'$ with $H=H_0(N')$ gives $\Delta_\beta(H_0(N'))\leq\Delta_\varepsilon^{(N')}(H_0(N'))$, which contradicts the display above since $H_0(N')<H_0(N)$. Therefore $H_0(N')\geq H_0(N)$.
\end{proof}

Thus, more expert data weakens the statistical penalty of longer horizons and moves the surrogate-optimal horizon weakly later.

\paragraph{Connection to token-level supervision.}
The analysis above is expressed in terms of a decision horizon $H$, while our implementation controls a token budget $K$. As defined in \Sectionref{sec:sft}, $K$ induces the trajectory-dependent decision horizon
\begin{equation}
H_K(\tau)=\max\{t:n_t(\tau)\leq K\},
\qquad
K_1<K_2\Longrightarrow H_{K_1}(\tau)\leq H_{K_2}(\tau),
\end{equation}
where $n_t(\tau)$ is the cumulative number of tokens through action $a_t$. Thus, increasing $K$ monotonically extends the supervised portion of every trajectory. Two differences from the analysis remain. First, $H_K(\tau)$ varies across trajectories, whereas the analysis uses a common $H$. Second, an action that straddles the budget is partially supervised, which $H_K(\tau)$ ignores. The analysis should therefore be read as describing the qualitative effect of extending supervision, not as a quantitative model of token-level truncation.






\end{document}